\documentclass{article}

\usepackage{microtype}
\usepackage{graphicx}
\usepackage{subcaption}
\usepackage{booktabs}
\usepackage{hyperref}

\usepackage[preprint]{icml2026}

\usepackage{amsmath,amssymb,mathtools,amsthm}
\usepackage[capitalize,noabbrev]{cleveref}
\usepackage{algorithm}
\usepackage{algorithmic}
\usepackage{xcolor}
\usepackage{textcomp}

\usepackage[disable,textsize=tiny]{todonotes}

\theoremstyle{plain}
\newtheorem{theorem}{Theorem}[section]
\newtheorem{proposition}[theorem]{Proposition}
\newtheorem{lemma}[theorem]{Lemma}

\theoremstyle{definition}
\newtheorem{definition}[theorem]{Definition}

\theoremstyle{remark}
\newtheorem{remark}[theorem]{Remark}

\newcommand{\R}{\mathbb{R}}
\newcommand{\E}{\mathbb{E}}
\newcommand{\Prob}{\mathbb{P}}
\newcommand{\Pstar}{\mathbb{P}^\star}
\newcommand{\Var}{\mathrm{Var}}

\usepackage{enumitem}
\usepackage{etoc}

\newcommand{\cF}{\mathcal{F}}

\newcommand{\cL}{\mathcal{L}}

\newcommand{\cN}{\mathcal{N}}

\newcommand{\bu}{\mathbf{u}}
\newcommand{\bx}{\mathbf{x}}
\newcommand{\by}{\mathbf{y}}

\newcommand{\bv}{\mathbf{v}}
\newcommand{\bz}{\mathbf{z}}
\newcommand{\bX}{\mathbf{X}}
\newcommand{\bY}{\mathbf{Y}}
\newcommand{\bZ}{\mathbf{Z}}
\newcommand{\bB}{\mathbf{B}}

\newcommand{\be}{\mathbf{e}}
\newcommand{\bs}{\mathbf{s}}

\newcommand{\bI}{\mathbf{I}}

\newcommand{\law}{\mathrm{Law}}
\newcommand{\bmu}{\boldsymbol{\mu}}

\newcommand{\pdata}{p_\mathrm{data}}
\newcommand{\phat}{\hat{p}}

\newcommand{\md}{\mathrm{d}}

\DeclareMathOperator{\Law}{\mathrm{Law}}

\icmltitlerunning{Error Propagation and Model Collapse in Diffusion Models}

\begin{document}

\twocolumn[
\icmltitle{Quantifying Error Propagation and Model Collapse in Diffusion Models}
\icmlsetsymbol{equal}{*}

\begin{icmlauthorlist}
  \icmlauthor{Na\"il B. Khelifa}{yyy}
  \icmlauthor{Richard E. Turner}{yyy}
  \icmlauthor{Ramji Venkataramanan}{yyy}
\end{icmlauthorlist}

\icmlaffiliation{yyy}{University of Cambridge, Cambridge, United Kingdom}

\icmlcorrespondingauthor{Na\"il B. Khelifa}{nbk24@cam.ac.uk}
\icmlcorrespondingauthor{Richard E. Turner}{ret26@cam.ac.uk}
\icmlcorrespondingauthor{Ramji Venkataramanan}{rv285@cam.ac.uk}

\icmlkeywords{score-based generative modeling, self-training, model collapse, stability, divergence bounds}

\vskip 0.3in
]

\printAffiliationsAndNotice{}

\begin{abstract}
Machine learning models are increasingly trained or fine-tuned on synthetic data. Recursively training on such data has been observed to significantly degrade performance in a wide range of tasks, often characterized by a progressive drift away from the target distribution. In this work, we theoretically analyze this phenomenon in the setting of score-based diffusion models. For a realistic pipeline where each training round uses a combination of synthetic data and fresh samples from the target distribution, we obtain upper and lower bounds on the accumulated divergence between the generated and target distributions.
Notably, to the best of our knowledge, this is the first lower bound on the divergence between the learned and target distributions, even for standard diffusion models.
Our results allow us to characterize different regimes of drift, depending on the score estimation error and the proportion of fresh data used in each generation. In a certain regime, the accumulated divergence after several retraining rounds can be expressed as a discounted sum of score estimation errors made at each generation.  We also provide empirical results on synthetic data and images to illustrate the theory.  
\end{abstract}

\section{Introduction}
\label{sec:intro}

As generative AI becomes ubiquitous,  synthetic data  is  increasingly being used for training \cite{zelikman2022star, gulcehre2023reinforcedselftrainingrestlanguage}.  
However, it has been observed that the performance of a model can degrade dramatically when trained predominantly on its own output \cite{briesch2024largelanguagemodelssuffer, alemohammad2023selfconsuminggenerativemodelsmad}, a phenomenon often termed \emph{model collapse} \cite{shumailov2024collapse, gerstgrasser2024collapse}.
For example,  with recursive self-training a generative model may lose the tails of the target distribution or  exhibit a marked decrease in diversity  \cite{dohmatob2024tale, diffusion-model-collapse-1}. In general, model collapse manifests as a  progressive drift away from the target distribution with each training round. 
This work characterizes the regimes of collapse and quantifies error accumulation in the setting of score-based diffusion models \citep{seminal-diffusion-models, ho2020denoising,score-based-SDE}, which achieve state-of-the-art performance for text-to-image, text-to-audio, and video generation \cite{image-gen-diff-model-2,liu2023audioldm, ho2022video} as well as in a range  of  scientific applications \cite{watson2023novo, corso2023diffdock}.

There is a growing body of theoretical work studying the effects of recursively training machine learning models with a combination of real and synthetic data. However, this has largely been in the context of either regression models \cite{dohmatob2024model, dohmatob2024strong, dey2025universality, model-collapse-reeves, preventing-model-collapse} or maximum-likelihood type estimators  for learning parametric distributions \cite{bertrand2024stability, suresh2025rate, Barzilai25when-models-dont,KanabarGastpar25}.  Various authors have studied model collapse in diffusion and flow models empirically and proposed techniques to mitigate it   \cite{alemohammad2024sims, diffusion-model-collapse-1, diffusion-model-collapse-2, analyzing-mitigating-model-collapse}. On the theoretical side,  \cite{mc-diffusion-sample-level, mc-diffusion-2} propose  architecture-specific upper bounds on the error accumulation; both  papers consider two-layer score networks. Our approach is complementary: working at a population level and in an architecture-agnostic setting, we characterize error accumulation in recursive training by establishing a lower bound on one-generation distribution discrepancy and propagate this bound across generations.

\paragraph{Setting}
\label{paragraph:recursive-setting-def}
We study a recursive training procedure in which the training data in each round  is a combination of  fresh samples from the true data distribution and synthetic samples generated by the current diffusion model. Let $\pdata$ denote the (unknown) true data distribution on $\mathbb R^d$. 
We refer to each training round as a generation and denote the model distribution at the end of generation $i$ by $\phat^i$ for $ i\geq 0$, with $\phat^0 = \pdata$. At each generation $i \ge 1$, the training data consists of:
\begin{itemize}[leftmargin=*, topsep=1pt, itemsep=1pt, parsep=0pt]
    \item  a proportion $\alpha$ of fresh samples drawn from $\pdata$, and
    \item a proportion $(1-\alpha)$ of synthetic samples drawn from the current model $\hat p^i$.
\end{itemize}
We emphasize that the samples are not labeled as fresh or synthetic. This captures a realistic setting where the training is agnostic to how the samples were generated.
Therefore, at the population level, the effective training distribution for generation $i \ge 1$ is the mixture
\begin{equation}
q_i := \alpha\, \pdata + (1-\alpha)\,\hat p^i .
\label{eq:mixture}
\end{equation}
This framework matches that of existing works and may be generalized to a setting where each past generation $k \le i$ is sampled from in proportion $\alpha_k$ \cite{mc-diffusion-sample-level}. A score network is trained on samples from $q_i$, which is then used to produce the next synthetic distribution $\phat^{i+1}$. This defines the recursion
\begin{equation}
\hat p^i 
\;\xrightarrow{\text{mix with } \pdata}\; 
q_i 
\;\xrightarrow{\text{train model}}\; 
\hat p^{i+1}.
\label{eq:recursive-structure}
\end{equation}

The goal of this work is to answer the following question:
\begin{quote}
\textit{In this recursive setting, how do errors accumulate over generations and how does the divergence between $\phat^i$ and $\pdata$ evolve?}
\end{quote}

\begin{figure}[t]
    \centering
    \includegraphics[width=0.8\linewidth]{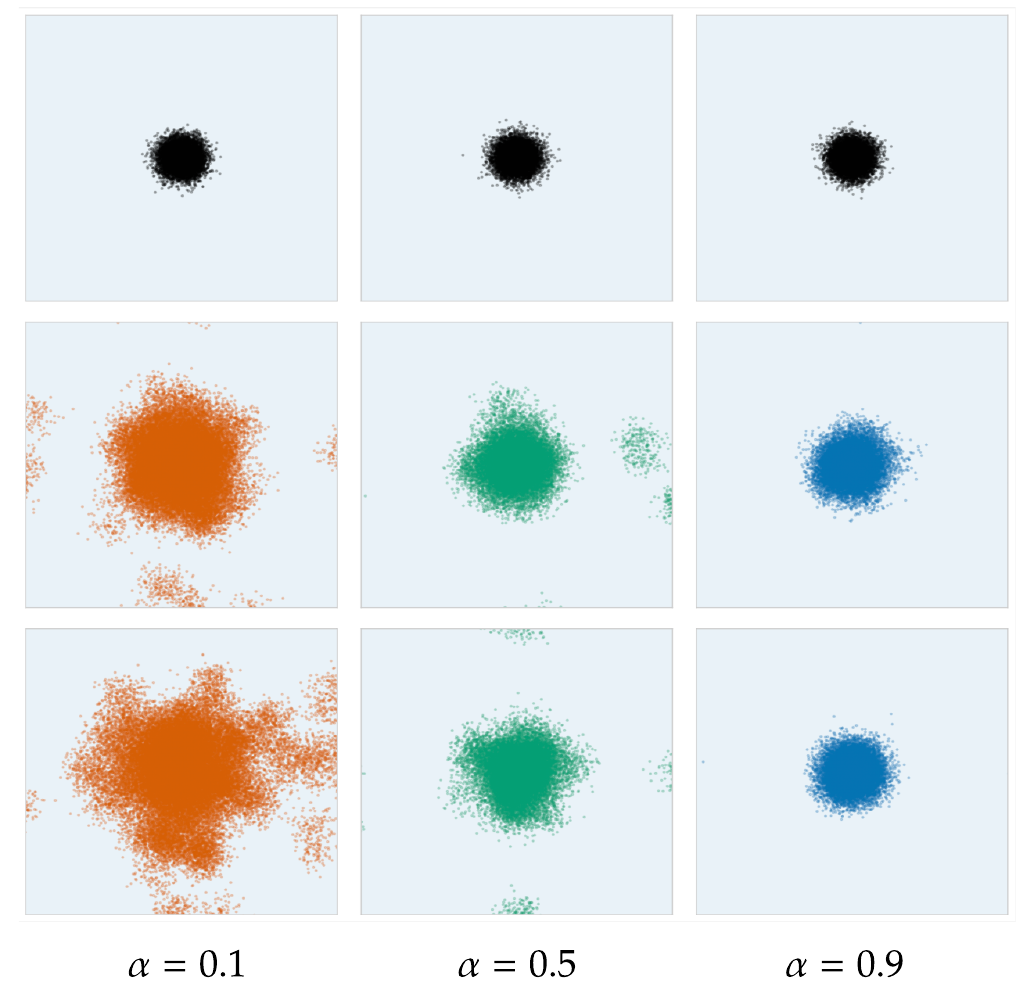}
    \caption{Samples generated by a recursively trained model,  projected onto the first two principal components, with $\pdata$ chosen to be a 10-dimensional Gaussian mixture. Columns correspond to fresh data fractions $\alpha \in \{0.1, 0.5, 0.9\}$ (left to right); rows show generations 0 (start), 10 (middle), and 20 (end). At $\alpha = 0.1$ (left column), the distribution progressively disperses as errors accumulate without sufficient fresh data. 
    At $\alpha = 0.5$ (center), moderate degradation occurs with visible spreading but preserved structure. At $\alpha = 0.9$ (right column), the distribution remains stable throughout, demonstrating that high fresh data fractions effectively prevent collapse. Experiment details in Appendix \ref{subsec:experimental-setup-10gmm}.} 
\label{fig:intro_figure}
\end{figure}

Figure \ref{fig:intro_figure} illustrates the effect of different mixing proportions on a recursively trained diffusion model, for $\pdata$ a 10-dimensional mixture of Gaussians. Figure \ref{fig:fashion-collapse-visual} in Appendix \ref{subsec:results-experiments-10dgmm} shows similar behavior for the Fashion-MNIST dataset \cite{fashionMNIST_paper}, i.e., $\pdata$ is the distribution that uniformly samples images from the original dataset.

\paragraph{Learning Target} 
At a high level, training a diffusion model is equivalent to learning the score functions of Gaussian-smoothed versions of $\pdata$ at a range of noise levels \cite{score-based-SDE, sampling-is-as-easy-as-learning-the-score}.
Although the intention is to learn $\pdata$, the score network at generation $i$ is trained on samples drawn from the mixture $q_i$. At the population level, the identifiable target of training is therefore the score  of $q_i$, not the score of $\pdata$. This mismatch between the intended target and the learned target is a fundamental source of error under recursive training. Furthermore, the score network does not learn the score of $q_i$ perfectly (due to finite-sample and function approximation errors). This score-matching error propagates through the stochastic differential equation \eqref{eq:reverse-learned} defining the reverse process, causing the learned distribution $\phat^{i+1}$ to differ from $q_i$. Characterizing this error propagation is crucial for understanding the divergence of $\phat^{i+1}$ from $\pdata$ as $i$ grows.

We track the evolution of the recursion using two quantities based on the $\chi^2$-divergence (defined in \eqref{eq:fDiv}):
\begin{itemize}[leftmargin=*, topsep=2pt, itemsep=1pt, parsep=0pt]
    \item \textit{Accumulated divergence}, measuring the divergence between the $i$-th generation model and the true data distribution:
    \begin{equation}
        D_i := \chi^2(\phat^i\,\|\,\pdata).
        \label{eq:Di_def}
    \end{equation}
    \item The \emph{intra-generation} divergence,  measuring the error introduced in one training round:
    \begin{equation}
        I_i := \chi^2(\phat^{i+1}\,\|\,q_i). 
        \label{eq:Ii_def}
    \end{equation}
\end{itemize}

We work primarily with the $\chi^2$-divergence because, in the perturbative regime relevant for our analysis, it is equivalent to the KL divergence up to universal constants (see empirical evidence in Figure \ref{fig:sandwich_bounds_verification}, where our bounds hold for both divergences), while yielding a particularly convenient algebra under the refresh step $q_i=\alpha \pdata+(1-\alpha)\hat p^i$ that makes the cross-generation recursion explicit.

\paragraph{Main contributions}
The recursion induced by training on real-synthetic mixtures involves two competing effects: error \emph{mitigation} from the fresh data used in each step, and error \emph{accumulation} from imperfect score learning and sampling. Our contributions quantify both terms and their  interaction across a growing number of generations.

1. \emph{Lower bound for intra-generation divergence}:
We provide a lower bound (Proposition \ref{prop:lower-main}) for $I_i$ that, for small score errors, is the product of two diffusion-specific terms: (i) \emph{observability} of errors at the endpoint of the diffusion path (captured by a coefficient $\eta_i$), and (ii) the energy of the path-wise score error.
 \[
\underbrace{\chi^2(\hat p^{\,i+1}\|q_i)}_{\substack{\text{Intra-Generational} \\ \text{Divergence}}}
\;\gtrsim\; \underbrace{\eta_i}_{\substack{\text{Observability} \\ \text{of Errors}}} \, \cdot \,\underbrace{\varepsilon_{\star,i}^2}_{\substack{\text{Score Error} \\ \text{Energy}}}
\]
To our knowledge, this is the first lower bound for diffusion models (even without recursive training) quantifying the discrepancy between the learned distribution and the target. 

2. \emph{Intra-generation Divergence Control via Score Error:}
Combining the above lower bound with a standard upper bound (Proposition \ref{prop:upper-main}), 
and proving an equivalence between the score error energies on the ideal-path and the learned-path  (for small errors), 
we show in Theorem  \ref{thm:two-sided-main} that in this regime $I_i$ is equivalent to the score error energy (up to constants): $\chi^2(\hat p^{\,i+1}\|q_i)\;\asymp\;\varepsilon_{\star,i}^2$.

\emph{3. Long-horizon regimes and a discounted accumulation law.}
We analyze the accumulated divergence $D_N$ for small per-generation score errors, and identify a dichotomy as the generation $N$ grows (Proposition \ref{prop:persistent-errors-main} and Theorem \ref{thm:divergence-decomposition-finite-horizon-main}): (i) if the score errors persist across generations (energies bounded below by a positive constant), then the accumulated divergence $D_N$ cannot vanish; (ii) if the score error energies are summable, i.e. $\sum_k \varepsilon_{\star,k}^2<\infty$, then $D_N$ remains stable (uniformly bounded) with growing $N$. 
Moreover, the accumulated divergence after a finite number of generations $N$ admits an interpretable geometrically-discounted decomposition up to a bias term:
\[
\chi^2(\hat p^{N}\| \, p_\text{data})  + C_{\mathrm{bias}}
\asymp
\sum_{k=0}^{N-1} \underbrace{(1-\alpha)^{2(N-1-k)}}_{\substack{\text{Forgetting} \\ \text{(real data)}}}\cdot \hspace{-5pt}\underbrace{\varepsilon_{\star,k}^2}_{\substack{\text{ Accumulation} \\ \text{(synthetic data)}}},
\]
which shows that errors from $m$ generations in the past are suppressed by a factor $(1-\alpha)^{2m}$, quantifying the effect of $\alpha$, the proportion of fresh data used in each training round. Qualitatively, this is consistent with empirical and theoretical work in a variety of settings showing  that incorporating fresh data in each training round can mitigate model collapse, e.g. \cite{gerstgrasser2024collapse, analyzing-mitigating-model-collapse, bertrand2024stability, dey2025universality}.

We also provide experiments on both synthetic and real-image datasets to illustrate the validity of our bounds.  Our results are relevant in the regime where the energy of the path-wise score error  (defined in \eqref{eq:pathwise-score-error-energy_star}) is small for all but a few generations. This regime is of interest since we seek to quantify how the intra-generation and accumulated divergences evolve when the score is estimated with high accuracy in each generation. However, we note that the number of samples for the score error to be accurately estimated may grow exponentially in the dimension $d$ (see \eqref{eq:minimax_score_error}). 

\paragraph{Notation}
Bold symbols (e.g. $\mathbf X_t$, $\mathbf Y_t$) denote $\mathbb R^d$-valued random variables or processes, and calligraphic symbols denote measurable sets.  The space of continuous functions from $[0, T]$ to $\R^d$ is denoted by $C([0,T],\R^d)$. For a stochastic process $(\bZ_t)_{t\in[0, T]}$ defined on $C([0,T],\R^d)$, we write $\Law((\bZ_t)_t)$ for its law on path space and $\Law(\bZ_t)$ for the law of its marginal at time $t$. For two smooth probability densities $p,q$ on $\R^d$ and a convex function $f$ with $f(1)=0$, the associated $f$-divergence is defined as
\begin{align}
   D_f(p\|q) = \int_{\R^d} f\!\left(\frac{p(\bx)}{q(\bx)}\right) q(\bx)\,\md \bx.
   \label{eq:fDiv}
\end{align}
In particular, $f(u)=u\log u$ yields the Kullback--Leibler divergence and $f(u)=(u-1)^2$ yields the $\chi^2$-divergence. For two nonnegative functions (or sequences) \(f\) and \(g\), we write $f(\bx) \asymp g(\bx)$ if there exist constants \(0 < c_1 \le c_2 < \infty\) such that $c_1\, g(\bx) \le f(\bx) \le c_2\, g(\bx)$. 

\section{Background and Model Assumptions} \label{sec:bg_key_ideas}

\paragraph{Forward diffusion}
To define the score-matching objective and the subsequent sampling procedure, we associate to each $q_i$ a forward diffusion process on a truncated time interval $[t_0,T]$, with $0<t_0<T$. We consider the following variance-preserving type Ornstein-Uhlenbeck process \cite{sto-calculus-2,score-based-SDE}:
\begin{equation}
\mathrm d \bX_t^i
=
-\tfrac12 \bX_t^i\,\mathrm dt + \mathrm d \bB_t, 
\label{eq:forward}
\end{equation}
where $(\bB_t)_{t \in [0,T]}$ is a standard $d$-dimensional Brownian motion. The process is initialized with $\bX_{0}^i$ drawn from a suitable distribution $q_{i,0}$.
Denote by $q_{i,t} := \Law(\bX_t^i)$ the time-$t$ marginal of the forward process, with population score $\bs_i^\star(\bx,t) := \nabla_\bx \log q_{i,t}(\bx)$. 

\paragraph{Reverse-time generation and learning}
Under standard regularity conditions \citep{anderson1982reverse,haussmann1986time}, the time reversal of \eqref{eq:forward} solves the reverse-time SDE:
\begin{align}
\mathrm{d}\bY_s^{i, \star}
=
\big[-\tfrac{1}{2}\bY_s^{i, \star} - \mathbf{s}_i^\star(\bY_s^{i, \star}, s)\big]\, \mathrm{d}s
+ \mathrm{d}\bar{\mathbf{B}}_s,
\label{eq:reverse-ideal}
\end{align}
integrated backward from $s=T$ to $s=0$. Here $(\bar{\mathbf{B}}_s)_{s \in [0, T]}$ is another Brownian motion on the same probability space.  If we run the reverse SDE starting from $\bY_T^{i, \star} \sim q_{i,T}$, then $\bY_{0}^{i, \star} \sim q_{i,0}$.  
In practice, the score $\bs_i^\star$ is unknown and is approximated by a learned network $\bs_{\theta_i}$ trained on samples from  $q_i$ in \eqref{eq:mixture}. Moreover, since the score function is often ill-behaved and challenging to estimate as $t \to 0$ \cite{song2020improved_ncsn, kim2022soft}, as done in practice we truncate the learned process at $s=t_0 >0$.   We choose $q_{i,0}$ such that $q_{i, t_0} =q_i$. Plugging the learned score $\bs_{\theta_i}$ into the reverse dynamics yields the learned reverse process:
\begin{equation}
\mathrm d \hat \bY_s^i
=
\big[-\tfrac12 \hat \bY_s^i - \bs_{\theta_i}(\hat \bY_s^i,s)\big]\mathrm ds
+ \mathrm d \bar \bB_s ,
\label{eq:reverse-learned}
\end{equation}
with initialization $\hat \bY_T^{i} \sim \mathcal{N}(0,\bI_d)$.
The next generation model is thus denoted $\hat p^{i+1} := \Law(\hat{\bY}_{t_0}^i)$. Finally, denote the corresponding path laws on the path-space $C([t_0,T],\R^d)$ by
\[
\mathbb P_{i}^\star := \Law\big((\mathbf Y_s^{i,\star})_{s\in[t_0,T]}\big),
\quad
\hat{\mathbb P}_{i} := \Law\big((\hat{\mathbf Y}_s^{i})_{s\in[t_0,T]}\big),
\]
so that the time-$t_0$ marginals of the ideal and the learned reverse processes satisfy
\begin{align}
    \bY_{t_0} \sim q_{i}\ \text{ under }\mathbb P_{i}^\star,   \text{ and }
\hat{\mathbf Y}_{t_0}^{i} \sim \hat p^{\,i+1}\   \text{under }\hat{\mathbb P}_{i}. \label{eq:Ystar_Yhat_marginals}
\end{align}
With some abuse of notation, we use $q_i, \phat^{i+1}$ for both the laws and the densities with respect to the Lebesgue measure.

\paragraph{Score error and energies}
We define the score error for each $t \in [t_0, T]$ as the vector field,
$$
\be_{i,t} := \bs_{\theta_i}(\cdot,t) - \bs_i^\star(\cdot,t),
$$

and its path-wise energies
\begin{align}
\label{eq:pathwise-score-error-energy_star}
 \varepsilon_{\star,i}^2
& :=
\mathbb E_{(\bY_s)_{s \in [t_0,T]} \sim \Prob_i^\star}\!\left[
\int_{t_0}^T \|
\be_{i,s}(\bY_s)\|_2^2\,\md s
\right], \\ 
\label{eq:pathwise-score-error-energy_hat}
 \hat \varepsilon_{i}^2
& :=
\mathbb E_{(\bY_s)_{s \in [t_0,T]} \sim \hat \Prob_i}\!\left[
\int_{t_0}^T \|
\be_{i,s}( \bY_s)\|_2^2\,\md s
\right].
\end{align}
These quantities control the intra-generation divergence $I_i$  defined in \eqref{eq:Ii_def}. We note that \eqref{eq:pathwise-score-error-energy_star} is the score-matching loss  for generation $i$ \cite{hyvarinen2005estimation, vincent2011connection, song2019generative, ho2020denoising}. At a finite sample level, for a training sample $\mathcal{D}_i \overset{\mathrm{i.i.d}}{\sim}q_i$  of size $n_i$,  under appropriate assumptions, the minimax-optimal score estimation error satisfies  \cite{minimax-optim-score-based-diff-models}: 
\begin{equation}
    \varepsilon_{\star,i}^2
\;\lesssim\;
\mathrm{polylog}(n_i)\,n_i^{-1}\,(1/t_0)^{d/2}. \label{eq:minimax_score_error}
\end{equation}
Similar bounds, relying on different assumptions, have been derived in various works \cite{oko2023diffusion, from-optimal-score-matching-to-optimal-sampling, optimal-score-est-via-empirical-bayes-smoothing, samworth-shape-constraint}.
Our results are relevant in the regime where $  \varepsilon_{\star, i}^2 < 1$. From \eqref{eq:minimax_score_error}, this requires a training sample of size $n_i \sim (1/t_0)^{d/2}$. However, when the distribution $\pdata$ is supported on a low dimensional manifold (as is the case for images), under suitable assumptions we expect the optimal score estimation error to depend only on the intrinsic dimension $d^\star \ll d$ \cite{score-approx-estimation-recovery-low-manifold}.

Our main goal is to quantify how the score estimation error \eqref{eq:pathwise-score-error-energy_star} propagates within and across generations. We make a few simplifying assumptions to keep the analysis tractable and reduce bookkeeping.

\paragraph{Model Assumptions}   

The proportion $\alpha$ of fresh samples used for training is strictly positive.
We will assume $\pdata$ admits a density with respect to Lebesgue measure and the model distribution $\phat^i$ is absolutely continuous with respect to $\pdata$, for each generation $i$. This is a standard  assumption in analyses of diffusion models \cite{sampling-is-as-easy-as-learning-the-score, benton2024nearly}. In addition to the score estimation error, there are two other sources of error in practice. One is due to the learned reverse process \eqref{eq:reverse-learned} being initialized with $\bY_T^{i, \star} \sim \mathcal{N}(0,\bI_d)$ rather than $\bY_T^{i, \star} \sim q_{i,T}$  used for the ideal process in \eqref{eq:reverse-ideal}. The KL divergence between  $q_T$ and $\mathcal{N}(0,\bI_d)$ converges to 0 exponentially fast in $T$ \cite{Bakry2014Analysis}, so we ignore this error. The second source of error is the discretization error when the reverse process is implemented in discrete-time. We do not quantify these errors as our main goal is to obtain lower bounds on the intra-generation and accumulated errors. Obtaining error propagation bounds taking these additional sources of error into account is a direction for future work.

\section{Intra-generation divergence bounds}
\label{sec:error-accumulation}

We now analyze the \emph{intra-generational} mechanism by which score estimation error in generation $i$ translates into mismatch between the sampling output $\hat p^{\,i+1}$ and the training distribution $q_i$ in \eqref{eq:mixture}.
Our goal is to relate the one-step divergence $I_i$ defined in \eqref{eq:Ii_def}
to the pathwise score-error energies defined by \eqref{eq:pathwise-score-error-energy_star}-\eqref{eq:pathwise-score-error-energy_hat}. 

\paragraph{Key Technical Ideas and Girsanov Quantities} To quantify how local score errors propagate to the final samples, we use a change-of-measure argument based on Girsanov's theorem \cite{girsanov1960transforming, sto-calculus-le-gall}. This is a key result in the theory of diffusion models that shows how the drift mismatch between the ideal and learned reverse-time processes (score error in our setting) controls the Radon-Nikodym derivative of the path-space measure $\hat{\Prob}_i$ with respect to $\Prob_i^\star$.  Define the time-integrated random error, 
\begin{align}
    M_t^i := -\int_{t_0}^t \be_{i,s} \cdot \mathrm{d}\bar{\bB}_s,
\label{eq:MT_i}
\end{align}
and its quadratic variation 
 \begin{align}
     \langle M^i \rangle_t = \int_{t_0}^t \|
\be_{i,s}\|_2^2\,\md s.
\label{eq:MTi_QV}
 \end{align}
 We also define $Z_t^i := M_t^i - \frac{1}{2}\langle M^i \rangle_t$. We note that both $M_t^i$ and $\langle M^i \rangle_t$ are functions of a random trajectory
 $(\bY_s)_{s \in [t_0,t]}$, distributed according to a path-space measure, e.g.  $\Prob_i^\star$ or $\hat \Prob_i$. Under mild and standard assumptions (\ref{ass:finite-drift-girsanov} and \ref{ass:exponential-is-a-martingale} below), Girsanov's theorem states that $\hat \Prob_i \ll \Prob_i^\star$ and
$\tfrac{\md \hat \Prob_i}{\md \Prob_i^\star} = \exp(Z_T^i)$.
Crucially, the likelihood ratio of the terminal marginals $R_i := \phat^{i+1}/q_i$ is obtained by projecting this path density onto the terminal state (Proposition \ref{prop:girsanov-ratio}):
\begin{equation}
    R_i(\bx) := \frac{\phat^{i+1} (\bx)}{q_i (\bx)} 
    \;=\; 
    \E_{(\bY_s)_{s \in [t_0, T]}\sim\Prob_i^\star}\Big[ e^{Z_T^i} \;\Big|\; \bY_{t_0}=\bx \Big].
    \label{eq:marginal_projection}
\end{equation}
This relation is central to our analysis: it shows how pathwise score errors, encoded through $Z_T^i$, are projected onto the terminal marginal and thereby induce distributional drift. A key challenge in our analysis is to obtain a good lower bound on the conditional expectation in \eqref{eq:marginal_projection}. We begin with an upper bound that follows from a standard application of Girsanov's theorem.

\subsection{Upper Bound via Change of Measure}

Assume there exists $i_0 \ge 1$ such that for all  generations $i \ge i_0$, the following conditions hold. 
\begin{enumerate}[label=\textbf{(A\arabic*)}, ref=A\arabic*, leftmargin=*, series=assumptions]
\item \label{ass:finite-drift-girsanov}
\textbf{Finite energy along learned paths}, i.e. $\hat\varepsilon_i^2 <\infty.$

\item \label{ass:exponential-is-a-martingale}
\textbf{Martingale property of the Girsanov density.}
The exponential $e^{Z_T^i}$ associated with the drift difference between \eqref{eq:reverse-ideal} and \eqref{eq:reverse-learned} defines a true martingale on $[t_0,T]$ under $\Prob_i^\star$ so that the Girsanov change of measure holds.  (A more precise version is stated in Appendix \ref{subsection:proof-upper-bound}.)
\end{enumerate}

Assumptions \ref{ass:finite-drift-girsanov}-\ref{ass:exponential-is-a-martingale} impose no restriction on the form of the score error beyond finite quadratic energy and are standard in diffusion theory. They yield the following well-known upper bound relating sampling error to the pathwise score-estimation energy \cite{chen2023improved, sampling-is-as-easy-as-learning-the-score, benton2024nearly}.

\begin{proposition}[Intra-generational upper bound]
\label{prop:upper-main}
Under \ref{ass:finite-drift-girsanov}--\ref{ass:exponential-is-a-martingale}, $\hat{\mathbb P}_i\ll \mathbb P_i^\star$ and, by data processing inequality (marginalization to time $t_0$),
\begin{equation}
\mathrm{KL}(\hat p^{\,i+1}\|q_i)\le \mathrm{KL}(\hat{\mathbb P}_i\|\mathbb P_i^\star)= \tfrac12\,\hat\varepsilon_i^2.
\label{eq:kl-upper-main}
\end{equation}
\end{proposition}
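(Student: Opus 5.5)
The plan is to apply Girsanov's theorem under \ref{ass:exponential-is-a-martingale} to get the path-space density, compute the path-space KL as an expectation of $-\log$ of that density under $\hat{\mathbb P}_i$, and then marginalize to time $t_0$ by data processing.

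\textbf{Step 1 (path density).} Both reverse SDEs \eqref{eq:reverse-ideal} and \eqref{eq:reverse-learned} have the same diffusion coefficient. Their drifts differ by $-\be_{i,s}$. Ignoring the initialization mismatch at time $T$, as stipulated in the model assumptions, they share the same law at time $T$. Assumption \ref{ass:exponential-is-a-martingale} says $e^{Z_t^i}$ is a true $\Prob_i^\star$-martingale with unit mean, so Girsanov's theorem gives $\hat{\mathbb P}_i\ll\mathbb P_i^\star$ with $\md\hat{\mathbb P}_i/\md\mathbb P_i^\star=\exp(Z_T^i)$. Moreover, under $\hat{\mathbb P}_i$ the process
\[
\tilde\bB_t := \bar\bB_t + \int_{t_0}^t \be_{i,s}\,\md s
\]
is a Brownian motion. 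The sign convention must be matched to the backward time direction. Concretely, $M^i$ is defined so that $-\be$ enters with the correct sign relative to the drift difference, and I would check this carefully.

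\textbf{Step 2 (KL computation).} By definition,
\[
\mathrm{KL}(\hat{\mathbb P}_i\|\mathbb P_i^\star)=\E_{\hat{\mathbb P}_i}[Z_T^i].
\]
Rewrite $M_T^i$ in terms of $\tilde\bB$:
\[
M_T^i = -\int_{t_0}^T \be_{i,s}\cdot \md\tilde\bB_s + \int_{t_0}^T\|\be_{i,s}\|^2\,\md s.
\]
Substituting this into $Z_T^i=M_T^i-\tfrac12\langle M^i\rangle_T$ gives
\[
Z_T^i = -\int_{t_0}^T \be_{i,s}\cdot\md\tilde\bB_s+\tfrac12\int_{t_0}^T\|\be_{i,s}\|^2\,\md s.
\]
Under $\hat{\mathbb P}_i$, the stochastic integral is a local martingale. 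Its quadratic variation has expectation $\hat\varepsilon_i^2<\infty$ by \ref{ass:finite-drift-girsanov}, so it is an $L^2$ martingale with zero mean. Taking expectations yields
\[
\mathrm{KL}(\hat{\mathbb P}_i\|\mathbb P_i^\star)=\tfrac12\hat\varepsilon_i^2,
\]
since the remaining term is exactly the energy defined in \eqref{eq:pathwise-score-error-energy_hat} evaluated along learned paths.

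\textbf{Step 3 (marginalization).} The map sending a path to its value at $s=t_0$ pushes $\hat{\mathbb P}_i$ to $\hat p^{\,i+1}$ and $\mathbb P_i^\star$ to $q_i$, by \eqref{eq:Ystar_Yhat_marginals}. The data processing inequality for KL (equivalently, Jensen applied to the conditional expectation in \eqref{eq:marginal_projection}) then gives $\mathrm{KL}(\hat p^{\,i+1}\|q_i)\le \tfrac12\hat\varepsilon_i^2$.

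\textbf{Main obstacle.} The delicate point is Step 2. The zero-mean claim needs the stochastic integral to be a true martingale under the learned measure, not merely under $\mathbb P_i^\star$. This is exactly where \ref{ass:finite-drift-girsanov} enters, since finite $\hat\varepsilon_i^2$ gives $L^2$ integrability. If a fully rigorous argument is required, I would localize with stopping times $\tau_n=\inf\{t:\langle M^i\rangle_t\ge n\}$, prove the identity on $[t_0,\tau_n\wedge T]$, and pass to the limit. The limit uses monotone convergence for the energy term and lower semicontinuity of KL for the path-space term.
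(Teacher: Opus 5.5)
Your proposal is correct and follows essentially the same route as the paper. Girsanov under \ref{ass:exponential-is-a-martingale} gives $\md\hat{\mathbb P}_i/\md\mathbb P_i^\star=e^{Z_T^i}$, rewriting $Z_T^i$ in terms of the $\hat{\mathbb P}_i$-Brownian motion $\bar\bB_t+\int_{t_0}^t\be_{i,s}\,\md s$ leaves a zero-mean martingale (by \ref{ass:finite-drift-girsanov}) plus $\tfrac12\int_{t_0}^T\|\be_{i,s}\|_2^2\,\md s$, and data processing under the time-$t_0$ projection finishes the argument. Your localization remark is not needed, since \ref{ass:finite-drift-girsanov} already makes the stochastic integral a true martingale; your Jensen-based alternative for the data-processing step is also valid.
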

For completeness, we give the proof  in  Appendix~\ref{subsection:proof-upper-bound}.

\subsection{Lower Bound Via Error Observability }

The upper bound of Proposition~\ref{prop:upper-main} measures the entire path-space score error. Obtaining a divergence lower bound is more challenging as we have to work  with the marginal likelihood ratio given by \eqref{eq:marginal_projection}. 

\subsubsection*{First Challenge: Observability of Errors}
\label{subsection:observability-of-errors-main}
 It is possible that the path-space score error $\hat{\varepsilon}_i^2 >0$, but $\mathrm{KL}(\hat p^{\,i+1}\|q_i)=0$, i.e., a non-zero path-wise error may leave no trace at the end point $t_0$ if it is orthogonal (in $L^2$) to all observables at $t_0$. This phenomenon is intrinsic, so we need an observability condition linking path-wise perturbations to the endpoint. We quantify the visibility of score errors at $t_0$ through the following coefficient.
\begin{definition}[Observability of Errors Coefficient]
\label{def:observability}
Recalling the definition of $M_T^i$ in \eqref{eq:MT_i},
\begin{equation}
\eta_i
:=
\frac{\Var_{\mathbb P_i^\star}\!\big(\E[M_T^i \mid \bY_{t_0}]\big)}
{\varepsilon_{\star,i}^2}
\;\in\;[0,1],
\label{eq:eta}
\end{equation}
with the convention $\eta_i:=0$ if $\varepsilon_{\star,i}^2=0$.  The denominator in \eqref{eq:eta} equals $\Var_{\mathbb P_i^\star}(M_T^i)$, by Itô isometry \cite{sto-calculus-le-gall}. 
\end{definition}

The leading term in our divergence lower bound (Proposition \ref{prop:lower-main}) is proportional to $\eta_i$. Hence the bound is informative whenever $\eta_i>0$. 

\emph{When is $\eta_i>0$ in practice?} By definition,
\[
\eta_i = 0
\quad\Longleftrightarrow\quad
\E_{\mathbb P_i^\star}\!\big[M_T^i \mid \mathbf Y_{t_0}\big]=0
\ \ \mathbb P_i^\star\text{-a.s.},
\]
i.e., $\eta_i = 0$ when the drift-mismatch martingale $M_T^i$ is \emph{conditionally orthogonal} to all terminal observables. This cancellation is highly non-generic: in common parametric score models (e.g.\ neural networks with smooth activations (e.g. SiLU, GeLU)), due to random initialization and stochastic optimization noise we expect $\eta_i >0$. 
The experiments discussed in Appendix \ref{sec:experiments-appendix} suggest that in realistic data-driven setups, $\eta_i>0$ holds across generations; see Figure \ref{fig:observability} (10-dimensional Gaussian Mixture) and Figure \ref{fig:fashionminist-observability} (CIFAR-10 dataset). More generally, a simple and practically relevant sufficient condition is \emph{state dependence} of the score perturbation function $\be_{i,t}(\bx)$:
time-only (or purely path-orthogonal) perturbations can be averaged out by the conditional expectation, whereas perturbations coupled to the sample state $\bx$ typically leave a detectable imprint on the endpoint. In particular, any nontrivial state-dependent affine component yields observability: if $\be_{i,t}(\bx)=\mathbf{w}\,\bx+\xi(t)$ with $\mathbf{w}\neq 0$ and $\xi$ independent of $\bx$, then $\eta_i>0$.
This dichotomy is illustrated empirically in Figure~\ref{fig:state-dependence-observability}, where state-dependent perturbations exhibit higher observability than time-only perturbations. 

\begin{figure}
    \centering
    \includegraphics[width=1.0\linewidth]{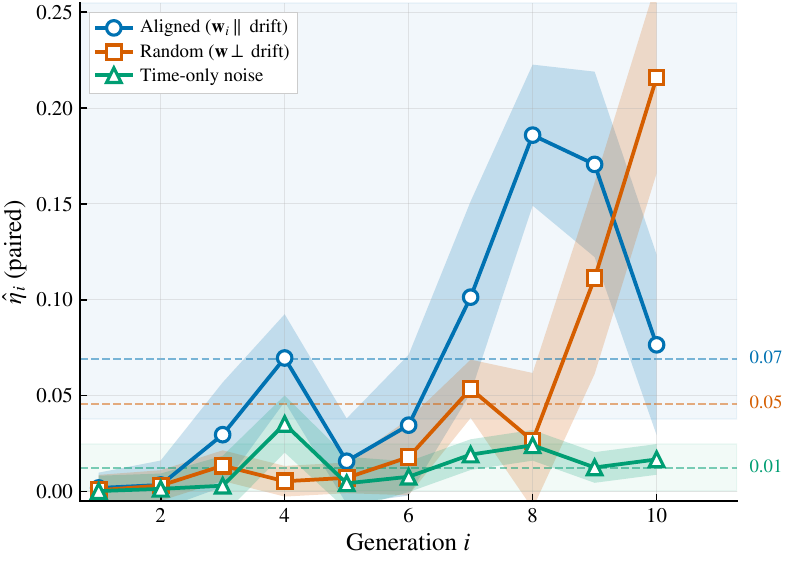}
    \caption{
    \textbf{Observability of three classes of score perturbations in CIFAR-10 \cite{cifar10} diffusion sampling.}
    We inject controlled perturbations $\be_i(\mathbf{x},t)$ and estimate the observability coefficient
    $\hat{\eta}_i = \mathrm{Var}(\widehat{\E}[M_i \mid \bY_{t_0}]) / \mathrm{Var}(M_i)$ 
    using paired reverse trajectories with shared diffusion noise.
    \textbf{Aligned}: $\be_{i,t}(\mathbf{x}) = \mathbf{w}_i \mathbf{x}$ with $\mathbf{w}_i$ chosen so that the error points in a direction correlated with the drift (i.e., errors push trajectories further along where they are already going).
    \textbf{Random}: $\be_{i,t}(\mathbf{x}) = \mathbf{w}\mathbf{x}$ with random $\mathbf{w}$ independent of the drift direction
    \textbf{Time-only}: $\be_{i,t}(\mathbf{x}) = \xi(t)$, independent of state.
    State-dependent perturbations yield higher $\hat{\eta}_i$, confirming that errors coupled to the sample state are more visible at the terminal distribution.}
    \label{fig:state-dependence-observability}
\end{figure}

For clarity, we provide an additional toy example of  a basic linear-Gaussian model with state-dependent score error, where the observability coefficient $\eta_i \ge \underline{\eta}>0$.  Consider the  linear-Gaussian reverse diffusion $\md \bY_{i,s}^\star = -\beta\bY_{i,s}^\star\md s + \md \bar{\bB}_s$ and a linear state-dependent score error of the form $\be_{i,s}(\bx) = a_i(s)C_i\bx$ where $C_i = C_i^\top \succ 0$, and $a_i(s) \ge 0$ is not identically zero. In this setting, one has that $M_T^i = - \int_{t_0}^T a_i(s)(C_i \bY_{i,s}^\star)\cdot \bB_s$. Using Ito's formula on the quadratic form $(\bY_{i,s}^\star)^\top C_i \bY_{i,s}^\star$, one has that, $\E_{\Prob_i^\star}[M_T^i \mid \bY_{s}= \by] = \by^\top K_i \by + \kappa_i$  for some matrix $K_i \succ 0$, and a constant $\kappa_i$. Hence, $\mathrm{Var}(\E_{\Prob_i^\star}[M_T^i \mid \bY_{s}]) > 0$, which implies that $\eta_i > 0$. Intuitively, the conditions $C_i = C_i^\top \succ 0$ and $a_i(s) \ge 0$ rule out sign-changing cancellations of the state-dependent score error along the reverse dynamics. 

\subsubsection*{Second Challenge: Ratio of marginals}
\label{subsec:local-assumptions}

\begin{figure*}[t]
    \centering
    \includegraphics[width=\textwidth]{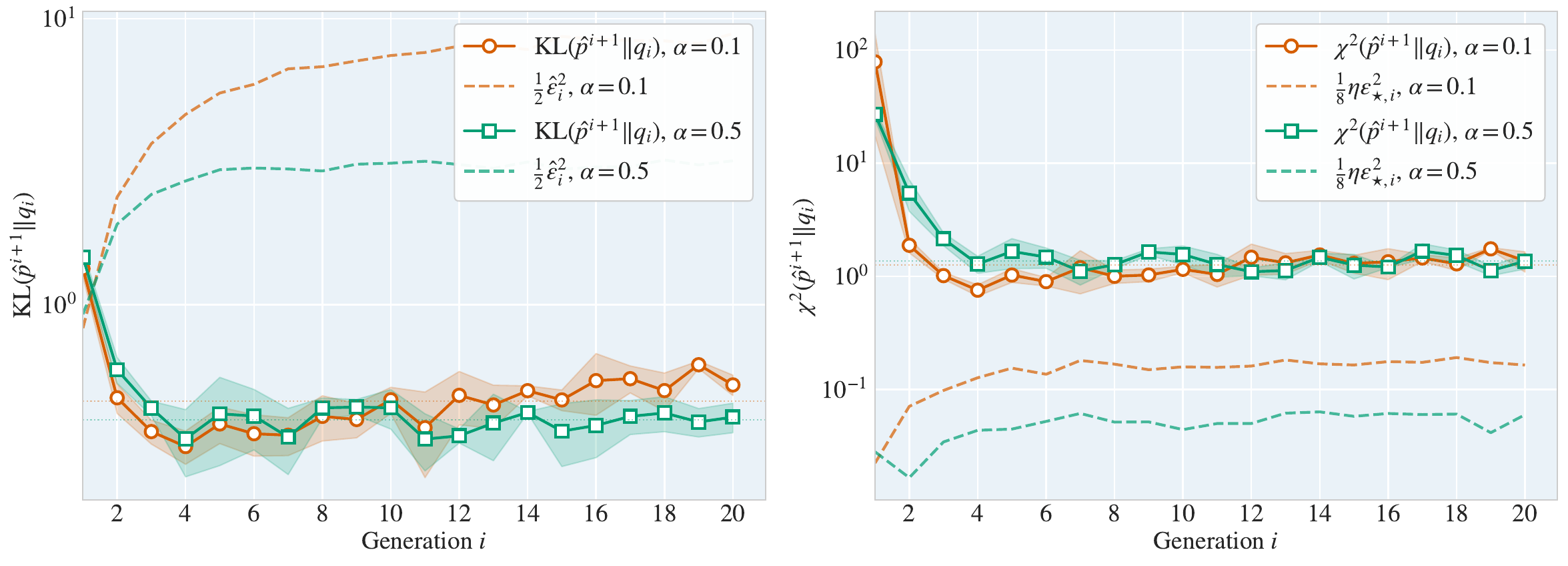}
    \caption{\textbf{Empirical verification of intra-generational error bounds for a Gaussian mixture.} $\pdata \sim \tfrac{1}{5} \sum_{k=1}^{5} \mathcal{N}(\boldsymbol{\mu}_k, \sigma^2 \bI_{10})\in \R^{10}$.
    \textbf{Left:} Upper bound verification (Proposition~\ref{prop:upper-main}). The KL divergence 
    $\mathrm{KL}(\phat^{i+1} \| q_i)$ between the learned distribution and the 
    training mixture remains bounded by $\frac{1}{2}\hat{\varepsilon}_i^2$. The bound is tight 
    at early generations and becomes conservative as the system equilibrates. 
    \textbf{Right:} Lower bound verification (Proposition~\ref{prop:lower-main}). The $\chi^2$ divergence $I_i = \chi^2(\phat^{i+1} \| q_i)$ is bounded below by $\frac{1}{8}\hat{\eta}_i \varepsilon_{\star,i}^2$, where $\hat{\eta}_i$ is the estimated observability coefficient. Shaded regions 
    indicate $\pm 1$ standard deviation across runs. Both bounds hold consistently across $\alpha \in \{0.1, 0.5\}$, validating the theoretical framework. Experiment details are given in Appendix \ref{subsec:experimental-setup-10gmm}.}
    \label{fig:bounds_verification}
\end{figure*}

The ratio of marginals, given by \eqref{eq:marginal_projection},  depends on an exponential functional of the score error along the path, projected to the endpoint. Thus, controlling marginals requires uniform integrability and moment bounds to prevent rare trajectories from dominating
$e^{Z_T^i}$ and to relate its fluctuations to the scale $\varepsilon_{\star,i}^2$. This motivates the following regularity assumptions: moment bounds on the Girsanov ratio and the quadratic variation in \eqref{eq:MTi_QV}.

\begin{enumerate}[label=\textbf{(A\arabic*)}, ref=A\arabic*, leftmargin=*, resume=assumptions]
\item \label{ass:exponential-integrability-girsanov} \textbf{$L^{1+\delta}$-integrability of the Girsanov density.}  
There exist $\delta > 1$ and $C_\delta < \infty$ such that the Radon--Nikodym derivative $\frac{\md \hat{\Prob}_i}{\md \Prob_i^\star} = e^{Z_T^i}$ satisfies
\begin{equation}
\sup_{i \geq i_0} \, \E_{\Prob_i^\star}\!\left[e^{Z_T^i (1+\delta)}\right] \leq C_\delta.
\label{eq:A6-exponential}
\end{equation}

\item \label{ass:quadratic-variation-concentration} \textbf{Quadratic Variation Moment.} There exists $K_\gamma < \infty$ and $\gamma>\max\{2, \frac{4}{\delta-1}\}$ with $\delta>1$ defined in \ref{ass:exponential-integrability-girsanov} such that
\begin{equation}
\sup_{i \geq i_0} \, \frac{\E_{\Prob_i^\star}[\langle M^i \rangle_T^{2+\gamma}]}{\varepsilon_{\star, i}^{2(2+\gamma)}} \leq K_{\gamma}.
\end{equation}
\end{enumerate}

\paragraph{Discussion on assumptions} We discuss settings in which assumptions \ref{ass:exponential-integrability-girsanov} and \ref{ass:quadratic-variation-concentration} are likely to hold. For \ref{ass:exponential-integrability-girsanov}, a sufficient condition is when score errors are uniformly bounded, i.e. there exist a constant $c>0$ such that $\|\be_i(\bx, t)\|_2 \le c$ for all $(\bx, t) \in \R^d \times [t_0, T]$. In that case, the quadratic variation $\langle M^i \rangle_T$ is deterministically bounded by $c^2(T-t_0)$. Since $Z^i_T = M_T^i -\tfrac{1}{2}\langle M^i \rangle_T$, standard exponential-martingale estimates imply \ref{ass:exponential-integrability-girsanov}. \ref{ass:quadratic-variation-concentration} is a complementary concentration assumption on the quadratic variation $\langle M^i \rangle_T$ relative to its mean $\varepsilon_{\star, i}^2$. It is not implied by boundedness of errors alone and rules out situations in which the pathwise score-error energy is carried by rare trajectories with unusually large spikes.

Assumptions \ref{ass:exponential-integrability-girsanov}-\ref{ass:quadratic-variation-concentration} are required to hold for generations after some $i_0 >0$, which avoids artifacts due to arbitrary or poor initializations.
Figure \ref{fig:bounds_verification} highlights the importance of this transient regime: the misalignment at $i=1$ does not contradict the theory, but rather indicates that this transient lasts roughly one step in our setup; after that, the bounds are consistent with theory. Moreover, assumptions \ref{ass:exponential-integrability-girsanov}-\ref{ass:quadratic-variation-concentration} are stated with free parameters to retain flexibility; for concreteness, one may for instance fix $(\delta, \gamma)=(3,3)$. 

We now state the main lower bound result.

\begin{proposition}[Intra-generational Lower Bound]
\label{prop:lower-main}
Under Assumptions \ref{ass:finite-drift-girsanov}--\ref{ass:quadratic-variation-concentration}, the following holds for generations $i \ge i_0$. There exist constants $c > 0$ and $C < \infty$ (depending only on $\eta_i$, $K_\gamma$, $C_\delta$, and $\delta$) such that if the total score error satisfies the perturbative condition $\varepsilon_{\star,i}^2 \leq 1$, then:
\begin{equation}
\chi^2(\phat^{i+1} \| q_i) \;\geq\; \frac{1}{4} \cdot \eta_i \cdot \varepsilon_{\star,i}^2 \;-\; C \cdot \varepsilon_{\star,i}^4.
\end{equation}
In particular, when $\varepsilon_{\star,i}^2 \leq \min\{1,\ \frac{\eta_i}{8C}\}$, we obtain the clean lower bound:
$\chi^2(\phat^{i+1} \| q_i) \;\geq\; \frac{1}{8} \cdot \eta_i \cdot \varepsilon_{\star,i}^2$.
\end{proposition}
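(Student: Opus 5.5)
The plan is to work directly with the marginal likelihood ratio \eqref{eq:marginal_projection}. Write $R_i(\bx)=\E[e^{Z_T^i}\mid \bY_{t_0}=\bx]$ under $\Prob_i^\star$. Then $\chi^2(\phat^{i+1}\|q_i)=\E_{q_i}[(R_i-1)^2]=\Var_{\Prob_i^\star}(R_i(\bY_{t_0}))$, since $\E[R_i]=1$ by the martingale property \ref{ass:exponential-is-a-martingale}. We expand $e^{Z}=1+Z+\rho(Z)$ with remainder $|\rho(z)|\le \tfrac12 z^2\max(1,e^{z})$. Set $A:=\E[M_T^i\mid\bY_{t_0}]$, $B:=-\tfrac12\E[\langle M^i\rangle_T\mid \bY_{t_0}]$ and $\mathcal R:=\E[\rho(Z_T^i)\mid \bY_{t_0}]$. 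Then $R_i-1=A+B+\mathcal R$. By definition of $\eta_i$ and the fact that $\E[A]=0$, we have $\Var(A)=\E[A^2]=\eta_i\varepsilon_{\star,i}^2$.

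Next we use the elementary inequality $\E[(A+E)^2]\ge \tfrac12\E[A^2]-\E[E^2]$ with $E=B+\mathcal R$; this is where the factor $\tfrac14$ in the statement comes from. Applying it, after also handling the recentering to the variance, gives $\chi^2\ge \tfrac12\eta_i\varepsilon_{\star,i}^2-2\E[B^2]-2\E[\mathcal R^2]$ up to harmless constant reshuffling. Alternatively, one can use $(a+e)^2\ge a^2/2-e^2$ and absorb constants to land on $\tfrac14$. It therefore suffices to show $\E[B^2]+\E[\mathcal R^2]\le C\varepsilon_{\star,i}^4$.

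The term $B$ is easy. By conditional Jensen, $\E[B^2]\le\tfrac14\E[\langle M^i\rangle_T^2]$. By Hölder with exponent $(2+\gamma)/2$ and \ref{ass:quadratic-variation-concentration}, $\E[\langle M^i\rangle_T^2]\le K_\gamma^{2/(2+\gamma)}\varepsilon_{\star,i}^4$.

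The remainder $\mathcal R$ is the main obstacle. By Jensen, $\E[\mathcal R^2]\le \E[\rho(Z_T^i)^2]\le \tfrac14\E[(Z_T^i)^4(1+e^{2Z_T^i})]$. We split with Hölder: $\E[Z^4e^{2Z}]\le \E[Z^{4p'}]^{1/p'}\E[e^{2pZ}]^{1/p}$ with $2p=1+\delta$, which is admissible since $\delta>1$. The exponential factor is then bounded by $C_\delta^{1/p}$ via \ref{ass:exponential-integrability-girsanov}. For the polynomial factor, we bound moments of $Z_T^i=M_T^i-\tfrac12\langle M^i\rangle_T$ by Burkholder--Davis--Gundy, $\E|M_T^i|^{q}\lesssim \E[\langle M^i\rangle_T^{q/2}]$, together with moments of $\langle M^i\rangle_T$. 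We need $q=4p'=4(1+\delta)/(\delta-1)$, so that $\E[\langle M^i\rangle_T^{q/2}]=\E[\langle M^i\rangle_T^{2(1+\delta)/(\delta-1)}]$. The condition $\gamma>4/(\delta-1)$ gives $2+\gamma> 2(1+\delta)/(\delta-1)$, so this moment is controlled by \ref{ass:quadratic-variation-concentration} via Lyapunov's inequality. The result is $\E[|M_T^i|^{q}]^{1/p'}\lesssim K_\gamma^{\cdot}\varepsilon_{\star,i}^{4}$. The pure $\langle M\rangle^{q}$ contributions are of order $\varepsilon_{\star,i}^{8}$, which is at most $\varepsilon_{\star,i}^4$ when $\varepsilon_{\star,i}^2\le1$. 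This perturbative condition is used exactly here, and also to discard higher powers in the non-exponential part $\E[Z^4]$. The exponents need checking carefully, and if the stated range falls short the Hölder split may need adjusting.

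Collecting the bounds gives $\chi^2(\phat^{i+1}\|q_i)\ge \tfrac14\eta_i\varepsilon_{\star,i}^2-C\varepsilon_{\star,i}^4$, with $C$ depending on $K_\gamma,C_\delta,\delta$ and the BDG constant. The clean bound follows immediately: when $\varepsilon_{\star,i}^2\le\eta_i/(8C)$, we have $C\varepsilon_{\star,i}^4\le\tfrac18\eta_i\varepsilon_{\star,i}^2$.
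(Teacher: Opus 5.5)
Your plan matches the paper's: expand $e^{Z}$ to first order, isolate $\E[M_T^i\mid\bY_{t_0}]$ (second moment $\eta_i\varepsilon_{\star,i}^2$), and push everything else into $C\varepsilon_{\star,i}^4$. Grouping $B$ with the remainder so that only one Young-type split is used is legitimate, and it even gives leading coefficient $\tfrac12$ instead of $\tfrac14$.

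The gap is in the remainder step, where you hedge. After bounding $\rho(Z)^2\le\tfrac14 Z^4(1+e^{2Z})$, you apply H\"older to $\E[(Z_T^i)^4e^{2Z_T^i}]$ on the whole sample space, which needs $\E|Z_T^i|^{4p'}$. Since $Z_T^i=M_T^i-\tfrac12\langle M^i\rangle_T$, this requires $\E[\langle M^i\rangle_T^{4p'}]$ with $4p'=4(1+\delta)/(\delta-1)$. But \ref{ass:quadratic-variation-concentration} controls moments of $\langle M^i\rangle_T$ only up to order $2+\gamma$. The condition $\gamma>\max\{2,4/(\delta-1)\}$ guarantees $2+\gamma>2p'$ but not $2+\gamma\ge 4p'$. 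For the paper's suggested $(\delta,\gamma)=(3,3)$ you would need an eighth moment and have only a fifth. Assumption \ref{ass:exponential-integrability-girsanov} does not compensate: it penalizes only large positive $Z_T^i$, while a large $\langle M^i\rangle_T$ drives $Z_T^i$ toward $-\infty$. So your claim that the pure $\langle M\rangle^{q}$ contributions are $O(\varepsilon_{\star,i}^8)$ presupposes the very moment bound that is unavailable.

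The missing idea is to keep the sign information already in your bound $|\rho(z)|\le\tfrac12 z^2\max(1,e^{z})$, rather than replacing $\max(1,e^{2z})$ by $1+e^{2z}$. Write $\max(1,e^{2Z})=\mathbf 1_{\{Z<0\}}+e^{2Z}\mathbf 1_{\{Z\ge0\}}$.

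On $\{Z_T^i<0\}$ you only need $\E[(Z_T^i)^4]\le 8\E|M_T^i|^4+\tfrac12\E\langle M^i\rangle_T^4\le C\varepsilon_{\star,i}^4$. This uses a fourth moment of $\langle M^i\rangle_T$ (covered since $\gamma>2$) and $\varepsilon_{\star,i}^2\le 1$.

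On $\{Z_T^i\ge0\}$, nonnegativity of $\langle M^i\rangle_T$ gives $0\le Z_T^i\le M_T^i$, hence $|Z_T^i|^{4p'}\mathbf 1_{\{Z_T^i\ge0\}}\le|M_T^i|^{4p'}$. H\"older and BDG then require only $\E\langle M^i\rangle_T^{2p'}$, which is in range because $\gamma>4/(\delta-1)$. This yields the bound $\bigl(C_{\mathrm{BDG}}(4p')K_\gamma^{2p'/(2+\gamma)}\bigr)^{1/p'}C_\delta^{1/p}\varepsilon_{\star,i}^4$.

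This sign split is exactly how the paper handles the term, and with it the rest of your argument goes through.
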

The proof is given in Appendix \ref{section:lower-bound-main}. The upper and lower bounds of Propositions \ref{prop:upper-main} and \ref{prop:lower-main} are illustrated in Figure \ref{fig:bounds_verification} for a $10$-dimensional Gaussian mixture, and compared with the empirically computed divergences.

\begin{remark}
The lower bound of Proposition \ref{prop:lower-main} is not restricted to the setting of recursive training, and applies to any  standard diffusion model defined via \eqref{eq:reverse-learned}. Formally, for any target distribution $\pdata$ and any learned distribution $\phat$ obtained by training a diffusion model, one has under the conditions of Proposition \ref{prop:lower-main} (dropping the generation $i$),
$$
\chi^2(\phat\;\|\;\pdata) \ge \frac{1}{4} \eta \varepsilon_\star^2 - C \varepsilon_\star^4,
$$
where, $\eta$ and $\varepsilon_\star^2$ are defined similarly to $\eta_i$ (Definition \ref{def:observability}) and $\varepsilon_{\star, i}^2$ (equation \eqref{eq:pathwise-score-error-energy_star}) without the notion of generation $i$.
    \label{rem:std_LB}
\end{remark}

We note that, since the forward process runs over a finite horizon $T$ and we initialize the reverse process from $\cN(0,\bI_d)$ rather than from $q_{i,T}$, there is an additional bias term. However, for the OU forward diffusion, this initialization mismatch decays exponentially fast in $T$ \cite{bakryemery1985}. By contrast, the score-error energy $\varepsilon_{\star, i}^2$ entering our lower bound is not expected to grow exponentially with $T$; under a mild uniform-in-time second-moment bound on the score error, it grows at most linearly in $T$. Hence, for sufficiently large $T$, this initialization mismatch is negligible compared with the score-error term that drives the lower-bound mechanism. 

\begin{figure*}[t]
    \centering
    \includegraphics[width=\textwidth]{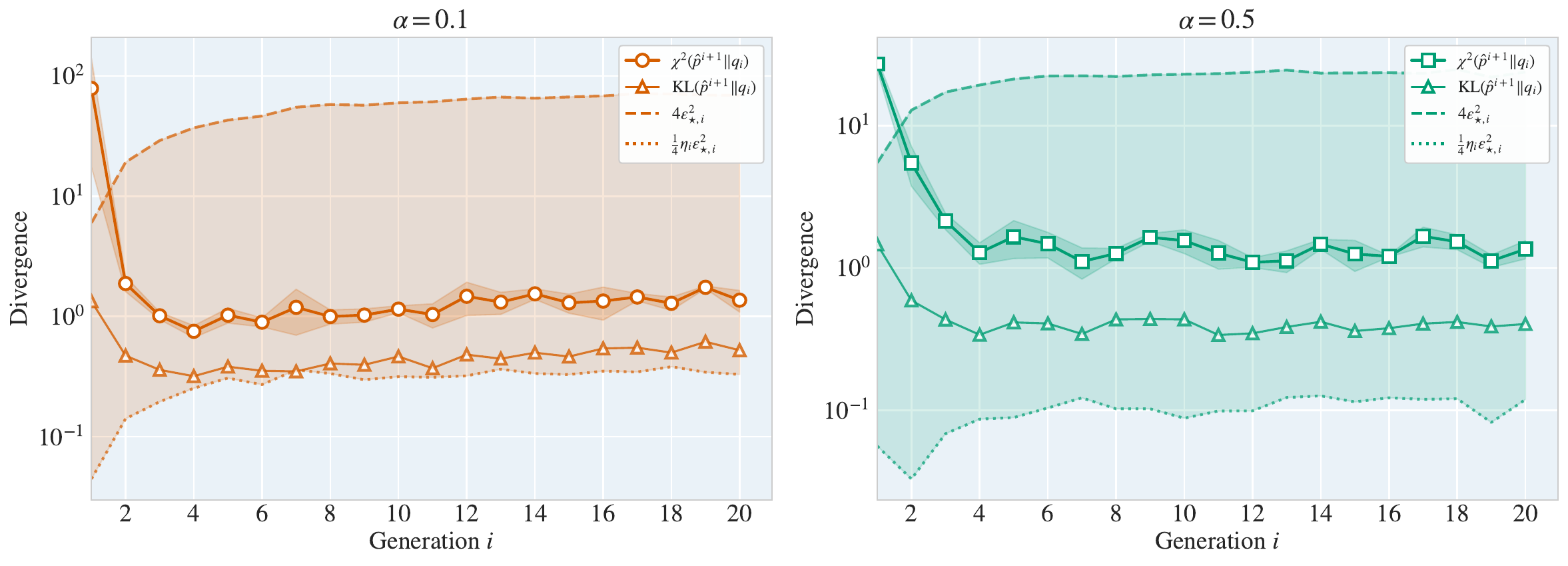}
    \caption{\textbf{Two-sided control of intra-generation divergence.} $\pdata \sim \tfrac{1}{5} \sum_{k=1}^{5} \mathcal{N}(\boldsymbol{\mu}_k, \sigma^2 \bI_{10})\in \R^{10}$.  We track the evolution of the divergences $I_i = \chi^2(\phat^{i+1} \| q_i)$ and $I_i^\mathrm{KL} = \mathrm{KL}(\phat^{i+1} \| q_i)$ (solid lines with markers) across 20 generations for low ($\alpha=0.1$, left) and high ($\alpha=0.5$, right) fresh data ratios. Consistent with Theorem \ref{thm:two-sided-main}, the intra-generational divergences are effectively bounded below and above by the pathwise score error energy (dashed upper bound $=4\varepsilon_{\star,i}^2$) and the observability-weighted lower bound (dotted line $=\frac{1}{4} \hat{\eta}_i \varepsilon_{\star,i}^2$, where $\hat{\eta}_i$ is the estimated observability of errors, as detailed in Appendix \ref{subsec:experimental-setup-10gmm}).  Shaded regions 
    indicate $\pm 1$ standard deviation across runs.}
    \label{fig:sandwich_bounds_verification}
\end{figure*}

\subsection{Two-sided equivalence}
\label{subsec:equivalence-main}

Proposition~\ref{prop:upper-main} controls the intra-generation divergence from above in terms of the learned-path energy $\hat\varepsilon_i^2$, whereas Proposition~\ref{prop:lower-main} provides a matching lower bound in terms of the ideal-path energy $\varepsilon_{\star,i}^2$, up to the observability factor $\eta_i$ and higher-order terms. To turn these complementary statements into a single equivalence $I_i=\chi^2(\hat p^{\,i+1}\|q_i)\asymp \varepsilon_{\star,i}^2,$ the two energies $\varepsilon_{\star,i}^2$ (under $\mathbb P_i^\star$) and $\hat\varepsilon_i^2$ (under $\hat{\mathbb P}_i$) as well as the two divergences $\mathrm{KL}$ (upper bound) and $\chi^2$ (lower bound) need to be related. Under Assumptions~\ref{ass:exponential-integrability-girsanov} and \ref{ass:quadratic-variation-concentration}, we show that these energies and divergences are equivalent. Combining these equivalence results with Propositions~\ref{prop:upper-main} and \ref{prop:lower-main} yields the following two-sided  control of the intra-generational divergence.

\begin{theorem}[Two-Sided bounds for intra-generational divergence]
\label{thm:two-sided-main}
Under Assumptions \ref{ass:finite-drift-girsanov}- \ref{ass:quadratic-variation-concentration}, the following holds for generations $i \ge i_0$. 
There exist constants $c > 0$ and $C < \infty$ (depending only on $\delta, C_\delta, \gamma, K_\gamma$) such that, when the perturbative condition $\varepsilon_{\star,i}^2 \le \min\{1,\ \frac{\eta_i}{8C}\}$ holds, we have:

\begin{equation*}
\frac{1}{4}\,\eta_i\,\varepsilon_{\star,i}^2 \;-\; C \cdot \varepsilon_{\star,i}^4
\;\le\;
\chi^2(\hat p^{\,i+1}\|q_i)
\;\le\;
4\,\varepsilon_{\star,i}^2 \;+\; c\varepsilon_{\star,i}^4.
\end{equation*}
In particular,  for sufficiently small $\varepsilon_{\star,i}^2$ and $\eta_i \ge \underline{\eta}>0$, we have
\begin{equation}
\chi^2(\hat p^{\,i+1}\|q_i)
\asymp
\varepsilon_{\star,i}^2.
\end{equation}
\end{theorem}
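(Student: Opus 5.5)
The lower bound is exactly Proposition~\ref{prop:lower-main}, so only the upper bound $\chi^2(\hat p^{\,i+1}\|q_i)\le 4\varepsilon_{\star,i}^2+c\,\varepsilon_{\star,i}^4$ needs proof. I will work with $\chi^2$ directly rather than passing through KL. By the projection identity \eqref{eq:marginal_projection} and conditional Jensen,
\[
\chi^2(\hat p^{\,i+1}\|q_i)=\E_{q_i}[(R_i-1)^2]\le \E_{\Prob_i^\star}\big[(e^{Z_T^i}-1)^2\big],
\]
which is the data-processing inequality for $\chi^2$ on path space. It therefore suffices to bound $\E_{\Prob_i^\star}[(e^{Z_T^i}-1)^2]$ by $4\varepsilon_{\star,i}^2$ plus a term of order $\varepsilon_{\star,i}^4$.

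Write $e^{Z}-1=Z+\rho(Z)$ with $|\rho(Z)|\le \tfrac12 Z^2 e^{|Z|}$, and use $(a+b)^2\le 2a^2+2b^2$. The leading term is
\[
\E[Z_T^2]\le 2\E[(M_T^i)^2]+\tfrac12\E[\langle M^i\rangle_T^2]=2\varepsilon_{\star,i}^2+O(\varepsilon_{\star,i}^4),
\]
by It\^o isometry together with \ref{ass:quadratic-variation-concentration}. Here Lyapunov's inequality gives $\E\langle M^i\rangle_T^2\le K_\gamma^{2/(2+\gamma)}\varepsilon_{\star,i}^4$. Combined with the factor $2$, this yields the constant $4$.

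For the remainder, apply H\"older with exponents matched to $\delta$:
\[
\E[Z^4e^{2|Z|}]\le \big(\E Z^{4p}\big)^{1/p}\big(\E e^{2p'|Z|}\big)^{1/p'}.
\]
The exponential factor is bounded via $e^{|Z|}\le e^{Z}+e^{-Z}$, since $\E e^{(1+\delta)Z}\le C_\delta$ by \ref{ass:exponential-integrability-girsanov}. The negative side $\E e^{-qZ}=\E e^{-qM+\frac q2\langle M\rangle}$ is controlled by Cauchy--Schwarz against a further exponential martingale, which produces a factor $e^{c\langle M\rangle}$. The polynomial factor is controlled by BDG: $\E|M_T|^{4p}\lesssim \E\langle M\rangle_T^{2p}$, and with $2p\le 2+\gamma$ this is $\lesssim \varepsilon_{\star,i}^{4}$ by \ref{ass:quadratic-variation-concentration}. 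The constraint $\gamma>4/(\delta-1)$ is precisely what permits choosing $p$ with $2p'\le 1+\delta$ while keeping $2p\le 2+\gamma$. The result is $\E\rho^2\le c\,\varepsilon_{\star,i}^4$.

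The main obstacle is the negative-exponential moment. \ref{ass:exponential-integrability-girsanov} only controls $e^{(1+\delta)Z}$, and $e^{-qZ}$ is not directly assumed. I will first try splitting on the event $\{Z\ge 0\}$: there $e^{|Z|}=e^Z$, which is covered by the assumption. On $\{Z<0\}$ I will not use the Taylor remainder and instead use $|e^Z-1-Z|\le \tfrac12 Z^2$, so no exponential appears at all. This sidesteps the issue cleanly and leaves only polynomial moments handled by BDG and \ref{ass:quadratic-variation-concentration}.

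Finally, the ``in particular'' statement follows directly. Under $\varepsilon_{\star,i}^2\le\min\{1,\eta_i/(8C)\}$, the lower side is at least $\tfrac18\underline\eta\,\varepsilon_{\star,i}^2$, and the upper side is at most $(4+c)\varepsilon_{\star,i}^2$, since $\varepsilon_{\star,i}^2\le1$.
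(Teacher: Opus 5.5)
Your argument is essentially the paper's. The paper also writes $R_i-1=\E[Z_T^i\mid\bY_{t_0}]+\E[e^{Z_T^i}-1-Z_T^i\mid\bY_{t_0}]$ and combines $(a+b)^2\le 2a^2+2b^2$ with conditional Jensen, which gives the same bound $2\E[(Z_T^i)^2]+2\E[(e^{Z_T^i}-1-Z_T^i)^2]$ as your path-space Jensen. It then controls the remainder as you do in your final version: $0\le e^z-1-z\le \tfrac{z^2}{2}e^{z_+}$, H\"older against $\E e^{(1+\delta)Z_T^i}\le C_\delta$, and BDG together with \ref{ass:quadratic-variation-concentration}.

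One step you leave implicit should be stated, since your bound depends on it. On $\{Z_T^i\ge 0\}$ one has $0\le Z_T^i\le M_T^i$, and this is what lets you replace $\E[|Z_T^i|^{4p}\mathbf 1_{\{Z_T^i\ge 0\}}]$ by $\E|M_T^i|^{4p}$. Without it you would need $\E\langle M^i\rangle_T^{4p}$, and $4p$ can exceed $2+\gamma$: for example, $(\delta,\gamma)=(3,3)$ gives $4p=8>5$.
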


The proof is given in Appendix \ref{section:two-sided-control-main}.
 This theorem shows that the intra-generational divergence is controlled by   the pathwise score-error energy and that the two are equivalent (up to constants), up to observability and tail effects. Figure \ref{fig:sandwich_bounds_verification} indicates these bounds hold in a fully data-driven experiment for both the $\chi^2$ and KL divergences.

\section{Error Accumulation Across Generations}
\label{sec:global}

Theorem~\ref{thm:two-sided-main} shows that when the score error is observable (i.e., $\eta_i>0$), the intra-generational divergence $I_i=\chi^2(\hat p^{i+1}\|q_i)$ is equivalent (up to constants) to $\varepsilon_{\star,i}^2$ in the perturbative regime. We now study how this divergence propagates through the recursion \eqref{eq:recursive-structure} over many generations, recalling our assumption that the refresh rate $\alpha >0$.

A key observation is that the refresh step contracts the accumulated divergence. Indeed, a simple calculation (Lemma \ref{lem:refresh-exact}) shows that  $\chi^2(q_{i}\,\|\,\pdata) \;=\; (1-\alpha)^2\,\chi^2(\hat p^i\,\|\, \pdata)$. In contrast, the score error in the next round of training increases the accumulated divergence. These two competing effects determine the evolution of $D_i=\chi^2(\hat{p}_{i}\,\|\,\pdata)$ as $i$ grows. We  distinguish two cases, assuming small score errors in each generation: if $\sum_{i\ge0}\varepsilon_{\star,i}^2=\infty$, then we show that $(D_i)_{i \geq 0}$ are non-summable. On the other hand, if $\sum_{i\ge0}\varepsilon_{\star,i}^2<\infty$, we show that (up to constants) $D_i$ is given by a discounted sum of score error energies from past generations.

\subsection{Persistent Errors}
\label{subsection:regime-A-main}

\begin{proposition}[Persistent Errors]
\label{prop:persistent-errors-main}
Assume \ref{ass:finite-drift-girsanov}-\ref{ass:quadratic-variation-concentration}, and that for $i \ge i_0$, $\eta_i \ge \underline{\eta} >0$ and  $\varepsilon_{\star, i}^2 \leq \min\{1,\ \frac{\eta_i}{8C}\}$. Then:

\noindent\emph{(i) Non-summable score error implies non-summable global drift.}
If $\sum_{i\ge 0}\varepsilon_{\star,i}^2=+\infty$, then $\sum_{i\ge 0}D_i=+\infty$.

\noindent\emph{(ii) A score-error floor implies persistent accumulated divergence.}
If $\varepsilon_{\star,i}^2\ge \underline{\varepsilon}^2>0$ for all sufficiently large $i$,
then 
\[ 
\limsup_{i\to\infty}D_i  \ge \frac{\alpha}{16(1+(1-\alpha)^2)}\cdot \underline{\eta}\,\underline\varepsilon^2,
\]
In particular, the sequence $(D_i)$ exhibits infinitely many macroscopic deviations from $\pdata$.
\end{proposition}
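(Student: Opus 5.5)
The plan is to turn the one-step lower bound of Proposition~\ref{prop:lower-main} into a recursive inequality linking $D_{i+1}$ and $D_i$, and then sum it (for (i)) or take limits (for (ii)). Under the stated hypotheses ($\eta_i\ge\underline\eta$ and $\varepsilon_{\star,i}^2\le\min\{1,\eta_i/(8C)\}$ for $i\ge i_0$), Proposition~\ref{prop:lower-main} gives
\[
I_i=\chi^2(\hat p^{\,i+1}\|q_i)\ \ge\ \tfrac18\,\underline\eta\,\varepsilon_{\star,i}^2 \qquad (i\ge i_0).
\]
So it suffices to bound $I_i$ from above by a combination of $D_{i+1}$ and $D_i$.

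For this upper bound, write $\chi^2(p\|q)=\int (p-q)^2/q$. The key observation is that the mixture satisfies $q_i=\alpha\pdata+(1-\alpha)\hat p^i\ge \alpha\,\pdata$ pointwise, because $\alpha>0$. Hence
\[
I_i=\int\frac{(\hat p^{\,i+1}-q_i)^2}{q_i}\ \le\ \frac1\alpha\int\frac{(\hat p^{\,i+1}-q_i)^2}{\pdata}.
\]
Next I split $\hat p^{\,i+1}-q_i=(\hat p^{\,i+1}-\pdata)-(q_i-\pdata)$ and apply $(a-b)^2\le 2a^2+2b^2$. I then use the refresh identity of Lemma~\ref{lem:refresh-exact}, namely $\int (q_i-\pdata)^2/\pdata=\chi^2(q_i\|\pdata)=(1-\alpha)^2 D_i$. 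This yields
\[
\alpha\, I_i\ \le\ 2D_{i+1}+2(1-\alpha)^2D_i.
\]
Combining with the lower bound on $I_i$ gives the key recursive inequality
\[
D_{i+1}+(1-\alpha)^2 D_i\ \ge\ \frac{\alpha\,\underline\eta}{16}\,\varepsilon_{\star,i}^2,\qquad i\ge i_0.
\]
The appearance of the constant $16$ in the statement suggests this is the intended route.

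For (i), sum the key inequality over $i\ge i_0$. The left-hand side is at most $(1+(1-\alpha)^2)\sum_{i\ge 0}D_i$. The right-hand side diverges, because discarding the finitely many terms with $i<i_0$ does not affect divergence of $\sum\varepsilon_{\star,i}^2$. Hence $\sum_i D_i=+\infty$.

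For (ii), argue by contradiction. Suppose $\limsup_i D_i<L:=\frac{\alpha\underline\eta\underline\varepsilon^2}{16(1+(1-\alpha)^2)}$. Then for all large $i$ we have both $D_i< L$ and $D_{i+1}< L$. Substituting into the key inequality gives $(1+(1-\alpha)^2)L>\alpha\underline\eta\underline\varepsilon^2/16$, which contradicts the definition of $L$. Hence $\limsup_i D_i\ge L$.

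The only delicate step is the comparison of $\chi^2$ divergences taken with respect to different reference measures ($q_i$ versus $\pdata$). It is handled by the pointwise domination $q_i\ge\alpha\pdata$. One should also check that all the integrals are finite, or else interpret them in $[0,\infty]$, which is harmless since every inequality used remains valid there.
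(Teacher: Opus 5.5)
Your proof is correct and follows the paper's argument: your bound $\alpha I_i\le 2D_{i+1}+2(1-\alpha)^2D_i$ is exactly Lemma~\ref{lem:Di-two-step-lower}, and you then combine it with $I_i\ge\frac18\underline\eta\,\varepsilon_{\star,i}^2$ and sum for (i), as the paper does (the paper derives the lemma from $\hat p^{\,i+1}/\pdata-1=(R_i-1)T_i+(T_i-1)$, $(A+B)^2\ge\frac12A^2-B^2$ and $T_i\ge\alpha$, which is the same computation in density form). The only difference is in (ii): the paper first proves a Ces\`aro-average lower bound and then argues by contradiction, whereas your one-step contradiction is shorter and actually shows $\max(D_i,D_{i+1})\ge \frac{\alpha\underline\eta\,\underline\varepsilon^2}{16(1+(1-\alpha)^2)}$ for every large $i$.
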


\subsection{Controlled Errors}

\begin{figure*}
    \centering
    \includegraphics[width=\textwidth]{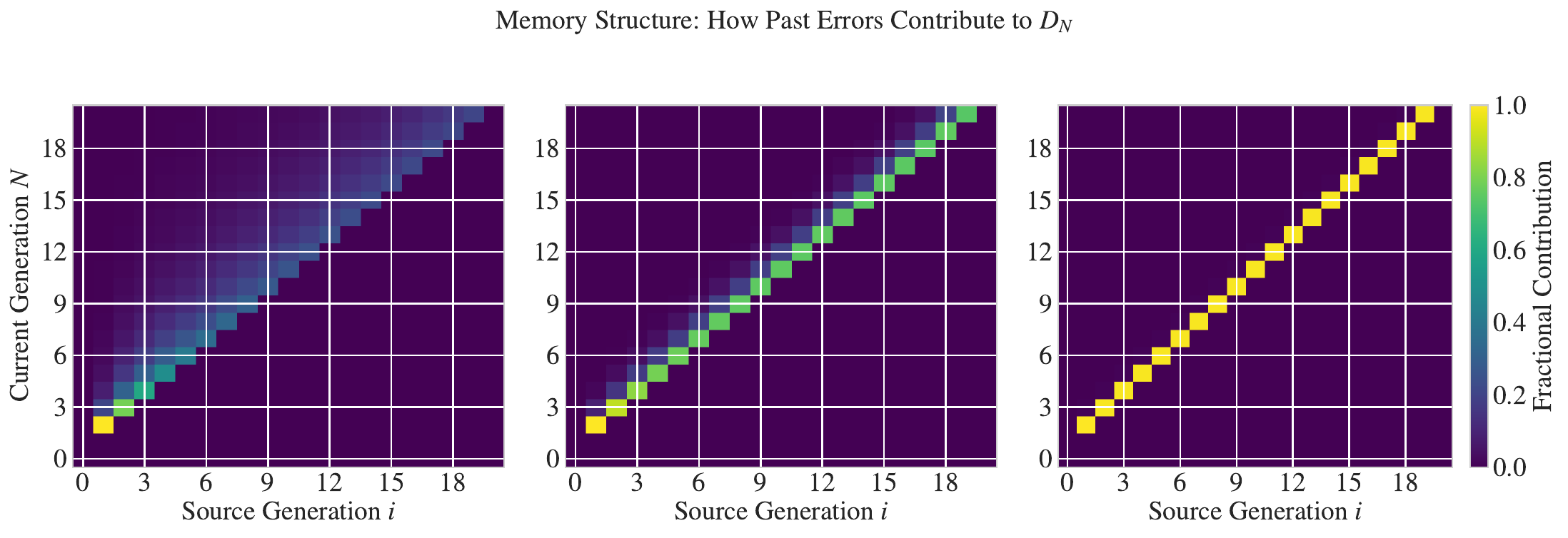}
    \caption{Geometrically-discounted decomposition of accumulated divergence in \eqref{eq:DN1}, for $\pdata = \tfrac{1}{5} \sum_{k=1}^{5} \mathcal{N}(\boldsymbol{\mu}_k, \sigma^2 \bI_{10})\in \R^{10}$. Each panel shows  contributions of  $\varepsilon_{\star,i}^2$ to the current global divergence $D_N = \chi^2(\phat^N\,\|\, p_{\text{data}})$, for $i \le N$. \textbf{Left} ($\alpha=0.1$): Errors persist 
    across many generations, creating a wide band of contributions. \textbf{Center} ($\alpha=0.5$): Intermediate regime with moderate memory decay. \textbf{Right} ($\alpha=0.9$): Short memory---only the most recent generation dominates, yielding a sharp diagonal structure. The plots are consistent with Theorem \ref{thm:divergence-decomposition-finite-horizon-main}, confirming that higher fractions of fresh data accelerate the forgetting of past errors. Experiment details in Appendix \ref{subsec:experimental-setup-10gmm}.}
    \label{fig:memory_heatmap}
\end{figure*}
We now bound the accumulated divergence $D_i$ when the score errors $\varepsilon_{\star, i}^2$ are summable, under a technical assumption defined in terms of an adaptive tail set. 

\paragraph{An adaptive tail assumption}
For a constant $\zeta_i \in (0,1)$, define the adaptive ``good'' set
\begin{align} 
\label{eq:Gi_def}
\mathcal G_i :=\{\bx:\left|\hat p^{\,i}(\bx)/\pdata(\bx)-1\right| \le \sqrt{D_i/\zeta_i}\}. 
\end{align}
Since $\E_{\pdata}[|\hat p^{\,i}(x)/\pdata \, -  \, 1 |^2 ] = D_i$, Chebyshev's inequality implies $\pdata(\mathcal G_i^c) \le \zeta_i$.
As $D_i$ grows, the set $\mathcal G_i$ expands and $\mathcal G_i^c$ consists of the regions where $\hat p^i/\pdata$ is increasingly atypical. We then assume that the training distribution $q_i$ does not over-emphasize this tail region in the following sense.
\begin{enumerate}[label=\textbf{(A\arabic*)}, ref=A\arabic*, leftmargin=*, resume=assumptions]
\item \label{ass:Ti-bound-Gc} 
There exist a generation $i_0\ge 0$ such that, for the $\delta$ in Assumption \ref{ass:exponential-integrability-girsanov} and $p':=\frac{\delta+1}{\delta-1}$, there exists a constant $C_\zeta >0$ such that,
\[
\sup_{i\geq i_0}\Big\{\zeta_i^{-1}\E_{q_i}\!\big[\Big(\frac{q_i}{\pdata}\Big)^{p'}\mathbf{1}_{\mathcal G_i^c}\big]\Big\}\le C_\zeta.
\]
\end{enumerate}
From the definition of $q_i$ in \eqref{eq:mixture}, we know that 
 $q_i/\pdata=\alpha+(1-\alpha)\hat p^{\,i}/ \pdata$,
which can blow up on $\mathcal G_i^c$ only if $\hat p^{\,i}/\pdata$ is large there.
Therefore, \eqref{ass:Ti-bound-Gc} is a local condition on a vanishing tail set, which rules out spurious heavy tails  that may cause the synthetic model to place large mass in regions where $\pdata$ is tiny.

\begin{theorem}[Discounted accumulation of errors]
\label{thm:divergence-decomposition-finite-horizon-main}
Assume \ref{ass:finite-drift-girsanov}-\ref{ass:Ti-bound-Gc}, and that for all $i\ge i_0$, we have $\eta_i \ge \underline{\eta} >0$ and  $\varepsilon_{\star,i}^2\le\min\Big\{1,\ \frac{\eta_i}{8C}\Big\}$. 
Also assume that 
$$
\sum_{i=0}^\infty \varepsilon_{\star,i}^2 < \infty.
$$
Then, there exists a constant $C_{\mathrm{bias}}>0$ defined in Equation \eqref{eq:c_biais} such that for each generation $N \ge i_0$,
\begin{equation}
\begin{split}
& D_{N+1}+C_{\mathrm{bias}} \\
& \asymp\sum_{i=i_0}^{N}(1-\alpha)^{2(N-i)}\,\varepsilon_{\star,i}^2 +(1-\alpha)^{2(N+1-i_0)}D_{i_0}.
\end{split}
    \label{eq:DN1}
\end{equation}
\end{theorem}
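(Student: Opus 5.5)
The plan is to derive a one-step two-sided recursion for $D_{i+1}$ in terms of $D_i$ and $\varepsilon_{\star,i}^2$, and then unroll it. Write $r_i := q_i/\pdata = \alpha + (1-\alpha)\hat p^{\,i}/\pdata$ and $R_i = \hat p^{\,i+1}/q_i$, so that $\hat p^{\,i+1}/\pdata = R_i r_i$. Expanding the square gives
\begin{align*}
D_{i+1} &= \E_{\pdata}\big[(R_i r_i - 1)^2\big] \\
&= \E_{\pdata}\big[(r_i-1)^2\big] + \E_{\pdata}\big[r_i^2 (R_i-1)^2\big] + 2\,\E_{\pdata}\big[r_i(r_i-1)(R_i-1)\big].
\end{align*}
By Lemma~\ref{lem:refresh-exact}, the first term equals $(1-\alpha)^2 D_i$. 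The second term equals $\E_{q_i}[r_i (R_i-1)^2]$, which is a reweighted version of $I_i=\chi^2(\hat p^{\,i+1}\|q_i)$. The third term is a cross term.

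\textbf{Step 1: the main term.} The aim is $\E_{q_i}[r_i(R_i-1)^2] \asymp I_i$, up to an additive error. For the lower bound, restrict to $\mathcal G_i$, where $r_i \ge \alpha$ always holds; in fact $r_i\ge\alpha$ everywhere, so the lower bound $\ge \alpha I_i$ is immediate. For the upper bound, split into $\mathcal G_i$ and $\mathcal G_i^c$.
\begin{itemize}
\item On $\mathcal G_i$, $r_i \le 1+(1-\alpha)\sqrt{D_i/\zeta_i}$, which is bounded because $D_i$ stays bounded. This boundedness is to be proved simultaneously by induction using the summability of $\varepsilon_{\star,i}^2$.
\item On $\mathcal G_i^c$, apply H\"older with exponents $p'=\frac{\delta+1}{\delta-1}$ and its conjugate $\frac{\delta+1}{2}$. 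This gives $\E_{q_i}[r_i^{p'}\mathbf 1_{\mathcal G_i^c}]^{1/p'}\,\E_{q_i}[|R_i-1|^{\delta+1}]^{2/(\delta+1)}$. The first factor is $\le (C_\zeta\zeta_i)^{1/p'}$ by \ref{ass:Ti-bound-Gc}.
\item For the second factor, use $\E_{q_i}|R_i-1|^{1+\delta}$. By Jensen applied to \eqref{eq:marginal_projection}, this is controlled by $\E_{\Prob_i^\star}|e^{Z_T^i}-1|^{1+\delta}$. Combining \ref{ass:exponential-integrability-girsanov} and \ref{ass:quadratic-variation-concentration} via interpolation, this is $O(\varepsilon_{\star,i}^{2})$-small.
\end{itemize}
Together with Theorem~\ref{thm:two-sided-main} and $\eta_i\ge\underline\eta$, this makes the main term $\asymp \varepsilon_{\star,i}^2$.

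\textbf{Step 2: the cross term.} By Cauchy--Schwarz, $|2\E_{\pdata}[r_i(r_i-1)(R_i-1)]| \le 2\sqrt{(1-\alpha)^2D_i}\,\sqrt{\E_{q_i}[r_i(R_i-1)^2]}$. This is of size $\sqrt{D_i}\,\varepsilon_{\star,i}$ and is not of higher order, which is the main obstacle. I would absorb it using $2ab \le \theta a^2 + \theta^{-1}b^2$, producing
$$(1-\alpha)^2(1\mp\theta)D_i + c_\pm\varepsilon_{\star,i}^2.$$
Choosing $\theta$ small enough keeps the contraction factor below 1, but the factor then differs from $(1-\alpha)^2$. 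The discrepancy between $(1-\alpha)^2(1\pm\theta)$ and $(1-\alpha)^2$ accumulates multiplicatively, and this is where I expect the additive constant $C_{\mathrm{bias}}$ to come from. The remaining defect is bounded uniformly by $\sup_i D_i$ together with the summability $\sum_i \varepsilon_{\star,i}^2<\infty$, and I would define $C_{\mathrm{bias}}$ accordingly so that it is a fixed constant.

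\textbf{Step 3: unrolling.} Iterate the two-sided inequality $D_{i+1} \asymp (1-\alpha)^2 D_i + \varepsilon_{\star,i}^2$, modulo the bias, from $i_0$ to $N$. This yields the geometric sum $\sum_{i=i_0}^N (1-\alpha)^{2(N-i)}\varepsilon_{\star,i}^2 + (1-\alpha)^{2(N+1-i_0)}D_{i_0}$ and hence \eqref{eq:DN1}. The a priori bound $\sup_i D_i<\infty$ needed in Step 1 follows from the upper half of the same recursion by induction, since $\sum_i\varepsilon_{\star,i}^2<\infty$.
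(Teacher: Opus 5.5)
There is a genuine gap, and it is in the lower bound. Your upper half is sound and is a legitimate variant of the paper's route. You expand the square and absorb the cross term by Young. The paper instead applies Minkowski to $\sqrt{D_{i+1}}=\|A_i+T_i-1\|_{L^2(\pdata)}$, which keeps the exact contraction $(1-\alpha)$ without a $\theta$-loss. Your split of $\E_{q_i}[r_i(R_i-1)^2]$ over $\mathcal G_i$ and $\mathcal G_i^c$ is simpler than the paper's extra $\mathcal A_i(\rho)$ decomposition. Deriving $\sup_i D_i<\infty$ from the upper recursion is exactly what Theorem~\ref{thm:perturbative-stability-sum} does.

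\textbf{Where the lower half fails.} Take $a=(1-\alpha)\sqrt{D_i}$ and $b^2=\E_{q_i}[r_i(R_i-1)^2]$. Cauchy--Schwarz together with $2ab\le\theta a^2+\theta^{-1}b^2$ gives only $D_{i+1}\ge(1-\theta)a^2+(1-\theta^{-1})b^2$.
\begin{itemize}
\item For $\theta<1$, the coefficient of $b^2$ (hence of $\varepsilon_{\star,i}^2$) is negative.
\item For $\theta>1$, the coefficient of $D_i$ is negative.
\item The best you can get is $(a-b)^2$, which may vanish.
\end{itemize}
This is not an artifact of the inequality: a one-step bound $D_{i+1}\gtrsim(1-\alpha)^2D_i+\varepsilon_{\star,i}^2$ is false in general. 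The generation-$i$ error can undo the inherited drift. For instance, if the learned score coincides with the score of the noised $\pdata$, then $\hat p^{\,i+1}=\pdata$ and $D_{i+1}=0$, although $D_i>0$ and $\varepsilon_{\star,i}^2>0$. So your Step~3 cannot be run on the lower side. Also, $C_{\mathrm{bias}}$ is not compensating for a multiplicative mismatch between $(1-\alpha)^2(1\pm\theta)$ and $(1-\alpha)^2$.

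\textbf{The missing idea.} Accept the negative coefficient on $D_i$ and move it to the left. Taking $\theta=2$ gives Lemma~\ref{lem:Di-two-step-lower}:
\[
D_{i+1}+(1-\alpha)^2D_i\ge\tfrac{\alpha}{2}I_i\ge\tfrac{\alpha\underline\eta}{16}\varepsilon_{\star,i}^2 .
\]
Multiply the accumulated inequality by $(1-\alpha)^2$ and add the next step; induction then yields
\[
D_N+(1-\alpha)^{2(N-i_0)}D_{i_0}+2\sum_{k=i_0+1}^{N-1}(1-\alpha)^{2(N-k)}D_k\;\ge\;\frac{\alpha\underline\eta}{16}\,S_N,\qquad S_N:=\sum_{i=i_0}^{N-1}(1-\alpha)^{2(N-1-i)}\varepsilon_{\star,i}^2 .
\]
Only at this point is $\sup_k D_k\le D_{\max}$ used, to bound the history sum by a geometric series. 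That is the source of the lower-side bias. It is also why the initial term enters the lower bound with a minus sign, as $\frac{\alpha\underline\eta}{16}\big(S_N-(1-\alpha)^{2(N-i_0)}D_{i_0}\big)$, not with the plus sign your unrolling would produce.

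\textbf{A smaller point in Step~1.} Assumption \ref{ass:exponential-integrability-girsanov} controls exactly the $(1+\delta)$-th exponential moment. So $\E_{q_i}|R_i-1|^{1+\delta}$ is only bounded, by $2^\delta(C_\delta+1)$; it is not $O(\varepsilon_{\star,i}^2)$. Interpolation with \ref{ass:quadratic-variation-concentration} gives smallness only for exponents strictly below $1+\delta$, and \ref{ass:Ti-bound-Gc} fixes the H\"older pair, so there is no room to lower the exponent. The $\mathcal G_i^c$ piece is therefore an additive constant $K_\delta\zeta_i^{(\delta-1)/(1+\delta)}$. This is harmless, since it goes into the upper-side bias as in the paper. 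But your main term is $\asymp\varepsilon_{\star,i}^2$ only up to that constant.
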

The proof is given in Appendix \ref{subsection:divergence-decomposition-finite-horizon-proof}, where we derive upper and lower bounds for  $D_{N+1}$ with explicit constants. Theorem \ref{thm:divergence-decomposition-finite-horizon-main} shows that in the regime where the score errors are small and summable, the accumulated divergence is a geometrically-discounted sum of square errors. In particular, errors from $m$ generations in the past are suppressed by a factor $(1-\alpha)^{2m}$, and the larger the proportion of fresh training samples in each round, the higher the rate of ``forgetting" score errors. Qualitatively, this is consistent with the findings in many different settings that incorporating fresh data in each training round can mitigate model collapse \cite{gerstgrasser2024collapse, analyzing-mitigating-model-collapse, bertrand2024stability, dey2025universality}.

Figure \ref{fig:memory_heatmap} illustrates the decomposition of Theorem  \ref{thm:divergence-decomposition-finite-horizon-main} for a $10$-dimensional Gaussian mixture, tracking the contribution to $D_{N+1}$ of each term in the sum in \eqref{eq:DN1}. Figure \ref{fig:Fashion-MNIST-memory} in Appendix \ref{subsec:experiments-fashion_mnist} shows the same behavior on the Fashion-MNIST  dataset \cite{fashionMNIST_paper}). Figure \ref{fig:10gmm-accumulation} in Appendix \ref{subsec:experimental-setup-10gmm} illustrates that the functional dependence predicted by Theorem \ref{thm:divergence-decomposition-finite-horizon-main} accurately captures the observed growth across generations in the 10-dimensional Gaussian mixture setting.

\section{Conclusion and Future Work}

We analyzed recursive training of diffusion models with a fixed proportion $\alpha >0$ of fresh data in each training round. We derived lower and upper bounds on the intra-generation and accumulated divergences, in terms of the path-wise energy of the score error in each generation. An essential component of our lower bound is the observability coefficient in \eqref{eq:eta}, which quantifies the fraction of the path-wise error energy that manifests in the learned distribution. 
The paper focuses on the setting where the score error energy in each generation is small. An important direction for future work is to establish bounds in the large error regime. Obtaining a lower bound in this regime, even for one generation, is challenging because the ratio between the learned and ideal densities (given by the projection of the path-wise ratio in \eqref{eq:marginal_projection}) cannot be linearized with reasonable control of the remainder term. Another direction for further work is to take into account the discretization error due to the diffusion being implemented in discrete-time. Finally, a key open question is: is there a limiting distribution to which the model converges  when recursively trained, and, if so, how it depends on $\alpha$ and $\pdata$?  

\section*{Acknowledgements}
NBK is supported by a G-Research Trinity College Studentship, and  RET is supported in part by the EPSRC Probabilistic AI Hub (EP/Y028783/1). RV was supported in part by an EPSRC Mathematical Sciences Small Grant.

\bibliographystyle{icml2026}
\bibliography{references}

\clearpage
\appendix
\onecolumn

\section{Preliminaries and notation for Appendices}
\label{sec:notation_appendix}

We consider reverse-time diffusions on the interval $[t_0,T]$ (integrated from $T$ down to $t_0$).
Let $\bar{\bB}=(\bar{\bB}_s)_{s\in[t_0,T]}$ denote a standard Brownian motion in reverse time on a complete filtered probability space $(\Omega,\cF,(\cF_s)_{s\in[t_0,T]}, \Prob)$.

\smallskip
\noindent\textbf{Indexing}
The superscript $i$ denotes the \emph{generation} (self-training iteration). Within each generation we compare:
(i) an \emph{ideal} reverse process (driven by the exact score) and
(ii) a \emph{learned} reverse process (driven by the estimated score).

\smallskip
\noindent\textbf{Reverse-time SDEs and path measures}
For $s\in[t_0,T]$, the ideal and learned reverse-time processes solve
\[
\mathrm{d}\bY^{i,\star}_s=\Big[-\tfrac12\bY^{i,\star}_s-\bs_i^\star(\bY^{i,\star}_s,s)\Big]\mathrm{d}s+\mathrm{d}\bar{\bB}_s,
\qquad
\mathrm{d}\hat{\bY}^{\,i}_s=\Big[-\tfrac12\hat{\bY}^{\,i}_s-\bs_{\theta_i}(\hat{\bY}^{\,i}_s,s)\Big]\mathrm{d}s+\mathrm{d}\bar{\bB}_s,
\]
with independent initializations $\bY^{i,\star}_T,\hat{\bY}^{\,i}_T\sim\cN(0,I_d)$ (since we ignore the initialization mismatch, i.e. assume $q_{i,T} = \cN(0, \bI_d)$).
We write $\Prob_i^\star$ (resp.\ $\hat{\Prob}_i$) for the law on path space $C([t_0,T],\R^d)$ induced by
$\bY^{i,\star}$ (resp.\ $\hat{\bY}^{\,i}$). We write for any $s \in [t_0, T]$ $q_{i,s}=\law(\bY^{i,\star}_s)$ and $\hat{q}_{i,s}=\law(\hat\bY^i_s)$.

\smallskip
\noindent\textbf{Marginal likelihood ratios}
For any $\bx \in \R^d$, we denote the marginal density ratios by
$$
R_i(\bx):=\frac{\hat p^{\,i+1}(\bx)}{q_i(\bx)}, \qquad T_i(\bx):=\frac{q_i(\bx)}{p_{\text{data}}(\bx)}, \qquad \text{ where } q_i := \alpha \pdata+(1-\alpha)\hat p^{\,i}.
$$

\smallskip
\noindent\textbf{Divergences}
We denote, for a generation $i\geq0$ and $\alpha \in (0, 1)$.
$$
I_i:=\chi^2(\hat p^{\,i+1}\|q_i)=\E_{q_i}[(R_i-1)^2], \qquad D_i:= \chi^2(\hat p^{\,i}\|p_{\text{data}}).
$$

\smallskip
\noindent\textbf{Norms} Let $\mu$ denote a probability distribution on $\R^d$, assumed absolutely continuous with respect to the Lebesegue measure on $\R^d$, and with Radon-Nikodym derivative (w.r.t the Lebesgue measure on $\R^d$) denoted $\mu$ by abuse of notations. We denote the $L^2(\mu)$ the space of vector fields of finite second moment w.r.t. $\mu$. It is an Hilbert space when equipped with the inner product, 
$$
\langle \bv, \bu \rangle_{L^2(\mu)} = \int_{\R^d} \bv(\bx)\cdot \bu(\bx)\, \mu(\bx) \md \bx.
$$
This allows to define the $L^2(\mu)$ norm, for any $\bv \in L^2(\mu)$ as,
$$
\| \bv \|_{L^2(\mu)} = \left(\int_{\R^d} \|\bv(\bx)\|_2^2 \, \mu(\bx)\md \bx \right)^{1/2}.
$$
More generally, for any $r \neq 0$, $L^r(\mu)$ is a Banach space with associated norm defined for any $\bv \in L^r(\mu)$,
$$
\|\bv\|_{L^r(\mu)} = \left(\int \|\bv(\bx) \|_2^r \, \mu(\bx)\md \bx\right)^{1/r}.
$$
\smallskip
\noindent\textbf{Score error and energy}
The score (drift) discrepancy is defined for all $\bx \in \R^d$ and all times $t \in [t_0, T]$ as,
$$
\be_{i,t}(\bx):=\bs_{\theta_i}(\bx,t)-\bs_i^\star(\bx,t).
$$
We recall the two score error energies introduced in \eqref{eq:pathwise-score-error-energy_star}-\eqref{eq:pathwise-score-error-energy_hat}):
$$
\varepsilon_{\star,i}^2 := \int_{t_0}^T \|\be_{i,s}\|^2_{L^2(q_{i,s})}\,\md s,
\qquad
\hat\varepsilon_i^2 := \int_{t_0}^T \|\be_{i,s}\|^2_{L^2(\hat{q}_{i,s})}\,\md s.
$$

\smallskip
\noindent\textbf{Perturbative Regime}
The perturbative regime refers to the setting where there exists a generation $i_0 \ge 0$ such that, 
$$
\varepsilon_{\star,i}^2 \le \min\Big\{1,\ \frac{\eta_i}{8C}\Big\}, \qquad \forall i\ge i_0,
$$
where the constant $C:= \frac{1+K_\gamma}{8} \;+\; \frac14\,
\Big(C_{\mathrm{BDG}}(4p')\,K_{\gamma}^{\frac{2p'}{2+\gamma}}\Big)^{\!1/p'}\, C_\delta^{1/p}$ is defined in Proposition \ref{prop:lower-main-explicit}.

\paragraph{Girsanov Quantitites} For all generation $i \ge 0$, we define the time-integrated random error,
\begin{align}
    M_t^i := -\int_{t_0}^t \be_{i,s} \cdot \mathrm{d}\bar{\bB}_s,
\label{eq:MT_i-2}
\end{align}
and its quadratic variation 
 \begin{align}
     \langle M^i \rangle_t = \int_{t_0}^t \|
\be_{i,s}\|_2^2\,\md s.
\label{eq:MTi_QV-2}
 \end{align}
Importantly, we note that both $M_t^i$ and $\langle M^i \rangle_t$ are \textit{functions of a random trajectory} $(\bY_s)_{s \in [t_0,t]}$, distributed according to a path-space measure, e.g.  $\Prob_i^\star$ or $\hat \Prob_i$. We additionally define $Z_t^i := M_t^i - \frac{1}{2}\langle M^i \rangle_t$, which is also a function of random trajectories.
    
\section{Intra-generational likelihood ratios via Girsanov's theorem}
\subsection{Proof of Upper Bound (Proposition \ref{prop:upper-main})}
\label{subsection:proof-upper-bound}
We recall the definitions of the ideal and learned reverse-time processes on $s \in [t_0, T]$ (integrated backward from $T$ to $t_0$) from \eqref{eq:reverse-ideal} and \eqref{eq:reverse-learned}, both initialized at $\cN(0,\bI_d)$. Recall we defined two path-space probability measures $\Prob_i^\star$ and $\hat{\Prob}_i$ on the path space $C([t_0, T], \mathbb{R}^d)$:
\begin{itemize}
    \item $\Prob_i^\star$: The law of the \textit{ideal} reverse-time process \eqref{eq:reverse-ideal}. It is initialized at end time $T$ with Gaussian noise, and its dynamics are driven by the exact score $\mathbf{s}_i^\star$. Its terminal marginal at $t_0$ is the current mixing distribution $q_i$.
    \item $\hat{\Prob}_i$: The law of the \textit{learned} reverse-time process \eqref{eq:reverse-learned}. It is initialized at $T$ with Gaussian noise, and its dynamics are driven by the approximate score $\mathbf{s}_{\theta_i}$. Its terminal marginal at $t_0$ is the next-generation distribution $\phat^{i+1}$.
\end{itemize}

Recall that we  operate under the following assumptions:
\begin{itemize}
    \item[\textbf{\ref{ass:finite-drift-girsanov}}] \textbf{Finite drift energy.} The learned path measure satisfies $\hat{\varepsilon}_i^2 < \infty$.
    
    \item[\textbf{\ref{ass:exponential-is-a-martingale}}] \textbf{Martingale property.} 
    For any $s \in [t_0, T]$, recalling the definitions of $M_s^i$ and $\langle M^i \rangle_s$ from \eqref{eq:MT_i-2} and \eqref{eq:MTi_QV-2}, we define the Doléans--Dade exponential 
    \begin{equation}
        \label{eq:doleans-dade-exponential-def}
        U_s^i :=  \exp\left( M_s^i - \frac{1}{2}\langle M^i \rangle_s \right) = \exp(Z_s^i).
    \end{equation}
    We assume $(U_s^i)_{s \in [t_0, T]}$ defines a true $\Pstar_i$-martingale on $[t_0, T]$.
\end{itemize}

\paragraph{Discussion on Assumptions \ref{ass:finite-drift-girsanov} and \ref{ass:exponential-is-a-martingale}.} These assumptions ensure that the Girsanov transformation yields a valid probability measure (i.e., $\E_{\Prob_i^\star}[U_T]=1$).
While $U_s$ is guaranteed to be a local martingale by construction, different sufficient conditions can be found in the literature to guarantee the full martingale status. The most common ones are:
\begin{itemize}
    \item \emph{Novikov's condition} \cite{novikov}:
    $$
        \E_{(\bY_s)_{s \in [t_0, T]} \sim \Prob_i^\star}\left[\exp\left(\frac{1}{2}\int_{t_0}^T \|\be_{i,s}(\bY_s)\|^2_2\,\mathrm{d}s\right)\right] < \infty.
    $$
    \item \textit{Beneš Condition \cite{benes}}: the drift discrepancy satisfies a linear growth bound 
    $$
    \|\be_{i,t}(\mathbf{x})\|_2 \leq C(1 + \|\mathbf{x}\|_2), \qquad C > 0.
    $$
    In the context of diffusion models parameterized by neural networks, this holds if the network weights are finite and the activation functions are Lipschitz (e.g., ReLU, SiLU) or bounded (e.g., Tanh, Sigmoid), provided the domain is effectively bounded or the growth is controlled.
    \item \textit{Stopping Time Condition.} Finally, another popular assumption relies on stopping times. Formally, the exponential process $(U_s)_{s \in [t_0, T]}$ is always a \emph{local martingale}. This implies the existence of a non-decreasing sequence of stopping times $\{\tau_n\}_{n \ge 1}$ such that $\tau_n \uparrow T$ almost surely as $n \to \infty$, and for every $n$, the stopped process $(U_{s \wedge \tau_n})_{s \in [t_0, T]}$ is a true martingale. A canonical choice for these stopping times is defined by the accumulation of the drift energy:
    \begin{equation}
        \tau_n := \inf \left\{ s \in [t_0, T] : \int_{t_0}^s \|\be_{i,u}\|^2_2 \,\mathrm{d}u \ge n \right\} \wedge T.
    \end{equation}
    By definition, the accumulated energy up to time $\tau_n$ is bounded by $n$. Consequently, the conditional Novikov criterion is trivially satisfied for the stopped process:
    $$
        \E_{(\bY_s)_{s\in [t_0,T] \sim \Prob_i^\star}}\left[\exp\left(\frac{1}{2}\int_{t_0}^{\tau_n} \|\be_{i,s}(\bY_s)\|^2_2\,\mathrm{d}u\right)\right] \le e^{n/2} < \infty.
    $$
    This guarantees that $\E[U_{\tau_n}^i] = 1$ for all $n$. Assumption~\ref{ass:exponential-is-a-martingale} is therefore equivalent to the condition that the sequence is uniformly integrable, allowing the equality to hold in the limit: $\E[U_T^i] = \lim_{n \to \infty} \E[U_{\tau_n}^i] = 1$. This stopping time condition has been used in previous analyses of the convergence properties of diffusion models \cite{sampling-is-as-easy-as-learning-the-score, benton2024nearly}. 

\end{itemize}

\begin{proof}[Proof of Proposition \ref{prop:upper-main}]

\medskip
\textit{Path-space Radon-Nikodym Derivative}
The ideal process and learned reverse processes are defined on the filtered probability space $(\Omega, \mathcal{F}, (\mathcal{F}_s)_{s \in [t_0, T]}, \Prob_i^\star)$ equipped with the standard Brownian motion $\bar{\mathbf{B}}$. They share the same diffusion coefficient and their drifts terms differ by exactly $\Delta \mathbf{b}_s = -\be_{i,s}$.
This observation allows the use of Girsanov's theorem \cite{girsanov-cameron-and-martin, girsanov1960transforming, sto-calculus-le-gall} to relate their path-space laws. The Doléans-dade exponential \cite{DoleansDade1970} $(U^i_s)_{s\in[t_0,T]}$ defined in \eqref{eq:doleans-dade-exponential-def} by $U_s^i = \exp\left( M_s^i - \frac{1}{2}\langle M^i \rangle_s \right)$ is, by It\^o's formula \cite{sto-calculus-2, sto-calculus-le-gall}, a nonnegative local martingale under $\Prob_i^\star$. By Assumption~\ref{ass:exponential-is-a-martingale}, $(U_s^i)_{s\in[t_0,T]}$ is in fact
a (uniformly integrable) martingale, so in particular $\E_{\Prob_i^\star}[U_T^i]=U_{t_0}^i=1$.
Therefore, by Girsanov's theorem, the Radon-Nikodym derivative between $\hat{\Prob}_i$ and $\Prob^\star_i$ is given by,
$$
\frac{\mathrm{d}\hat{\Prob}_i}{\mathrm{d}\Prob_i^\star} ((\bY_s)_{s\in[t_0, T]}) = U_T^i = \exp\left( -\int_{t_0}^T \be_{i,s}(\bY_s) \cdot \mathrm{d}\bar{\bB}_s - \frac{1}{2}\int_{t_0}^T \|\be_{i,s}(\bY_s)\|^2_2\,\mathrm{d}s \right).
$$
Under this new measure $\hat{\Prob}_i$, the process $\hat{\bB}_s := \bar{\mathbf{B}}_s + \int_{t_0}^s \be_{i,u}(\bY_u) \mathrm{d}u$ is a standard Brownian motion (by the  Girsanov theorem), and the coordinate process $\bY$ satisfies the SDE of the learned reverse diffusion \eqref{eq:reverse-learned}. Thus, under $\hat{\Prob}_i$, we can write $\mathrm{d}\bar{\mathbf{B}}_s = \mathrm{d}\hat{\mathbf{B}}_s - \be_{i,s}(\hat{\bY}_s)\mathrm{d}s$. 

\medskip
\textit{KL Computations.}
To compute the Kullback-Leibler divergence defined as $\mathrm{KL}(\hat{\Prob}_i \| \Prob_i^\star) = \E_{\hat{\Prob}_i}[\log \frac{\mathrm{d}\hat{\Prob}_i}{\mathrm{d}\Prob_i^\star}]$, we first rewrite the log-likelihood ratio,
\begin{align*}
\log \frac{\mathrm{d}\hat{\Prob}_i}{\mathrm{d}\Prob_i^\star}((\bY_s)_{s\in[t_0, T]})
&= -\int_{t_0}^T \be_{i,s}(\bY_s) \cdot \mathrm{d}\bar{\mathbf{B}}_s - \frac{1}{2}\int_{t_0}^T \|\be_{i,s}(\bY_s)\|^2_2\,\mathrm{d}s \\
&= -\int_{t_0}^T \be_{i,s} \cdot (\mathrm{d}\hat{\mathbf{B}}_s - \be_{i,s} \mathrm{d}s) - \frac{1}{2}\int_{t_0}^T \|\be_{i,s}(\bY_s)\|^2_2\,\mathrm{d}s \\
&= -\int_{t_0}^T \be_{i,s}(\bY_s) \cdot \mathrm{d}\hat{\mathbf{B}}_s + \frac{1}{2}\int_{t_0}^T \|\be_{i,s}(\bY_s)\|^2_2\,\mathrm{d}s.
\end{align*}
Taking  expectation under $\hat{\Prob}_i$:
$$
\E_{\hat{\Prob}_i}\left[\log \frac{\mathrm{d}\hat{\Prob}_i}{\mathrm{d}\Prob_i^\star}\right]
=
\E_{\hat{\Prob}_i}\left[-\int_{t_0}^T \be_{i,s} \cdot \mathrm{d}\hat{\mathbf{B}}_s\right]
+
\frac{1}{2}\E_{\hat{\Prob}_i}\left[\int_{t_0}^T \|\be_{i,s}(\bY_s)\|^2_2\,\mathrm{d}s\right].
$$
The first term is the expectation of a stochastic integral. Under Assumption \ref{ass:finite-drift-girsanov}, the integrand is square-integrable, so the stochastic integral is a true martingale starting at 0, and its expectation vanishes. The second term is exactly $\frac{1}{2}\hat{\varepsilon}_i^2$.

\medskip
\textit{Marginalization (Data Processing).}
Let $\pi_{t_0}: C([t_0,T]) \to \mathbb{R}^d$ be the projection map $\omega \mapsto \omega(t_0)$. The marginal distributions are push-forwards: $\phat^{i+1} = (\pi_{t_0})_\# \hat{\Prob}_i$ and $q_i = (\pi_{t_0})_\# \Prob_i^\star$.
By the data processing inequality (contraction of KL under push-forward):
\[
\mathrm{KL}(\phat^{i+1} \| q_i) \le \mathrm{KL}(\hat{\Prob}_i \| \Prob_i^\star) = \frac{1}{2}\hat{\varepsilon}_i^2.
\]
\end{proof}

\subsection{Relating Marginals via Girsanov's Theorem}
\label{subsection:relating-marginals-via-girsanov}

\begin{proposition}[Marginal Ratio Representation]
\label{prop:girsanov-ratio}
Under Assumptions \ref{ass:finite-drift-girsanov} and \ref{ass:exponential-is-a-martingale}, the Radon-Nikodym derivative between the path measures is given by $e^{Z_T^i}$, where $ Z_T^i  = M_T^i - \frac{1}{2}\langle M^i \rangle_T$. Consequently, the marginal likelihood ratio satisfies:
\begin{equation}
R_i(\bx) \;:=\; \frac{\phat^{i+1}(\bx)}{q_i(\bx)} \;=\; \E_{\Prob_i^\star}\Big[ e^{Z_T^i} \;\Big|\; \bY_{t_0} = \bx \Big], \quad q_i\text{-almost everywhere.}
\label{eq:marginal-ratio-girsanov}
\end{equation}
\end{proposition}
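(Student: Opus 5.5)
The plan has two parts: identify the path-space density via Girsanov, then push it forward to the time-$t_0$ marginal by disintegration. The first part is what was already established in the proof of Proposition~\ref{prop:upper-main}. Under \ref{ass:exponential-is-a-martingale}, the Doléans--Dade exponential $U^i_s=\exp(Z^i_s)$ is a true $\Prob_i^\star$-martingale with $\E_{\Prob_i^\star}[U^i_T]=1$. The measure $\md\mathbb Q:=U^i_T\,\md\Prob_i^\star$ is therefore a probability measure on $C([t_0,T],\R^d)$. Under $\mathbb Q$, Girsanov's theorem makes $\hat\bB_s=\bar\bB_s+\int_{t_0}^s\be_{i,u}(\bY_u)\,\md u$ a Brownian motion, so the coordinate process solves \eqref{eq:reverse-learned} with the same $\cN(0,\bI_d)$ initialization at $s=T$ (the density equals $1$ at the starting time, so the initial law is unchanged). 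Assumption \ref{ass:finite-drift-girsanov}, together with the regularity implicitly assumed for the learned drift, gives uniqueness in law for \eqref{eq:reverse-learned}. We conclude $\mathbb Q=\hat\Prob_i$, that is, $\md\hat\Prob_i/\md\Prob_i^\star=e^{Z_T^i}$.

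The second step projects onto the endpoint. Take any bounded measurable $f:\R^d\to\R$. By the change of measure, then the tower property with respect to $\sigma(\bY_{t_0})$,
\[
\int f\,\md\hat p^{\,i+1}=\E_{\hat\Prob_i}[f(\bY_{t_0})]=\E_{\Prob_i^\star}\big[f(\bY_{t_0})e^{Z_T^i}\big]=\E_{\Prob_i^\star}\big[f(\bY_{t_0})\,g(\bY_{t_0})\big]=\int f(\bx)g(\bx)q_i(\bx)\,\md\bx ,
\]
where $g(\bx):=\E_{\Prob_i^\star}[e^{Z_T^i}\mid\bY_{t_0}=\bx]$. This $g$ is a measurable version of the conditional expectation, which exists by the Doob--Dynkin lemma because $e^{Z_T^i}\in L^1(\Prob_i^\star)$ and is nonnegative. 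Under $\Prob_i^\star$ the law of $\bY_{t_0}$ is $q_i$, by \eqref{eq:Ystar_Yhat_marginals}.

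Since the identity holds for every bounded measurable $f$, we get $\hat p^{\,i+1}\ll q_i$ with $\md\hat p^{\,i+1}/\md q_i=g$. Both measures have Lebesgue densities, so $\hat p^{\,i+1}(\bx)/q_i(\bx)=g(\bx)$ for $q_i$-almost every $\bx$, which is \eqref{eq:marginal-ratio-girsanov}. The "almost everywhere" qualifier is unavoidable, because conditional expectations and Radon--Nikodym derivatives are only defined up to null sets.

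The main obstacle is the identification $\mathbb Q=\hat\Prob_i$ in the first step. It needs both the true-martingale property from \ref{ass:exponential-is-a-martingale} and uniqueness in law of the learned SDE, and the latter is not stated explicitly among the assumptions. I would treat it as part of the standing "standard regularity conditions", as the paper does in Proposition~\ref{prop:upper-main}, and cite that argument directly. The marginalization step itself is routine measure theory.
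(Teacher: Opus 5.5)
Your proposal is correct and follows the paper's proof essentially step for step. You take $\md\hat\Prob_i/\md\Prob_i^\star=e^{Z_T^i}$ from the Girsanov argument of Proposition~\ref{prop:upper-main}, then test against bounded measurable $f(\bY_{t_0})$ and use the tower property to read off the marginal density ratio. Your explicit remark that identifying $\mathbb Q$ with $\hat\Prob_i$ also requires uniqueness in law for \eqref{eq:reverse-learned} is a fair point that the paper leaves implicit.
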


\begin{proof}
As shown in Appendix \ref{subsection:proof-upper-bound}, we have $\frac{\mathrm{d}\hat{\Prob}_i}{\mathrm{d}\Prob_i^\star} \;=\; \exp\left( Z_T^i \right)$.
Next, we restrict this relation to the marginals at time $t_0$. Let $f: \mathbb{R}^d \to \mathbb{R}$ be any bounded measurable test function acting on the data space. We compute the expectation of $f$ under the generated distribution $\phat^{i+1}$:
\begin{align*}
\E_{\mathbf{x} \sim \phat^{i+1}}[f(\mathbf{x})]
&= \E_{(\bY_s)_{s \in [t_0, T]} \sim \hat{\Prob}_i}[f(\bY_{t_0})] & \text{(Definition of marginal)} \\
&= \E_{(\bY_s)_{s \in [t_0, T]} \sim\Prob_i^\star}\left[ f(\bY_{t_0}) \, \frac{\mathrm{d}\hat{\Prob}_i}{\mathrm{d}\Prob_i^\star} \right] & \text{(Change of measure)} \\
&= \E_{(\bY_s)_{s \in [t_0, T]} \sim\Prob_i^\star}\left[ f(\bY_{t_0}) \, e^{Z_T^i} \right] & \text{(Definition of } Z_T^i \text{)} \\
&= \E_{(\bY_s)_{s \in [t_0, T]} \sim\Prob_i^\star}\left[ f(\bY_{t_0}) \, \E_{\Prob_i^\star}[e^{Z_T^i} \mid \bY_{t_0}] \right] & \text{(Tower property)}.
\end{align*}
Comparing the first and last lines, we identify $\E_{\Prob_i^\star}[e^{Z_T^i} \mid \bY_{t_0}]$ as the density ratio $\frac{\phat^{i+1}}{q_i}$.
\end{proof}

\section{Moment Control and Observability Lemmas}
\label{sec:moment-control-intra}

Proposition \ref{prop:girsanov-ratio} expresses the density ratio as the conditional expectation of $\exp(Z_T^i)$ given $\bY_{t_0}$,  where $ Z_T^i  = M_T^i - \frac{1}{2}\langle M^i \rangle_T$ .  The score error functional $Z_T^i$ is defined on the entire path, but only its projection onto the terminal state $\bY_{t_0}$ affects the marginal divergence.
The first lemma in this section lower bounds the second moment of this projection.

\begin{lemma}[Observability of Errors Transfer]
\label{lem:obs-transfer-new}
Assume \ref{ass:finite-drift-girsanov} and \ref{ass:quadratic-variation-concentration}. Assume $\varepsilon_{\star, i}^2<\infty$. For any $\theta \in (0,1)$, the conditional expectation of the pathwise error satisfies the lower bound:
\begin{equation}
\E_{\Prob_i^\star}\Big[\E_{\Prob_i^\star}[Z^i_T\mid\bY_{t_0}]^2\Big] 
\;\ge\; 
(1-\theta)\,\eta_i\,\varepsilon_{\star, i}^2 
\;-\; 
\frac{1}{4}\left(\frac{1}{\theta}-1 \right)K_{\gamma}^{\frac{2}{2+\gamma}}\varepsilon_{\star,i}^4
\label{eq:Z-obs-lb-new}
\end{equation}
\end{lemma}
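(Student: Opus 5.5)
Write $A := \E_{\Prob_i^\star}[M^i_T\mid\bY_{t_0}]$ and $B := \tfrac12\E_{\Prob_i^\star}[\langle M^i\rangle_T\mid\bY_{t_0}]$. Then, by linearity of conditional expectation, $\E[Z^i_T\mid\bY_{t_0}] = A - B$. The plan is to lower bound $\E[(A-B)^2]$ by the elementary inequality $(a-b)^2 \ge (1-\theta)a^2 - (\tfrac1\theta - 1) b^2$, valid for any $\theta\in(0,1)$. This follows from $2ab \le \theta a^2 + \theta^{-1} b^2$:
\[
(a-b)^2 = a^2 - 2ab + b^2 \ge (1-\theta)a^2 + (1-\theta^{-1})b^2 .
\]
Taking expectations gives
\[
\E[(A-B)^2]\ge (1-\theta)\E[A^2] - \left(\tfrac1\theta-1\right)\E[B^2],
\]
so it remains to bound $\E[A^2]$ from below and $\E[B^2]$ from above.

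For the first term, I would use $\E[A^2]\ge \Var_{\Prob_i^\star}(A)$ and invoke Definition~\ref{def:observability} directly: $\Var(A)=\eta_i\varepsilon_{\star,i}^2$. (Under $\Prob_i^\star$, $M^i$ is a square-integrable martingale with mean zero, since $\E[\langle M^i\rangle_T]=\varepsilon_{\star,i}^2<\infty$. Hence $\E[A]=\E[M^i_T]=0$, so in fact $\E[A^2]=\eta_i\varepsilon_{\star,i}^2$ exactly. The inequality suffices either way.) The case $\varepsilon_{\star,i}^2=0$ is trivial, since both sides vanish by convention.

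For the second term, conditional Jensen gives
\[
\E[B^2]\le \tfrac14\E\big[\langle M^i\rangle_T^2\big].
\]
I would then raise to the power $2+\gamma$ via Lyapunov/Hölder, since $2<2+\gamma$:
\[
\E[\langle M^i\rangle_T^2]\le \big(\E[\langle M^i\rangle_T^{2+\gamma}]\big)^{2/(2+\gamma)}.
\]
By \ref{ass:quadratic-variation-concentration}, this is at most $\big(K_\gamma\,\varepsilon_{\star,i}^{2(2+\gamma)}\big)^{2/(2+\gamma)} = K_\gamma^{2/(2+\gamma)}\varepsilon_{\star,i}^4$. Combining the two bounds yields exactly \eqref{eq:Z-obs-lb-new}.

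The main point requiring care, rather than a real obstacle, is justifying that the conditional expectations are well defined and square integrable. For $A$, this is Itô isometry together with $\varepsilon_{\star,i}^2<\infty$. For $B$, it is the $(2+\gamma)$-moment bound from \ref{ass:quadratic-variation-concentration}, which ensures $\langle M^i\rangle_T\in L^2$. I would state these integrability facts first, then perform the three-line algebraic argument above.
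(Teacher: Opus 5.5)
Your proposal is correct and follows the paper's proof essentially step for step. The paper also splits the conditional mean of $Z_T^i = M_T^i - \tfrac12\langle M^i\rangle_T$ with the weighted Young inequality, identifies the signal term as exactly $\eta_i\varepsilon_{\star,i}^2$ using $\E_{\Prob_i^\star}[M_T^i]=0$ and Definition~\ref{def:observability}, and bounds the quadratic-variation term by conditional Jensen, the tower property, Lyapunov's inequality and Assumption~\ref{ass:quadratic-variation-concentration}.
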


\begin{proof}
Recall the definition $Z_T^i = M_T^i - \frac{1}{2}\langle M^i \rangle_T$. We condition on the terminal state $\bY_{t_0}$ under the ideal measure $\Prob_i^\star$. Using Young's weighted inequality for any $\theta \in (0, 1)$, $(a-b)^2 \ge (1-\theta)a^2 - \left(\frac{1}{\theta}-1 \right)b^2$, we have:
\[
\left(\E_{\Prob_i^\star}[Z_T^i \mid \bY_{t_0}]\right)^2 
\;\ge\; 
(1-\theta) 
\left(\E_{\Prob_i^\star}[M_T^i \mid \bY_{t_0}]\right)^2 
\;-\; 
\frac{1}{4}\left(\frac{1}{\theta}-1 \right)
\left(\E_{\Prob_i^\star}[\langle M^i \rangle_T \mid \bY_{t_0}]\right)^2.
\]
Taking expectation with respect to $\Prob_i^\star$ yields:
\begin{equation}
\label{eq:obs-inequality-split}
\E_{\Prob_i^\star}\left[\left(\E_{\Prob_i^\star}[Z_T^i \mid \bY_{t_0}]\right)^2\right] 
\;\ge\; 
(1-\theta) 
\underbrace{\E_{\Prob_i^\star}\left[\left(\E_{\Prob_i^\star}[M_T^i \mid \bY_{t_0}]\right)^2\right]}_{\text{(Signal)}} 
\;-\; 
\frac{1}{4}\left(\frac{1}{\theta}-1 \right)
\underbrace{\E_{\Prob_i^\star}\left[\left(\E_{\Prob_i^\star}[\langle M^i \rangle_T \mid \bY_{t_0}]\right)^2\right]}_{\text{(Noise)}}.
\end{equation}

\noindent \textit{Analysis of the Signal Term:}
Since $M^i$ is an It\^o integral with square integrable adapted integrand ($\varepsilon_{\star, i}^2<\infty$ by assumption) it is a martingale with $M_{t_0}^i=0$ hence $\E_{\Prob_i^\star}[M_T^i] = 0$. By the tower property of conditional expectation:
\[
\E_{\Prob_i^\star}\left[ \E_{\Prob_i^\star}[M_T^i \mid \bY_{t_0}] \right] \;=\; \E_{\Prob_i^\star}[M_T^i] \;=\; 0.
\]
Thus, the second moment of the conditional expectation is equal to its variance:
\[
\E_{\Prob_i^\star}\left[\left(\E_{\Prob_i^\star}[M_T^i \mid \bY_{t_0}]\right)^2\right] \;=\; \Var_{\Prob_i^\star}\left( \E_{\Prob_i^\star}[M_T^i \mid \bY_{t_0}] \right).
\]
By Definition \ref{def:observability}, the observability coefficient is $\eta_i := \Var(\E[M \mid Y]) / \E[\langle M \rangle]$. Therefore:
\[
\E_{\Prob_i^\star}\left[\left(\E_{\Prob_i^\star}[M_T^i \mid \bY_{t_0}]\right)^2\right] \;=\; \eta_i \cdot \E_{\Prob_i^\star}[\langle M^i \rangle_T] \;=\; \eta_i \varepsilon_{\star,i}^2.
\]

\noindent \textit{Analysis of Noise Term:}
Applying Jensen's inequality for conditional expectation,
\[
\left(\E_{\Prob_i^\star}[\langle M^i \rangle_T \mid \bY_{t_0}]\right)^2 \;\le\; \E_{\Prob_i^\star}\left[\langle M^i \rangle_T^2 \mid \bY_{t_0}\right].
\]
Taking the outer expectation $\E_{\Prob_i^\star}$ and using the tower property:
\begin{align*}
\E_{\Prob_i^\star}\left[ \left(\E_{\Prob_i^\star}[\langle M^i \rangle_T \mid \bY_{t_0}]\right)^2 \right] \;\le\; 
\E_{\Prob_i^\star}\left[ \E_{\Prob_i^\star}[\langle M^i \rangle_T^2 \mid \bY_{t_0}] \right] 
&\;=\; 
\E_{\Prob_i^\star}[\langle M^i \rangle_T^2] \\
&\leq 
\E_{\Prob_i^\star}[\langle M^i \rangle_T^{2+\gamma}]^{\frac{2}{2+\gamma}} \\
&\le 
K_{\gamma}^{\frac{2}{2+\gamma}}\varepsilon_{\star,i}^4, \quad \text{by } \ref{ass:quadratic-variation-concentration}
\end{align*}
Substituting the results for these two terms back into \eqref{eq:obs-inequality-split} gives the result.
\end{proof}

The next lemma controls the second and fourth moments of $Z_T^i$. 

\begin{lemma}[Second and Fourth moment bound for the Girsanov log-density]
\label{lem:Z-second-fourth-moment}
Assume ~\ref{ass:quadratic-variation-concentration}. Then, 
\begin{equation}
\label{eq:second-moment}
\E_{\Prob_i^\star}[(Z_T^i)^2] \le 2\varepsilon_{\star,i}^2 + \frac{1}{2}K_{\gamma}^{\frac{2}{2+\gamma}}\varepsilon_{\star,i}^4.
\end{equation}
Moreover, in the regime where $\varepsilon_{\star,i}^2\le1$ there exists a constant $C_{Z4}<\infty$ such that, uniformly in $i\ge i_0$,
\begin{equation}
\E_{\Prob_i^\star}\big[ Z_T^{i\,4} \big] \;\le\; C_{Z4}\,\varepsilon_{\star,i}^4 .
\end{equation}
\end{lemma}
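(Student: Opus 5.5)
The plan is to expand $Z_T^i = M_T^i - \tfrac12\langle M^i\rangle_T$ and control each piece separately. For the second moment I would use $(a-b)^2\le 2a^2+2b^2$ with $a=M_T^i$ and $b=\tfrac12\langle M^i\rangle_T$, which gives
\[
\E_{\Prob_i^\star}[(Z_T^i)^2]\le 2\,\E_{\Prob_i^\star}[(M_T^i)^2]+\tfrac12\,\E_{\Prob_i^\star}[\langle M^i\rangle_T^2].
\]
The first term equals $2\varepsilon_{\star,i}^2$ by the It\^o isometry. This uses that $M^i$ is a square-integrable martingale, which holds once $\varepsilon_{\star,i}^2<\infty$, and that in turn follows from \ref{ass:quadratic-variation-concentration} via Jensen. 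For the second term I would repeat the argument in the proof of Lemma~\ref{lem:obs-transfer-new}: Lyapunov/H\"older gives $\E[\langle M^i\rangle_T^2]\le \E[\langle M^i\rangle_T^{2+\gamma}]^{2/(2+\gamma)}\le K_\gamma^{2/(2+\gamma)}\varepsilon_{\star,i}^4$. Together these give \eqref{eq:second-moment} exactly, with the stated constants.

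For the fourth moment I would use $(a-b)^4\le 8a^4+8b^4$, so that
\[
\E[(Z_T^i)^4]\le 8\,\E[(M_T^i)^4]+\tfrac12\,\E[\langle M^i\rangle_T^4].
\]
For the martingale term I would apply the Burkholder--Davis--Gundy inequality with exponent $4$. This gives $\E[(M_T^i)^4]\le C_{\mathrm{BDG}}(4)\,\E[\langle M^i\rangle_T^2]$, and the same H\"older step as above bounds this by $C_{\mathrm{BDG}}(4)K_\gamma^{2/(2+\gamma)}\varepsilon_{\star,i}^4$.

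The quadratic variation term $\E[\langle M^i\rangle_T^4]$ is the main obstacle, because \ref{ass:quadratic-variation-concentration} only controls moments of order $2+\gamma$, and $\gamma>2$ gives only $2+\gamma>4$. Under that assumption, $\E[\langle M^i\rangle_T^{4}]\le \E[\langle M^i\rangle_T^{2+\gamma}]^{4/(2+\gamma)}\le K_\gamma^{4/(2+\gamma)}\varepsilon_{\star,i}^8$. In the perturbative regime $\varepsilon_{\star,i}^2\le 1$ we have $\varepsilon_{\star,i}^8\le\varepsilon_{\star,i}^4$, so this term is also $O(\varepsilon_{\star,i}^4)$. This step is exactly where the hypothesis $\varepsilon_{\star,i}^2\le1$ is needed.

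Collecting the pieces, the fourth-moment bound holds with
\[
C_{Z4}:=8\,C_{\mathrm{BDG}}(4)\,K_\gamma^{2/(2+\gamma)}+\tfrac12 K_\gamma^{4/(2+\gamma)}.
\]
This constant does not depend on $i$, because $K_\gamma$ in \ref{ass:quadratic-variation-concentration} is a supremum over $i\ge i_0$; hence the bound is uniform in $i\ge i_0$. The only technical point left is to confirm that BDG applies, i.e.\ that $M^i$ is a continuous local martingale starting at $0$, which holds because it is an It\^o integral.
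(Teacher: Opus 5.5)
Your proposal is correct and follows the paper's proof essentially line by line. You use the same splittings $(a-b)^2\le 2a^2+2b^2$ and $(a-b)^4\le 8a^4+8b^4$, It\^o isometry and BDG for the martingale part, Lyapunov's inequality against the $(2+\gamma)$-th moment from \ref{ass:quadratic-variation-concentration}, and $\varepsilon_{\star,i}^8\le\varepsilon_{\star,i}^4$ when $\varepsilon_{\star,i}^2\le 1$; your constant $C_{Z4}=8C_{\mathrm{BDG}}(4)K_\gamma^{2/(2+\gamma)}+\tfrac12K_\gamma^{4/(2+\gamma)}$ is slightly more careful than the paper's final display, which drops the BDG constant.
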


\begin{proof}

\medskip
\noindent 
\textit{Second Moment Bound.}
Using $(a-b)^2 \le 2a^2 + 2b^2$ on $Z_T^i = M_T^i - \frac{1}{2}\langle M^i \rangle_T$, we have:
\[
(Z_T^i)^2 \le 2(M_T^i)^2 + \frac{1}{2}\langle M^i \rangle_T^2.
\]
Taking expectations under $\Prob_i^\star$ and applying It\^o's isometry ($\E[(M_T^i)^2] = \varepsilon_{\star,i}^2$) and Assumption \ref{ass:quadratic-variation-concentration}:
\begin{equation}
\E_{\Prob_i^\star}[(Z_T^i)^2] \le 2\varepsilon_{\star,i}^2 + \frac{1}{2}K_{\gamma}^{\frac{2}{2+\gamma}}\varepsilon_{\star,i}^4.
\label{eq:upper-signal-bound}
\end{equation}

\medskip
\noindent
\textit{Fourth Moment Bound.}
Recall that $Z_T^i = M_T^i - \tfrac12 \langle M^i\rangle_T$. Using the elementary inequality $(a-b)^4 \;\le\; 8a^4 + 8b^4$ we obtain
\begin{equation}
Z_T^{i\,4} \;\le\; 8\,|M_T^i|^4 \;+\; 8\Big(\tfrac12\langle M^i\rangle_T\Big)^4
\;=\; 8\,|M_T^i|^4 \;+\; \tfrac12\,\langle M^i\rangle_T^4 .
\label{eq:Z4-split}
\end{equation}

By the Burkholder--Davis--Gundy inequality \cite{BurkholderDavisGundy1972, revuz-yor-continuous-martingale} for continuous martingales with exponent $p=4$, there exists a constant $C_4$ such that
\[
\E_{\Prob_i^\star}\big[ |M_T^i|^4 \big] 
\;\le\; C_4\,\E_{\Prob_i^\star}\big[ \langle M^i\rangle_T^2 \big] \
\]
By Assumption~\ref{ass:quadratic-variation-concentration} for $\gamma > \max\{2, \frac{4}{\delta-1}\}$,
\[
\E_{\Prob_i^\star}\big[ \langle M^i\rangle_T^2 \big] 
\;\le\; 
\E_{\Prob_i^\star}\big[ \langle M^i\rangle_T^{2+\gamma} \big]^{\frac{2}{2+\gamma}}
\;\le\; 
K_{\gamma}^{\frac{2}{2+\gamma}}\,\varepsilon_{\star,i}^{4} ,
\]
hence
\begin{equation}
\E_{\Prob_i^\star}\big[ |M_T^i|^4 \big] \;\le\; C\,\varepsilon_{\star,i}^4 .
\label{eq:M4-bound}
\end{equation}

Similarly, by Assumption \ref{ass:quadratic-variation-concentration}, 
$$
\E_{\Prob_i^\star}\big[ \langle M^i\rangle_T^4 \big] 
\;\le\;
\E_{\Prob_i^\star}\big[ \langle M^i\rangle_T^{2+\gamma} \big]^{\tfrac{4}{2+\gamma}} 
\;\le\;
K_{\gamma}^{\tfrac{4}{2+\gamma}} \varepsilon_{\star, i}^8
$$
Thus, 
$$
\E_{\Prob_i^\star}[(Z_T^i)^4] 
\le 
8K_{\gamma}^{\frac{2}{2+\gamma}}\,\varepsilon_{\star,i}^{4} + \frac{K_{\gamma}^{\tfrac{4}{2+\gamma}}}{2}\varepsilon_{\star, i}^8
$$
Observing that, whenever $\varepsilon_{\star,i}^2 \le 1$, one has that $\varepsilon_{\star, i}^8 \le \varepsilon_{\star, i}^4$, yields
\begin{equation}
\E_{\Prob_i^\star}\big[(Z_T^i)^4 \big] \;\le\; C\,\varepsilon_{\star,i}^4,
\label{eq:energy4-bound}
\end{equation}
which proves the claim.
\end{proof}

\section{Proof of the Intra-Generational Lower Bound (Proposition \ref{prop:lower-main})}
\label{section:lower-bound-main}
 
From Proposition \ref{prop:girsanov-ratio}, recall that the ratio $\frac{\phat^{i+1}}{q_i}$ is defined for all $\bx \in \R^d$ by,
$$
R_i(\bx) \;:=\; \frac{\phat^{i+1}(\bx)}{q_i(\bx)} \;=\; \E_{\Prob_i^\star}\Big[ e^{Z_T^i} \;\Big|\; \bY_{t_0} \Big], \quad q_i\text{-almost everywhere.}
$$
To obtain a lower bound, we write $e^{Z_T^i} = 1 + Z_T^i + \psi(Z_T^i)$ and control the remainder term $\psi(Z_T^i)$ under the assumptions of the proposition. To this end, we will use two lemmas proved in Appendix \ref{sec:moment-control-intra}. The first, Lemma \ref{lem:obs-transfer-new} controls the second moment of $\E_{\Prob_i^\star}\Big[ Z_T^i \mid \bY_{t_0} \Big]$. The second, Lemma \ref{lem:Z-second-fourth-moment}, controls the second and fourth moments of $Z_T^i$.

We will prove the following version of  Proposition \ref{prop:lower-main} with explicit constants.

\begin{proposition}[Intra-generational lower bound with explicit constants]
\label{prop:lower-main-explicit}
Assume \ref{ass:finite-drift-girsanov}--\ref{ass:quadratic-variation-concentration},  with associated parameters $(\gamma,K_\gamma)$ and $(\delta,C_\delta)$. Let $C_{\mathrm{BDG}}(r)$ denote a Burkholder--Davis--Gundy \cite{BurkholderDavisGundy1972, revuz-yor-continuous-martingale} constant such that for any continuous martingale $M$,
\begin{equation}
\label{eq:bdg}
\E\!\big[|M_T|^{r}\big] \le C_{\mathrm{BDG}}(r)\,\E\!\big[\langle M\rangle_T^{r/2}\big].
\end{equation}
Then, for every generation $i \ge i_0$ satisfying the perturbative condition $\varepsilon_{\star,i}^2\le 1$, one has
\begin{equation}
\label{eq:chi2-lb-explicit}
\chi^2(\hat p^{\,i+1}\|q_i)
\;\ge\;
\frac{1}{4}\,\eta_i\,\varepsilon_{\star,i}^2
\;-\;
C\,\varepsilon_{\star,i}^4,
\end{equation}
with the explicit choices

\begin{equation}
\label{eq:C-constant}
C
:=
\frac{K_{\gamma}^{\frac{2}{2+\gamma}}}{8}
\;+\;
\frac{C_{Z4}}{4}
\;+\;
\frac14\,
\Big(C_{\mathrm{BDG}}(4p')\,K_{\gamma}^{\frac{2p'}{2+\gamma}}\Big)^{\!1/p'}\,
C_\delta^{1/p},
\end{equation}
where $p=\frac{1+\delta}{2}$ and $p'=\frac{\delta+1}{\delta-1}$.
In particular, defining the explicit perturbative threshold
\begin{equation}
\label{eq:kappa-explicit}
\kappa_i
:=
\min\Big\{1,\ \frac{\eta_i}{8C}\Big\},
\end{equation}
whenever $\varepsilon_{\star,i}^2\le \kappa_i$ we obtain the clean lower bound
\begin{equation}
\label{eq:chi2-clean-lb-explicit}
\chi^2(\hat p^{\,i+1}\|q_i)
\;\ge\;
\frac{c}{2}\,\eta_i\,\varepsilon_{\star,i}^2
=
\frac18\,\eta_i\,\varepsilon_{\star,i}^2.
\end{equation}
\end{proposition}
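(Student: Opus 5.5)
The plan is to start from the marginal representation of Proposition~\ref{prop:girsanov-ratio}, $R_i=\E_{\Prob_i^\star}[e^{Z_T^i}\mid\bY_{t_0}]$, and to split the exponential as $e^{Z}=1+Z+\psi(Z)$ with $\psi(z):=e^z-1-z\ge 0$. Write $A:=\E[Z_T^i\mid\bY_{t_0}]$ and $B:=\E[\psi(Z_T^i)\mid\bY_{t_0}]$, so that $R_i-1=A+B$. Since $\bY_{t_0}\sim q_i$ under $\Prob_i^\star$, we get $\chi^2(\hat p^{\,i+1}\|q_i)=\E_{\Prob_i^\star}[(A+B)^2]$. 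Young's inequality with weight $1/2$ then gives
\[
(A+B)^2\ge \tfrac12 A^2-B^2 .
\]
Hence it suffices to lower bound $\E[A^2]$ and upper bound $\E[B^2]$, each at the correct order in $\varepsilon_{\star,i}$.

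For the signal term, I apply Lemma~\ref{lem:obs-transfer-new} with $\theta=1/2$:
\[
\E[A^2]\ge \tfrac12\eta_i\varepsilon_{\star,i}^2-\tfrac14K_\gamma^{2/(2+\gamma)}\varepsilon_{\star,i}^4 .
\]
Multiplying by $1/2$ yields $\tfrac14\eta_i\varepsilon_{\star,i}^2-\tfrac18K_\gamma^{2/(2+\gamma)}\varepsilon_{\star,i}^4$. This produces the leading term and the first constant in \eqref{eq:C-constant}.

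For the remainder, conditional Jensen gives $\E[B^2]\le\E[\psi(Z_T^i)^2]$. I then use $\psi(z)\le \tfrac12 z^2\max(1,e^{z})$ and split according to the sign of $Z$.
\begin{itemize}
\item On $\{Z\le 0\}$ we have $\psi(Z)^2\le Z^4/4$. By Lemma~\ref{lem:Z-second-fourth-moment} this contributes at most $\tfrac{C_{Z4}}{4}\varepsilon_{\star,i}^4$, which is the second constant.
\item On $\{Z>0\}$ we have $\psi(Z)^2\le \tfrac14 Z^4e^{2Z}$. Apply Hölder with exponents $p=\tfrac{1+\delta}{2}$ and $p'=\tfrac{\delta+1}{\delta-1}$ (conjugate, since $1/p+1/p'=1$):
\[
\E[Z^4e^{2Z}]\le \E[Z^{4p'}]^{1/p'}\,\E[e^{(1+\delta)Z}]^{1/p}.
\]
Assumption~\ref{ass:exponential-integrability-girsanov} bounds the second factor by $C_\delta^{1/p}$.
\end{itemize}

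For the first factor, I expand $|Z|^{4p'}\lesssim|M_T|^{4p'}+\langle M\rangle_T^{4p'}$. BDG \eqref{eq:bdg} gives $\E|M_T|^{4p'}\le C_{\mathrm{BDG}}(4p')\E\langle M\rangle_T^{2p'}$. To control this moment via Assumption~\ref{ass:quadratic-variation-concentration}, I need $2p'\le 2+\gamma$. This holds exactly because $\gamma>4/(\delta-1)$, which is equivalent to $2p'=2+\tfrac{4}{\delta-1}<2+\gamma$. Lyapunov's inequality then gives $\E\langle M\rangle_T^{2p'}\le K_\gamma^{2p'/(2+\gamma)}\varepsilon_{\star,i}^{4p'}$. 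Taking the $1/p'$ power yields order $\varepsilon_{\star,i}^4$, which is the third constant.

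The $\langle M\rangle_T^{4p'}$ piece is of higher order $\varepsilon_{\star,i}^{8}$ when $\varepsilon_{\star,i}^2\le1$. However, it needs moment $4p'$, which may exceed $2+\gamma$. The clean way around this is to bound $Z\le M_T$ on $\{Z>0\}$, since $\langle M\rangle\ge0$ gives $0<Z\le M_T$ there, so only $M_T$ moments enter. I expect this bookkeeping, making the Hölder/BDG exponents consistent with \ref{ass:exponential-integrability-girsanov}--\ref{ass:quadratic-variation-concentration}, to be the main technical obstacle.

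Collecting terms gives \eqref{eq:chi2-lb-explicit} with the stated $C$, using $\varepsilon_{\star,i}^8\le\varepsilon_{\star,i}^4$ wherever needed. Finally, if $\varepsilon_{\star,i}^2\le\eta_i/(8C)$, then $C\varepsilon_{\star,i}^4\le\tfrac18\eta_i\varepsilon_{\star,i}^2$, which gives the clean bound \eqref{eq:chi2-clean-lb-explicit}.
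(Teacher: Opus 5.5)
Your proposal is correct and follows the paper's proof essentially step for step. It uses the same decomposition $e^{z}=1+z+\psi(z)$, Young's inequality with weight $1/2$, Lemma~\ref{lem:obs-transfer-new} with $\theta=1/2$, the sign split of $\psi$, H\"older with exponents $(p,p')$, the reduction $0\le Z_T^i\le M_T^i$ on $\{Z_T^i\ge 0\}$, BDG, and Lyapunov via $2p'<2+\gamma$, and it recovers exactly the constant in \eqref{eq:C-constant}. The one bookkeeping point is that you should keep the indicator $\mathbf 1_{\{Z_T^i>0\}}$ inside the first H\"older factor, as the paper does; otherwise the bound by $|M_T^i|^{4p'}$ does not apply.
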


\begin{proof}
The proof proceeds in six steps: we expand the density ratio using Taylor's theorem, apply Young's inequality to isolate the leading-order term, invoke the observability lemma for the signal, control the remainder using exponential integrability, and combine all bounds.

\textit{1. Exponential expansion with controlled remainder.}

Define the remainder function $\psi: \mathbb{R} \to \mathbb{R}$ by
\begin{equation}
\psi(z) := e^z - 1 - z = \sum_{k=2}^{\infty} \frac{z^k}{k!}.
\end{equation}
From Proposition~\ref{prop:girsanov-ratio} (Marginal Ratio Representation), we have
\begin{equation}
R_i(\bx) = \frac{\phat^{i+1}(\bx)}{q_i(\bx)} = \E_{\Prob_i^\star}\bigl[e^{Z_T^i} \mid \bY_{t_0}\bigr].
\end{equation}
Applying the decomposition $e^{Z_T^i} = 1 + Z_T^i + \psi(Z_T^i)$ and taking conditional expectations:
\begin{align}
R_i - 1 &= \E_{\Prob_i^\star}[e^{Z_T^i} - 1 \mid \bY_{t_0}] \notag 
= \E_{\Prob_i^\star}[Z_T^i \mid \bY_{t_0}] + \E_{\Prob_i^\star}[\psi(Z_T^i) \mid \bY_{t_0}].
\end{align}
We introduce the notation:
\begin{align}
\bar{Z}_i &:= \E_{\Prob_i^\star}[Z_T^i \mid \bY_{t_0}] \quad \text{(conditional mean of log-likelihood ratio)}, \\
\Psi_i &:= \E_{\Prob_i^\star}[\psi(Z_T^i) \mid \bY_{t_0}] \quad \text{(conditional mean of remainder)}.
\end{align}
Thus we have the fundamental decomposition:
\begin{equation}
R_i - 1 = \bar{Z}_i + \Psi_i.
\label{eq:ratio-decomposition_0}
\end{equation}

\medskip
\noindent
\textit{2. Quadratic lower bound via Young's inequality.}

For any $\nu \in (0,1)$, we apply the weighted Young's inequality in the form $(a + b)^2 \geq (1-\nu)a^2 - (\frac{1}{\nu} - 1)b^2$:
\begin{equation}
(R_i - 1)^2 = (\bar{Z}_i + \Psi_i)^2 \geq (1-\nu)\bar{Z}_i^2 - \left(\frac{1}{\nu} - 1\right)\Psi_i^2.
\end{equation}
Taking expectations with respect to $q_i$ (equivalently, with respect to $\bY_{t_0}$ under $\Prob_i^\star$):
\begin{equation}
\chi^2(\phat^{i+1} \| q_i) = \E_{q_i}[(R_i - 1)^2] \geq (1-\nu)\underbrace{\E_{q_i}[\bar{Z}_i^2]}_{\mathrm{Signal}} - \left(\frac{1}{\nu} - 1\right)\underbrace{\E_{q_i}[\Psi_i^2]}_{\mathrm{Remainder}}.
\label{eq:chi2-young-split}
\end{equation}
We proceed to bound each term separately.

\medskip
\noindent
\textit{3. The signal term---applying observability.}
Since $\bY_{t_0} \sim q_i$ under $\Prob_i^\star$, we have
\begin{equation}
\E_{q_i}[\bar{Z}_i^2] = \E_{\Prob_i^\star}\!\left[\bigl(\E_{\Prob_i^\star}[Z_T^i \mid \bY_{t_0}]\bigr)^2\right].
\end{equation}

Invoking Lemma~\ref{lem:obs-transfer-new}, for any $\theta \in (0,1)$:
\begin{equation}
\E_{\Prob_i^\star}\!\left[\bigl(\E_{\Prob_i^\star}[Z_T^i \mid \bY_{t_0}]\bigr)^2\right] \geq (1-\theta)\,\eta_i\,\varepsilon_{\star,i}^2 - \frac{1}{4}\left(\frac{1}{\theta} - 1\right)K_{\gamma}^{\frac{2}{2+\gamma}}\varepsilon_{\star,i}^4.
\label{eq:signal-bound}
\end{equation}

For the noise term: by Jensen's inequality for conditional expectations,
\begin{equation}
\bigl(\E[\langle M^i \rangle_T \mid \bY_{t_0}]\bigr)^2 \leq \E[\langle M^i \rangle_T^2 \mid \bY_{t_0}].
\end{equation}
Taking outer expectations and applying the tower property:
\begin{equation}
\E\!\left[\bigl(\E[\langle M^i \rangle_T \mid \bY_{t_0}]\bigr)^2\right] \leq \E[\langle M^i \rangle_T^2] \leq K_{\gamma}^{\frac{2}{2+\gamma}}\varepsilon_{\star,i}^4,
\end{equation}
where the final inequality uses Assumption~\ref{ass:quadratic-variation-concentration}. 

\medskip
\noindent
\textit{4. The remainder term---exponential moment control.}

We bound $\E_{q_i}[\Psi_i^2]$ using the exponential integrability condition (Assumption~\ref{ass:exponential-integrability-girsanov}). By Jensen's inequality for conditional expectations (since $x \mapsto x^2$ is convex):
\begin{equation}
\Psi_i^2 = \bigl(\E[\psi(Z_T^i) \mid \bY_{t_0}]\bigr)^2 \leq \E[\psi(Z_T^i)^2 \mid \bY_{t_0}].
\end{equation}
Taking expectations over $\bY_{t_0} \sim q_i$ and applying the tower property:
\begin{equation}
\E_{q_i}[\Psi_i^2] \leq \E_{\Prob_i^\star}[\psi(Z_T^i)^2].
\label{eq:psi-squared-bound}
\end{equation}

Observe that, for any $z \in \mathbb{R}$, we have the bound
\begin{equation}
0 \le \psi(z) = e^z - 1 - z \le \frac{z^2}{2}e^{z_+}, \quad z_+:=\mathrm{max}(z, 0),
\label{eq:psi-bound}
\end{equation}

Using the bound \eqref{eq:psi-bound}:
\begin{equation}
\psi(Z_T^i)^2 \leq \frac{(Z_T^i)^4}{4} e^{2(Z_{T}^{i})_+} = \frac{(Z_T^i)^4}{4} \Big(\mathbf{1}_{Z_T^i \le 0} + e^{2Z_T^i}\mathbf{1}_{Z_T^i \ge 0}\Big)
\end{equation}

\textit{5. Controlling the exponential factor (with only $2p'\le 2+\gamma$).}
From \eqref{eq:psi-bound}, it remains to control the ``positive'' contribution
\[
\E_{\Prob_i^\star}\!\Big[(Z_T^i)^4 e^{2Z_T^i}\mathbf 1_{\{Z_T^i\ge 0\}}\Big].
\]
Define the H\"older conjugates $p:=\frac{1+\delta}{2}$ and $p':=\frac{\delta+1}{\delta-1}$ that are so that $\frac1p+\frac1{p'}=1$. H\"older's inequality yields
\begin{equation}
\label{eq:holder-split-corrected}
\E_{\Prob_i^\star}\!\Big[(Z_T^i)^4 e^{2Z_T^i}\mathbf 1_{\{Z_T^i\ge 0\}}\Big]
\le
\E_{\Prob_i^\star}\!\Big[|Z_T^i|^{4p'}\mathbf 1_{\{Z_T^i\ge 0\}}\Big]^{1/p'}
\cdot
\E_{\Prob_i^\star}\!\Big[e^{2p Z_T^i}\Big]^{1/p}.
\end{equation}
Since $2p=1+\delta$, Assumption~\ref{ass:exponential-integrability-girsanov} gives
\begin{equation}
\label{eq:exp-moment}
\sup_{i\ge i_0}\E_{\Prob_i^\star}\!\big[e^{2pZ_T^i}\big]
=
\sup_{i\ge i_0}\E_{\Prob_i^\star}\!\big[e^{(1+\delta)Z_T^i}\big]
\le C_\delta.
\end{equation}
On the other hand, recall $Z_T^i=M_T^i-\tfrac12\langle M^i\rangle_T$ so that, on the event $\{Z_T^i\ge 0\}$,  we have $M_T^i\ge \tfrac12\langle M^i\rangle_T$ i.e. $\langle M^i\rangle_T \le 2 M_T^i$ hence also
\[
0\le Z_T^i = M_T^i-\tfrac12\langle M^i\rangle_T \le M_T^i,
\qquad\text{on }\{Z_T^i\ge 0\}.
\]
Therefore,
\begin{equation}
\label{eq:poly-reduction}
|Z_T^i|^{4p'}\mathbf 1_{\{Z_T^i\ge 0\}}
\le
|M_T^i|^{4p'}.
\end{equation}
By the Burkholder--Davis--Gundy inequality \eqref{eq:bdg} with $r=4p'$,
\[
\E_{\Prob_i^\star}\!\big[|M_T^i|^{4p'}\big]
\le
C_{\mathrm{BDG}}(4p')\,\E_{\Prob_i^\star}\!\big[\langle M^i\rangle_T^{2p'}\big].
\]
Since, by definition of $\rho$ and $\gamma$ in Assumption~\ref{ass:quadratic-variation-concentration}, one has that $2p'\le 2+\gamma$, Lyapunov monotonicity yields
\[
\E_{\Prob_i^\star}\!\big[\langle M^i\rangle_T^{2p'}\big]
\le
\E_{\Prob_i^\star}\!\big[\langle M^i\rangle_T^{2+\gamma}\big]^{\frac{2p'}{2+\gamma}}
\le
K_\gamma^{\frac{2p'}{2+\gamma}}\,\varepsilon_{\star,i}^{4p'}.
\]
Combining with \eqref{eq:poly-reduction} gives
\begin{equation}
\label{eq:poly-bound-final}
\E_{\Prob_i^\star}\!\Big[|Z_T^i|^{4p'}\mathbf 1_{\{Z_T^i\ge 0\}}\Big]
\le
C_{\mathrm{BDG}}(4p')\,K_\gamma^{\frac{2p'}{2+\gamma}}\,\varepsilon_{\star,i}^{4p'}.
\end{equation}

Plugging \eqref{eq:exp-moment} and \eqref{eq:poly-bound-final} into \eqref{eq:holder-split-corrected} yields
\[
\E_{\Prob_i^\star}\!\Big[(Z_T^i)^4 e^{2Z_T^i}\mathbf 1_{\{Z_T^i\ge 0\}}\Big]
\le
\Big(C_{\mathrm{BDG}}(4p')\,K_\gamma^{\frac{2p'}{2+\gamma}}\Big)^{1/p'}\,C_\delta^{1/p}\,\varepsilon_{\star,i}^{4}.
\]
Returning to \eqref{eq:psi-bound} and using also $\E[(Z_T^i)^4\mathbf 1_{\{Z_T^i\le 0\}}]\le \E[(Z_T^i)^4]\le C_{Z4}\,\varepsilon_{\star,i}^4$ (Lemma~\ref{lem:Z-second-fourth-moment}),
we conclude that for some constant $C'$ depending only on $(\delta,C_\delta,\gamma,K_\gamma)$,
\begin{equation}
\label{eq:remainder-bound}
\E_{q_i}[\Psi_i^2]
\le
\E_{\Prob_i^\star}[\psi(Z_T^i)^2]
\le
C'\,\varepsilon_{\star,i}^4,
\end{equation}
with an admissible explicit choice
\[
C'
:=
\frac14\,C_{Z4}
+
\frac14\Big(C_{\mathrm{BDG}}(4p')\,K_\gamma^{\frac{2p'}{2+\gamma}}\Big)^{1/p'}\,C_\delta^{1/p}.
\]

\medskip
\noindent
\textit{5. Combining all bounds.}

Substituting \eqref{eq:signal-bound} and \eqref{eq:remainder-bound} into \eqref{eq:chi2-young-split}:
\begin{align}
\chi^2(\phat^{i+1} \| q_i) &\geq (1-\nu)\left[(1-\theta)\eta_i \varepsilon_{\star,i}^2 - \frac{1}{4}\left(\frac{1}{\theta} - 1\right)K_{\gamma}^{\frac{2}{2+\gamma}}\varepsilon_{\star,i}^4\right] - \left(\frac{1}{\nu} - 1\right)C'\varepsilon_{\star,i}^4.
\end{align}

Expanding and collecting terms:
\begin{align}
\chi^2(\phat^{i+1} \| q_i) &\geq (1-\nu)(1-\theta)\eta_i \varepsilon_{\star,i}^2 \notag  - \left[\frac{1}{4}(1-\nu)\left(\frac{1}{\theta} - 1\right)K_{\gamma}^{\frac{2}{2+\gamma}} + \left(\frac{1}{\nu} - 1\right)C'\right]\varepsilon_{\star,i}^4.
\end{align}

We choose $\nu = \theta = \frac{1}{2}$ to balance the terms:
\begin{itemize}
    \item Leading coefficient: $(1 - \frac{1}{2})(1 - \frac{1}{2}) = \frac{1}{4}$
    \item First error coefficient: $\frac{1}{4} \cdot \frac{1}{2} \cdot (2 - 1) \cdot K_{\gamma}^{\frac{2}{2+\gamma}} = \frac{1}{8}K_{\gamma}^{\frac{2}{2+\gamma}}$
    \item Second error coefficient: $(2 - 1) \cdot C' = C'$
\end{itemize}

Thus:
\begin{equation}
\chi^2(\phat^{i+1} \| q_i) \geq \frac{1}{4}\eta_i \varepsilon_{\star,i}^2 - \left[\frac{1}{8}K_{\gamma}^{\frac{2}{2+\gamma}} + C'\right]\varepsilon_{\star,i}^4.
\end{equation}

Setting 
\begin{equation}
C := \frac{1}{8}K_{\gamma}^{\frac{2}{2+\gamma}} + C',
\end{equation}
we obtain the desired bound:
\begin{equation}
\chi^2(\phat^{i+1} \| q_i) \geq \frac{1}{4}\cdot \eta_i \cdot \varepsilon_{\star,i}^2 - C \cdot \varepsilon_{\star,i}^4.
\end{equation}

\medskip
\noindent
\textit{6. Clean form in the perturbative regime.}

When the error is sufficiently small, specifically when
\begin{equation}
\varepsilon_{\star,i}^2 \leq \frac{c \cdot \eta_i}{2C} = \frac{\eta_i}{8C},
\end{equation}
the quartic error term satisfies $C \varepsilon_{\star,i}^4 \leq \frac{c}{2} \eta_i \varepsilon_{\star,i}^2$. Therefore:
\begin{equation}
\chi^2(\phat^{i+1} \| q_i) \geq c \cdot \eta_i \cdot \varepsilon_{\star,i}^2 - \frac{c}{2} \cdot \eta_i \cdot \varepsilon_{\star,i}^2 = \frac{c}{2} \cdot \eta_i \cdot \varepsilon_{\star,i}^2.
\end{equation}

This completes the proof.
\end{proof}

\begin{remark}[Optimality of Constants]
The constants $c = \frac{1}{4}$ and the specific form of $C$ arise from choosing $\nu = \theta = \frac{1}{2}$. These can be optimized by choosing $\nu$ and $\theta$ as functions of the ratio $\eta_i / K_{\gamma}$, but the qualitative scaling $\chi^2 \gtrsim \eta_i \varepsilon_{\star,i}^2$ is sharp.
\end{remark}

\section{Proof of Theorem \ref{thm:two-sided-main}}
\label{section:two-sided-control-main}

We prove the following version of the result with explicit constants.

\begin{theorem}[Two-Sided Control of Endogenous Error]
\label{thm:two-sided-control_appendix}
Under Assumptions \ref{ass:finite-drift-girsanov}- \ref{ass:quadratic-variation-concentration}, the following holds for generations $i \ge i_0$.
There exist constants $c > 0$ and $C < \infty$ (depending only on $\delta, C_\delta, \gamma, K_\gamma$) such that, whenever the perturbative condition $\varepsilon_{\star,i}^2 \le 1$ holds, the sampling discrepancy is bounded by:
\begin{equation}
\frac{1}{4}\,\eta_i\,\varepsilon_{\star,i}^2 \;-\; C_\mathrm{Low} \cdot \varepsilon_{\star,i}^4
\;\;\le\;\;
\chi^2(\hat p^{\,i+1}\|q_i)
\;\;\le\;\;
4\,\varepsilon_{\star,i}^2 \;+\; [K_{\gamma}^{\frac{2}{2+\gamma}} + 2C']\varepsilon_{\star,i}^4.
\end{equation}
In particular, for sufficiently small $\varepsilon_{\star,i}^2 \to 0$, this yields 
\begin{equation}
\chi^2(\hat p^{\,i+1}\|q_i)
\asymp
\varepsilon_{\star,i}^2.
\end{equation}
\end{theorem}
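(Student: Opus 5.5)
The lower bound is exactly Proposition~\ref{prop:lower-main-explicit}, so the work is in the upper bound: show $\chi^2(\hat p^{\,i+1}\|q_i)\le 4\varepsilon_{\star,i}^2+O(\varepsilon_{\star,i}^4)$ directly in $\chi^2$. Going through KL and Proposition~\ref{prop:upper-main} would give $\hat\varepsilon_i^2$ and would also require comparing KL with $\chi^2$. Instead I will reuse the decomposition \eqref{eq:ratio-decomposition_0}, $R_i-1=\bar Z_i+\Psi_i$, from the proof of Proposition~\ref{prop:lower-main-explicit}. This has the advantage that every quantity is taken under $\Prob_i^\star$, so $\varepsilon_{\star,i}^2$ appears naturally and no equivalence between $\varepsilon_{\star,i}^2$ and $\hat\varepsilon_i^2$ is needed.

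\emph{Step 1.} Apply $(a+b)^2\le 2a^2+2b^2$ to get
\[
\chi^2(\hat p^{\,i+1}\|q_i)=\E_{q_i}[(\bar Z_i+\Psi_i)^2]\le 2\,\E_{q_i}[\bar Z_i^2]+2\,\E_{q_i}[\Psi_i^2].
\]

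\emph{Step 2.} Bound the signal term. Conditional Jensen and the tower property give $\E_{q_i}[\bar Z_i^2]\le \E_{\Prob_i^\star}[(Z_T^i)^2]$. The second-moment bound in Lemma~\ref{lem:Z-second-fourth-moment} then gives
\[
\E_{q_i}[\bar Z_i^2]\le 2\varepsilon_{\star,i}^2+\tfrac12 K_\gamma^{2/(2+\gamma)}\varepsilon_{\star,i}^4,
\]
so this term contributes $4\varepsilon_{\star,i}^2+K_\gamma^{2/(2+\gamma)}\varepsilon_{\star,i}^4$.

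\emph{Step 3.} Bound the remainder term using \eqref{eq:remainder-bound}, which gives $\E_{q_i}[\Psi_i^2]\le C'\varepsilon_{\star,i}^4$. That estimate relies on the H\"older split with the $L^{1+\delta}$ control \ref{ass:exponential-integrability-girsanov} and on the BDG bound at exponent $4p'$, and it only needs $\varepsilon_{\star,i}^2\le1$ together with $2p'\le 2+\gamma$. This term contributes $2C'\varepsilon_{\star,i}^4$. Adding Steps~2 and~3 yields exactly the stated upper bound $4\varepsilon_{\star,i}^2+[K_\gamma^{2/(2+\gamma)}+2C']\varepsilon_{\star,i}^4$.

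\emph{Step 4.} For the equivalence, take $\varepsilon_{\star,i}^2\le\min\{1,\eta_i/(8C)\}$ with $\eta_i\ge\underline\eta$. The lower side is at least $\tfrac18\underline\eta\,\varepsilon_{\star,i}^2$. Since $\varepsilon_{\star,i}^4\le\varepsilon_{\star,i}^2$, the upper side is at most $(4+K_\gamma^{2/(2+\gamma)}+2C')\varepsilon_{\star,i}^2$. This gives $\chi^2(\hat p^{\,i+1}\|q_i)\asymp\varepsilon_{\star,i}^2$ with constants depending only on $\delta,C_\delta,\gamma,K_\gamma,\underline\eta$.

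The only delicate point is that the remainder bound must hold uniformly in $i\ge i_0$ without $\eta_i$ entering its constant, so that the upper constant is independent of observability. This is exactly what was checked in step~5 of the lower-bound proof, where the exponential factor is controlled by $C_\delta$ and the polynomial factor by $K_\gamma$. I also need to confirm that $\gamma>4/(\delta-1)$ implies $2p'\le 2+\gamma$; this holds because $2p'=2+4/(\delta-1)$.
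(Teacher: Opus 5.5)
Your proposal is correct and matches the paper's proof essentially step for step. The lower bound is taken directly from Proposition~\ref{prop:lower-main-explicit}, and the upper bound comes from $R_i-1=\bar Z_i+\Psi_i$, the inequality $(a+b)^2\le 2a^2+2b^2$, conditional Jensen together with the second-moment bound of Lemma~\ref{lem:Z-second-fourth-moment}, and the remainder estimate \eqref{eq:remainder-bound}; your Step~4, which makes explicit the need for $\eta_i\ge\underline\eta$ and uses $\varepsilon_{\star,i}^4\le\varepsilon_{\star,i}^2$ to get the $\asymp$ statement, spells out what the appendix leaves implicit.
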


\begin{proof}
The proof relies on the decomposition established in Proposition~\ref{prop:lower-main-explicit}, which we first remind. Define the remainder function $\psi: \mathbb{R} \to \mathbb{R}$ by
\begin{equation}
\psi(z) := e^z - 1 - z = \sum_{k=2}^{\infty} \frac{z^k}{k!}.
\end{equation}

Recall, from Proposition~\ref{prop:girsanov-ratio} (Marginal Ratio Representation), we have
\begin{equation}
R_i(\bY_{t_0}) = \frac{\phat^{i+1}(\bY_{t_0})}{q_i(\bY_{t_0})} = \E_{\Prob_i^\star}\bigl[e^{Z_T^i} \mid \bY_{t_0}\bigr].
\end{equation}
Applying the decomposition $e^{Z_T^i} = 1 + Z_T^i + \psi(Z_T^i)$ and taking conditional expectations:
\begin{align}
R_i - 1 &= \E_{\Prob_i^\star}[e^{Z_T^i} - 1 \mid \bY_{t_0}] \notag 
= \E_{\Prob_i^\star}[Z_T^i \mid \bY_{t_0}] + \E_{\Prob_i^\star}[\psi(Z_T^i) \mid \bY_{t_0}].
\end{align}
We introduce the notation:
\begin{align}
\bar{Z}_i &:= \E_{\Prob_i^\star}[Z_T^i \mid \bY_{t_0}] \quad \text{(conditional mean of log-likelihood ratio)}, \\
\Psi_i &:= \E_{\Prob_i^\star}[\psi(Z_T^i) \mid \bY_{t_0}] \quad \text{(conditional mean of remainder)}.
\end{align}
Thus we have the fundamental decomposition:
\begin{equation}
R_i - 1 = \bar{Z}_i + \Psi_i.
\label{eq:ratio-decomposition}
\end{equation}

\medskip
\noindent
\textit{Lower Bound.}
This is exactly the statement of Proposition~\ref{prop:lower-main-explicit} with $c=\frac{1}{4}$ (or $c=\frac{1}{8}$ in the clean regime).

\medskip
\noindent
\textit{Upper Bound.}
We use the elementary inequality $(a+b)^2 \le 2a^2 + 2b^2$. Applied to the decomposition:
\begin{equation}
\chi^2(\phat^{i+1} \| q_i) = \E_{q_i}[(\bar{Z}_i + \Psi_i)^2] \;\le\; 2\E_{q_i}[\bar{Z}_i^2] \;+\; 2\E_{q_i}[\Psi_i^2].
\label{eq:upper-split}
\end{equation}

\textit{1. The Signal Term.}
By Jensen's inequality for conditional expectations, $\bar{Z}_i^2 = (\E[Z_T^i \mid \bY_{t_0}])^2 \le \E[(Z_T^i)^2 \mid \bY_{t_0}]$. Taking expectation over $q_i$ and using the second moment bound of Lemma \ref{lem:Z-second-fourth-moment}:
\begin{equation}
\E_{q_i}[\bar{Z}_i^2] \le \E_{\Prob_i^\star}[(Z_T^i)^2] \le 2\varepsilon_{\star,i}^2 + \frac{1}{2}K_{\gamma}^{\frac{2}{2+\gamma}}\varepsilon_{\star,i}^4.
\end{equation}

\textit{2. The Remainder Term.}
From the proof of Proposition~\ref{prop:lower-main-explicit} (Eq.~\ref{eq:remainder-bound}), we already established that $\E_{q_i}[\Psi_i^2] \le C'\varepsilon_{\star,i}^4$ holds under Assumptions \ref{ass:exponential-integrability-girsanov} and \ref{ass:quadratic-variation-concentration}.

Substituting \eqref{eq:upper-signal-bound} and the remainder bound into \eqref{eq:upper-split}:
\[
\chi^2(\phat^{i+1} \| q_i) \le 2\left(2\varepsilon_{\star,i}^2 + \tfrac{1}{2}K_{\gamma}^{\frac{2}{2+\gamma}}\varepsilon_{\star,i}^4\right) + 2C'\varepsilon_{\star,i}^4 = 4\varepsilon_{\star,i}^2 + [K_{\gamma}^{\frac{2}{2+\gamma}} + 2C']\varepsilon_{\star,i}^4.
\]
\end{proof}

\section{Proofs for Section \ref{sec:global}} 
\label{section:proof-regime-ABC}

\subsection{Preliminary Results}
\label{subsection:preliminary-results}

We first quantify how much the discrepancy contracts purely due to mixing with fresh data (the ``refresh" step).

\begin{lemma}[Exact Contraction]
\label{lem:refresh-exact}
For any $\alpha \in (0,1]$, the mixture $q_i$ satisfies:
\begin{equation}
\chi^2(q_{i}\,\|\,\pdata) \;=\; (1-\alpha)^2\,\chi^2(\hat p^i\,\|\,\pdata)
\label{eq:refresh-exact}
\end{equation}
\end{lemma}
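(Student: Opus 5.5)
The plan is a direct computation, using that the $\chi^2$-divergence is a quadratic functional of the likelihood ratio and that the refresh step acts affinely on that ratio. Write $h_i := \hat p^{\,i}/\pdata$, which is well defined $\pdata$-a.e. by the standing assumption $\hat p^{\,i}\ll\pdata$. By the definition of $q_i$ in \eqref{eq:mixture}, $q_i\ll\pdata$ as well, and its ratio is
\[
\frac{q_i}{\pdata} \;=\; \alpha + (1-\alpha)\,h_i .
\]
Subtracting $1$ gives the key identity
\[
\frac{q_i}{\pdata} - 1 \;=\; (1-\alpha)\,(h_i - 1),
\]
which holds pointwise $\pdata$-a.e.

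Next, plug this into the definition \eqref{eq:fDiv} with $f(u)=(u-1)^2$, integrating against $\pdata$:
\[
\chi^2(q_i\|\pdata) \;=\; \E_{\pdata}\!\Big[\Big(\tfrac{q_i}{\pdata}-1\Big)^2\Big] \;=\; (1-\alpha)^2\,\E_{\pdata}\big[(h_i-1)^2\big] \;=\; (1-\alpha)^2\,\chi^2(\hat p^{\,i}\|\pdata).
\]
This is exactly \eqref{eq:refresh-exact}.

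The only point needing care is integrability. The identity is an equality of nonnegative integrands up to the constant factor $(1-\alpha)^2$, so it holds in $[0,\infty]$ with no finiteness assumption: if $D_i=\infty$, both sides are infinite when $\alpha<1$. The case $\alpha=1$ is trivial, since then $q_i=\pdata$ and both sides vanish, with the convention $0\cdot\infty=0$. Beyond this there is no real obstacle. The step I would double-check is that absolute continuity of $q_i$ with respect to $\pdata$ is inherited from $\hat p^{\,i}\ll\pdata$, so that the ratio form of $\chi^2$ is legitimate.
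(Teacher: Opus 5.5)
Your proof is correct and follows the paper's argument exactly: you establish the pointwise identity $q_i/\pdata - 1 = (1-\alpha)(\hat p^{\,i}/\pdata - 1)$, then square it and integrate against $\pdata$. Your remarks on inherited absolute continuity and on the $[0,\infty]$-valued case (including $\alpha=1$) are careful refinements that the paper leaves implicit.
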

\begin{proof}
Pointwise, the density ratio satisfies:
\[
\frac{q_i}{\pdata} - 1 \;=\; \alpha + (1-\alpha)\frac{\phat^i}{\pdata} - 1 \;=\; (1-\alpha)\left(\frac{\phat^i}{\pdata} - 1\right).
\]
Squaring both sides and integrating with respect to $\pdata$ yields the result.
\end{proof}

Next, we show that the innovation error $I_i$ cannot be completely hidden by the contraction.

\begin{lemma}[Lower Bound Recursion]
\label{lem:Di-two-step-lower}
\begin{equation}
D_{i+1} + (1-\alpha)^2 D_i \;\ge\; \frac{\alpha}{2} I_i.
\end{equation}
\end{lemma}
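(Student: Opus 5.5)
The plan is to compare the two $\chi^2$-type quantities through the common reference measure $\pdata$, using that the mixture $q_i$ dominates $\pdata$ from below. First I would write every divergence as a weighted $L^2$ distance between densities:
\[
I_i=\int \frac{(\hat p^{\,i+1}-q_i)^2}{q_i}\,\md\bx,\qquad D_{i+1}=\int\frac{(\hat p^{\,i+1}-\pdata)^2}{\pdata}\,\md\bx,\qquad \chi^2(q_i\|\pdata)=\int\frac{(q_i-\pdata)^2}{\pdata}\,\md\bx .
\]
If $D_{i+1}$ or $D_i$ is infinite there is nothing to prove, so both can be taken finite.

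\emph{Step 1 (change of weight).} From \eqref{eq:mixture}, $q_i=\alpha\pdata+(1-\alpha)\hat p^{\,i}\ge \alpha\,\pdata$ pointwise. Hence $1/q_i\le 1/(\alpha\pdata)$ wherever $\pdata>0$. The absolute-continuity assumptions make the set $\{\pdata=0\}$ irrelevant, since there $q_i$ and $\hat p^{\,i+1}$ vanish almost everywhere. This gives
\[
I_i\le \frac{1}{\alpha}\int\frac{(\hat p^{\,i+1}-q_i)^2}{\pdata}\,\md\bx .
\]

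\emph{Step 2 (triangle inequality).} Next I would write $\hat p^{\,i+1}-q_i=(\hat p^{\,i+1}-\pdata)-(q_i-\pdata)$ and apply $(a-b)^2\le 2a^2+2b^2$. This bounds the right-hand integral by $2D_{i+1}+2\chi^2(q_i\|\pdata)$.

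\emph{Step 3 (refresh contraction).} Lemma~\ref{lem:refresh-exact} gives $\chi^2(q_i\|\pdata)=(1-\alpha)^2D_i$. Combining the three steps yields
\[
\alpha I_i\le 2\big(D_{i+1}+(1-\alpha)^2D_i\big),
\]
which is the claim after dividing by $2$.

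\emph{Main obstacle.} There is no real analytic difficulty here. The only care point is the change of reference measure in Step~1: it must be justified on the support of $\pdata$, and one must check that $\hat p^{\,i+1}\ll\pdata$ so that $D_{i+1}$ is well defined. That follows from $\hat p^{\,i+1}\ll q_i$ (Girsanov, Proposition~\ref{prop:upper-main}) together with $q_i\ll\pdata$ (the model assumption $\hat p^{\,i}\ll\pdata$).
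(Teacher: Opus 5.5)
Your proof is correct and is the paper's argument written with density differences instead of likelihood ratios: the paper's $A=(\hat p^{\,i+1}-q_i)/\pdata$ and $B=(q_i-\pdata)/\pdata$, its inequality $(A+B)^2\ge\frac12A^2-B^2$ is your $(a-b)^2\le 2a^2+2b^2$, and it likewise uses $q_i/\pdata\ge\alpha$ and Lemma~\ref{lem:refresh-exact}. One small point: $\hat p^{\,i+1}\ll\pdata$ is already part of the paper's standing model assumptions. You therefore do not need Girsanov for it, which matters because the Girsanov assumptions are only imposed for $i\ge i_0$, while the lemma is stated for all $i$.
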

\begin{proof}
We decompose the likelihood ratio of the next generation $\phat^{i+1}$ against the target $\pdata$:
\[
\frac{\hat p^{i+1}}{\pdata} - 1 \;=\; \underbrace{\left(\frac{\hat p^{i+1}}{q_i} - 1\right)\frac{q_i}{\pdata}}_{=: A} \;+\; \underbrace{\left(\frac{q_i}{\pdata} - 1\right)}_{=: B}.
\]
Then $D_{i+1} = \E_{\pdata}[(A+B)^2]$. Using the algebraic inequality $(A+B)^2 \ge \frac{1}{2}A^2 - B^2$:
\[
D_{i+1} \;\ge\; \frac{1}{2}\E_{\pdata}[A^2] \;-\; \E_{\pdata}[B^2].
\]
\medskip

1. \textit{Term B:} By Lemma \ref{lem:refresh-exact}, $\E_{\pdata}[B^2] = \chi^2(q_i\|\pdata) = (1-\alpha)^2 D_i$.

\medskip
2. \textit{Term A:} We have:
\[
\E_{\pdata}[A^2] = \int \left(\frac{\phat^{i+1}}{q_i}-1\right)^2 \frac{q_i^2}{\pdata^2} \pdata = \int \left(\frac{\phat^{i+1}}{q_i}-1\right)^2 \frac{q_i}{\pdata} q_i.
\]
Since $q_i = \alpha \pdata + (1-\alpha)\phat^i \ge \alpha \pdata$, we have $q_i/\pdata \ge \alpha$. Thus:
\[
\E_{\pdata}[A^2] \;\ge\; \alpha \int \left(\frac{\phat^{i+1}}{q_i}-1\right)^2 q_i \;=\; \alpha I_i.
\]
Substituting these back yields $D_{i+1} \ge \frac{\alpha}{2}I_i - (1-\alpha)^2 D_i$. Rearranging gives the result.
\end{proof}

\label{subsection:regime-B}
\begin{lemma}[Reverse H\"older bound for $R_i$]
\label{lem:RH-Ri}
Assume \ref{ass:exponential-integrability-girsanov} with $\delta>1$. Recall $q_i:=\alpha \pdata+(1-\alpha)\hat p^{\,i}$, and $
R_i:=\frac{\hat p^{\,i+1}}{q_i}$. Then for all $i\ge i_0$,
\begin{equation}
\label{eq:RH-Ri}
\E_{q_i}\!\big[R_i^{1+\delta}\big]\le C_\delta,
\end{equation}
and consequently for any $r\in[1,1+\delta]$,
\begin{equation}
\label{eq:Ri-r-moment}
\E_{q_i}\!\big[R_i^{r}\big]\le C_\delta^{\frac{r-1}{\delta}}.
\end{equation}
\end{lemma}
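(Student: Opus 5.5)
The plan is to obtain the bound on $\E_{q_i}[R_i^{1+\delta}]$ from the marginal ratio representation of Proposition~\ref{prop:girsanov-ratio} and conditional Jensen, and then get the intermediate moments by interpolation.

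\emph{Step 1.} Proposition~\ref{prop:girsanov-ratio} gives $R_i(\bY_{t_0})=\E_{\Prob_i^\star}[e^{Z_T^i}\mid \bY_{t_0}]$ $q_i$-almost everywhere. Since $x\mapsto x^{1+\delta}$ is convex on $[0,\infty)$ and $e^{Z_T^i}\ge 0$, conditional Jensen yields
\[
R_i(\bY_{t_0})^{1+\delta}\le \E_{\Prob_i^\star}\big[e^{(1+\delta)Z_T^i}\mid \bY_{t_0}\big].
\]
Under $\Prob_i^\star$ the law of $\bY_{t_0}$ is $q_i$, by \eqref{eq:Ystar_Yhat_marginals}. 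Taking expectations and using the tower property therefore gives $\E_{q_i}[R_i^{1+\delta}]\le \E_{\Prob_i^\star}[e^{(1+\delta)Z_T^i}]\le C_\delta$, uniformly in $i\ge i_0$, by Assumption~\ref{ass:exponential-integrability-girsanov}. This is \eqref{eq:RH-Ri}.

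\emph{Step 2.} For $r\in[1,1+\delta]$, first note that $\E_{q_i}[R_i]=\int \hat p^{\,i+1}=1$. This uses that $\hat p^{\,i+1}\ll q_i$, which holds because $\hat\Prob_i\ll\Prob_i^\star$, and that $e^{Z_T^i}$ is a true martingale by Assumption~\ref{ass:exponential-is-a-martingale}. Next write $r=(1-\lambda)\cdot 1+\lambda(1+\delta)$ with $\lambda=(r-1)/\delta\in[0,1]$. Log-convexity of $L^p$ moments (Lyapunov), or equivalently H\"older's inequality with exponents $1/(1-\lambda)$ and $1/\lambda$ applied to $R_i^{r}=R_i^{1-\lambda}R_i^{\lambda(1+\delta)}$, gives
\[
\E_{q_i}[R_i^r]\le \E_{q_i}[R_i]^{1-\lambda}\,\E_{q_i}[R_i^{1+\delta}]^{\lambda}\le C_\delta^{(r-1)/\delta}.
\]
This is \eqref{eq:Ri-r-moment}. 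The endpoint cases $\lambda=0$ and $\lambda=1$ are trivial.

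I expect no step to be a genuine obstacle. The only points that need care are measure-theoretic: the representation of $R_i$ holds only $q_i$-almost everywhere, and the normalisation $\E_{q_i}[R_i]=1$ requires the true-martingale property rather than only the local-martingale property. Both are supplied by Assumptions~\ref{ass:finite-drift-girsanov} and \ref{ass:exponential-is-a-martingale}.
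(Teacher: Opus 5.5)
Your proposal is correct and follows the paper's proof essentially verbatim. For \eqref{eq:RH-Ri} you apply conditional Jensen to the marginal projection identity, then the tower property and Assumption \ref{ass:exponential-integrability-girsanov}; for \eqref{eq:Ri-r-moment} you interpolate between $L^1$ and $L^{1+\delta}$ with exponent $(r-1)/\delta$, using $\E_{q_i}[R_i]=1$. Your measure-theoretic caveat about the true-martingale property is harmless but not needed, since $\E_{q_i}[R_i]\le 1$ already gives the same interpolation bound.
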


\begin{proof}
By the marginal projection identity \eqref{eq:marginal_projection},
\[
R_i(\bY_{t_0})
=\E_{\Prob_i^\star}\!\big[e^{Z_T^i}\,\big|\,\bY_{t_0}\big],
\qquad \bY_{t_0}\sim q_i \text{ under }\Prob_i^\star.
\]
Since $x\mapsto x^{1+\delta}$ is convex, conditional Jensen yields
\[
R_i(\bY_{t_0})^{1+\delta}
\le
\E_{\Prob_i^\star}\!\big[e^{(1+\delta)Z_T^i}\,\big|\,\bY_{t_0}\big].
\]
Taking expectation under $\Prob_i^\star$ and using \eqref{eq:A6-exponential} gives
\[
\E_{q_i}[R_i^{1+\delta}]
=
\E_{\Prob_i^\star}[R_i(\bY_{t_0})^{1+\delta}]
\le
\E_{\Prob_i^\star}[e^{(1+\delta)Z_T^i}]
\le C_\delta.
\]
This proves \eqref{eq:RH-Ri}. To get \eqref{eq:Ri-r-moment}, take $r \in [1, 1+\delta]$ and fix $\theta=\tfrac{r-1}{\delta}$. Then, by interpolation with $L^1$, 
$$
\E_{q_i}[R_i^r] \le (\E_{q_i}[R_i])^{1-\theta}(\E_{q_i}[R_i^{1+\delta}])^\theta =  C_\delta^\theta, \qquad \theta = \frac{r-1}{\delta},
$$
since, by definition of $R_i$, $\E_{q_i}[R_i]=1$. This is exactly \eqref{eq:Ri-r-moment}.
\end{proof}

\subsection{Proof of Proposition \ref{prop:persistent-errors-main} }
We restate  Proposition \ref{prop:persistent-errors-main} with explicit constants and then prove it. 
\label{subsection:proof-regime-A}
\begin{proposition}
\label{prop:regime-A-eps-nonsummability-spikes} 
Assume \ref{ass:finite-drift-girsanov}, \ref{ass:exponential-is-a-martingale}, \ref{ass:exponential-integrability-girsanov} and \ref{ass:quadratic-variation-concentration}. Let $D_i := \chi^2(\hat p^i \| \pdata)$. Assume that there exist constants $\underline{\eta}>0$ and $i_0\ge 0$ such that
\begin{equation}
\label{eq:regime-A-perturbative-condition}
\eta_i \ge \underline{\eta} \quad\text{and}\quad \varepsilon_{\star,i}^2 \le \min\Big\{1,\frac{\underline{\eta}}{8C_{\mathrm{Low}}}\Big\}
\qquad \forall i\ge i_0.
\end{equation}
where $C_{\mathrm{Low}} \in (0, \infty)$ is defined in Theorem \ref{thm:two-sided-main}. Then the following statements hold.

\medskip
\noindent\textbf{(i) Divergent perturbative energy implies divergent accumulated error.}
For all $i\ge i_0$, 
\[
\sum_{i\ge i_0}\varepsilon_{\star,i}^2 = +\infty
\implies \sum_{i\ge 0} D_i = +\infty
\]
In particular, $(D_i)_{i\ge0}$ cannot converge to $0$ \emph{and} be summable.

\medskip
\noindent\textbf{(ii) Uniform perturbation floor implies recurring spikes.}
Assume there exist $i_0\ge 0$, $\underline{\eta}>0$, and $\underline\varepsilon>0$ such that
\[
\eta_i \ge \underline{\eta},
\qquad
\underline\varepsilon \le \varepsilon_{\star,i}^2 \le \min\Big\{1,\frac{\underline{\eta}}{8C_{\mathrm{Low}}}\Big\}
\qquad \forall i\ge i_0.
\]
Then,
\begin{equation}
\label{eq:spike-infinitely-often-eps}
\limsup_{i\to\infty} D_i
\;\ge\;
\frac{\alpha}{2(1+(1-\alpha)^2)}\cdot \frac18\,\underline{\eta}\,\underline\varepsilon,
\end{equation}
and in particular, there exist infinitely many indices $i\ge i_0$ such that 
\[
D_i \;\ge\; \frac{\alpha}{32(1+(1-\alpha)^2)}\,\underline{\eta}\,\underline\varepsilon.
\]
\end{proposition}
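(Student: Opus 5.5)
The plan combines the one-step lower bound recursion of Lemma~\ref{lem:Di-two-step-lower} with the clean intra-generational lower bound of Proposition~\ref{prop:lower-main-explicit}. Under the perturbative condition \eqref{eq:regime-A-perturbative-condition}, and because $\eta_i\ge\underline\eta$, we have $\varepsilon_{\star,i}^2\le \eta_i/(8C_{\mathrm{Low}})$. The clean bound then gives $I_i\ge \frac18\eta_i\varepsilon_{\star,i}^2\ge\frac18\underline\eta\,\varepsilon_{\star,i}^2$ for every $i\ge i_0$. Plugging this into Lemma~\ref{lem:Di-two-step-lower} yields the key inequality
\[
D_{i+1}+(1-\alpha)^2D_i\;\ge\;\frac{\alpha}{16}\,\underline\eta\,\varepsilon_{\star,i}^2,\qquad i\ge i_0.
\]
Both parts of the proposition follow from this single two-term inequality.

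\emph{Part (i).} I sum the key inequality over $i=i_0,\dots,n$. The left-hand side is at most
\[
\sum_{i=i_0}^{n}D_{i+1}+(1-\alpha)^2\sum_{i=i_0}^nD_i\;\le\;\bigl(1+(1-\alpha)^2\bigr)\sum_{j=0}^{n+1}D_j,
\]
using $D_j\ge0$. The right-hand side is $\frac{\alpha\underline\eta}{16}\sum_{i=i_0}^n\varepsilon_{\star,i}^2$, which diverges as $n\to\infty$ by hypothesis, and $\alpha>0$. Letting $n\to\infty$ therefore gives $\sum_j D_j=+\infty$. The ``in particular'' clause is immediate, since a summable sequence is excluded.

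\emph{Part (ii).} With the floor $\varepsilon_{\star,i}^2\ge\underline\varepsilon$, the key inequality reads $D_{i+1}+(1-\alpha)^2D_i\ge \frac{\alpha}{16}\underline\eta\,\underline\varepsilon=:\beta$ for all $i\ge i_0$. Let $L=\limsup_i D_i$; if $L=\infty$ there is nothing to prove. Otherwise, for any $\epsilon>0$ we have $D_i\le L+\epsilon$ for all large $i$. Substituting this bound into both terms gives $(1+(1-\alpha)^2)(L+\epsilon)\ge\beta$. Letting $\epsilon\to0$ yields
\[
L\;\ge\;\frac{\beta}{1+(1-\alpha)^2}\;=\;\frac{\alpha}{16(1+(1-\alpha)^2)}\,\underline\eta\,\underline\varepsilon,
\]
which matches \eqref{eq:spike-infinitely-often-eps}, since $\frac{\alpha}{2(\cdot)}\cdot\frac18=\frac{\alpha}{16(\cdot)}$.

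For the infinitely-often statement, I argue directly rather than through the limsup. For each $i\ge i_0$, at least one of $D_{i+1}$ and $D_i$ must be at least $\beta/(1+(1-\alpha)^2)$. Otherwise the sum $D_{i+1}+(1-\alpha)^2D_i$ would fall below $\beta$, contradicting the key inequality. Hence every consecutive pair of indices contains one where $D$ exceeds that threshold, which gives infinitely many such indices. This threshold is larger than the stated $\frac{\alpha}{32(\cdot)}\underline\eta\,\underline\varepsilon$, so the claim holds a fortiori.

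\emph{Main obstacle.} There is no genuinely hard step. The only delicate point is bookkeeping the constants: the perturbative threshold must be expressed through $\underline\eta$ and $C_{\mathrm{Low}}$ so that the clean bound $\frac18\eta_i\varepsilon_{\star,i}^2$ of Proposition~\ref{prop:lower-main-explicit} applies uniformly in $i\ge i_0$. One must also check that $C_{\mathrm{Low}}$ coincides with the constant $C$ appearing there.
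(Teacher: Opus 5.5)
Your proof is correct. It rests on the same key inequality as the paper's: Lemma~\ref{lem:Di-two-step-lower} combined with the clean lower bound $I_i\ge\frac18\underline\eta\,\varepsilon_{\star,i}^2$. The paper cites Theorem~\ref{thm:two-sided-main} for that bound, which is the same statement with $C_{\mathrm{Low}}=C$.

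Part (i) is the same summation argument as the paper's. Your bound on the left-hand side by $(1+(1-\alpha)^2)\sum_{j\le n+1}D_j$ is tidier than the paper's reindexing. The paper drops the nonnegative term $D_n$ from the larger side of the inequality. That is harmless for the conclusion but not justified as written, and the same slip recurs in its averaged bound for (ii).

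The genuine difference is in part (ii). The paper sums the recursion and passes to Ces\`aro averages $\frac{1}{n-i_0}\sum_{i=i_0}^{n-1}D_i$. It then argues by contradiction that the $\limsup$ cannot lie below the averaged floor. The ``infinitely often'' claim at threshold $\frac{\alpha}{32(1+(1-\alpha)^2)}\underline\eta\,\underline\varepsilon$ is left as an implicit consequence.

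You instead substitute the $\limsup$ directly into the two-term inequality. You also observe that every consecutive pair $\{i,i+1\}$ with $i\ge i_0$ must contain an index with $D\ge\frac{\alpha}{16(1+(1-\alpha)^2)}\underline\eta\,\underline\varepsilon$. This route is more elementary and avoids averaging entirely. It also gives a strictly stronger conclusion: spikes at the full threshold rather than half of it, separated by gaps of at most one generation. That already implies the stated $\limsup$ bound.
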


\begin{proof}
Lemma \ref{lem:Di-two-step-lower} gives,
\begin{equation}
\label{eq:two-step-lower-recursion-eps-proof}
D_{i+1} + (1-\alpha)^2 D_i \;\ge\; \frac{\alpha}{2} I_i.
\end{equation}
Under the conditions of  \eqref{eq:regime-A-perturbative-condition}, Theorem~\ref{thm:two-sided-main} yields for all $i\ge i_0$,
\begin{equation}
\label{eq:chi-lower-linearized}
I_i := \chi^2(\hat p^{\,i+1}\|q_i)
\;\ge\;
\frac18\,\underline{\eta}\,\varepsilon_{\star,i}^2.
\end{equation}
Combining \eqref{eq:two-step-lower-recursion-eps-proof} and \eqref{eq:chi-lower-linearized} gives, for all $i\ge i_0$,
\begin{equation}
\label{eq:two-step-lower-recursion-eps2}
D_{i+1} + (1-\alpha)^2 D_i \;\ge\; \frac{\alpha}{16}\,\underline{\eta}\,\varepsilon_{\star,i}^2,
\end{equation}

\paragraph{Proof of (i).}
Summing \eqref{eq:two-step-lower-recursion-eps2} from $i=i_0$ to $n-1$:
\[
\sum_{i=i_0}^{n-1} D_{i+1} \;+\; (1-\alpha)^2\sum_{i=i_0}^{n-1} D_i
\;\ge\;
\frac{\alpha}{16}\,\underline{\eta} \sum_{i=i_0}^{n-1} \varepsilon_{\star,i}^2.
\]
Reindexing $\sum_{i=i_0}^{n-1}D_{i+1}=\sum_{i=i_0+1}^{n}D_i$, the left-hand side equals
\[
(1+(1-\alpha)^2)\sum_{i=i_0+1}^{n-1}D_i \;+\; D_n \;+\; (1-\alpha)^2 D_{i_0}.
\]
Dropping $D_n\ge 0$ and using $\sum_{i=i_0}^{n-1}D_i \ge \sum_{i=i_0+1}^{n-1}D_i$ yields
\[
(1+(1-\alpha)^2)\sum_{i=i_0}^{n-1}D_i
\;\ge\;
\frac{\alpha}{16}\,\underline{\eta} \sum_{i=i_0}^{n-1}\varepsilon_{\star,i}^2 
\;-\; 
(1-\alpha)^2 D_{i_0}.
\]
Letting $n\to\infty$, if $\sum_{i\ge i_0}\varepsilon_{\star,i}^2=+\infty$, then $\sum_{i\ge 0}D_i=+\infty$.

\paragraph{Proof of (ii).}
Assume in addition that $\varepsilon_{\star,i}^2 \ge \underline{\varepsilon}$ for all $i\ge i_0$.
Then \eqref{eq:two-step-lower-recursion-eps2} implies
\[
D_{i+1}+(1-\alpha)^2 D_i \;\ge\; \frac{\alpha}{16}\,\underline{\eta}\,\underline{\varepsilon},
\qquad \forall i\ge i_0.
\]
Summing from $i=i_0$ to $n-1$ and repeating the same algebra as above yields
\[
(1+(1-\alpha)^2)\sum_{i=i_0}^{n-1}D_i
\;\ge\;
\frac{\alpha}{16}\,\underline{\eta} (n-i_0)\underline{\varepsilon} \;-\; (1-\alpha)^2 D_{i_0}.
\]
Dividing by $n-i_0$ gives the averaged lower bound
\[
\frac{1}{n-i_0}\sum_{i=i_0}^{n-1} D_i
\;\ge\;
\frac{\alpha\underline{\eta}}{16(1+(1-\alpha)^2)}\,\underline{\varepsilon}
\;-\;
\frac{(1-\alpha)^2 D_{i_0}}{(1+(1-\alpha)^2)(n-i_0)}.
\]
Taking the limit, we have
\[
\liminf_{n\to\infty}
\frac{1}{n-i_0}\sum_{i=i_0}^{n-1} D_i
\;\ge\;
\frac{\alpha\underline{\eta}}{16(1+(1-\alpha)^2)}\,\underline{\varepsilon}.
\]
Assume towards contradiction that $\limsup_{i\to\infty} D_i < 
\frac{\alpha\underline{\eta}}{16(1+(1-\alpha)^2)}\,\underline{\varepsilon}$.
Then, by definition of the $\mathrm{lim sup}$ there exist $\epsilon>0$ and $N$ such that for all $i\ge N$,
$D_i 
\le 
\frac{\alpha\underline{\eta}}{16(1+(1-\alpha)^2)}\,\underline{\varepsilon}-\epsilon$.
Averaging yields
\[
\limsup_{n\to\infty}
\frac{1}{n-i_0}\sum_{i=i_0}^{n-1} D_i
\;\le\;
\frac{\alpha\underline{\eta}}{16(1+(1-\alpha)^2)}\,\underline{\varepsilon}
-
\epsilon,
\]
contradicting the lower bound. Hence
\[
\limsup_{i\to\infty} D_i
\;\ge\;
\frac{\alpha\underline{\eta}}{16(1+(1-\alpha)^2)}\,\underline{\varepsilon}.
\]
\end{proof}

\subsection{Proof of Theorem \ref{thm:divergence-decomposition-finite-horizon-main}}

\label{subsection:divergence-decomposition-finite-horizon-proof}

We first establish a result (Proposition \ref{prop:two-step-recursion-regime-B}) that bounds $D_{i+1}$ in terms of $D_i$ and $I_i$. We then use the result to prove Theorem \ref{thm:perturbative-stability-sum}, which shows that under the given assumptions and the summability condition $\sum_{i} \varepsilon_{\star, i}^2 < \infty$,  there exists an explicit constant $D_{\max}<\infty$ such that $\sup_{i\ge 0}D_i \le D_{\max}$. This theorem is then used to prove Theorem \ref{thm:divergence-decomposition-finite-horizon-main}. 

We begin by discussing Assumption \ref{ass:Ti-bound-Gc} and its necessity.

\paragraph{Compatibility on the tail set $\mathcal G_i^c$.}
Proposition \ref{prop:two-step-recursion-regime-B} aims to establish a two-step recursive upper control on $D_i$, for which we need to bound the mixed tail term
\[
\E_{q_i}\!\big[(R_i-1)^2\,T_i\,\mathbf 1_{\mathcal G_i^c}\big],
\]
where we recall $R_i(\bx):=\frac{\hat p^{\,i+1}(\bx)}{q_i(\bx)}$, $T_i(\bx):=\frac{q_i(\bx)}{p_{\text{data}}(\bx)} = \alpha + (1-\alpha) \frac{\phat^i(\bx)}{p_{\text{data}}(\bx)}$, and the set $\mathcal G_i$ defined in \eqref{eq:Gi_def}.
Crucially, this quantity couples two \emph{a priori distinct} ratios:
(i) $T_i$, which measures how far the training mixture $q_i$ deviates from $\pdata$, and
(ii) $R_i$, which measures the sampling/learning error incurred when moving from $q_i$ to $\hat p^{\,i+1}$.
On $\mathcal G_i^c$ the density ratio $T_i$ can take arbitrarily large values even when $\chi^2(q_i\|\pdata)$ is finite; therefore, without additional structure, the product $(R_i-1)^2T_i$ can concentrate precisely where $T_i$ is large, making $\E_{q_i}[(R_i-1)^2T_i\mathbf 1_{\mathcal G_i^c}]$ uncontrolled. In particular, it is possible that $T_i$ has heavy tails on $\mathcal G_i^c$ and $(R_i-1)^2$ is positively aligned with those tails, so that $\E_{q_i}[(R_i-1)^2T_i\mathbf 1_{\mathcal G_i^c}]$ is large despite small global errors measured by $I_i=\E_{q_i}[(R_i-1)^2]$.
We therefore impose the following \emph{compatibility} condition: 
\begin{enumerate}
    \item[\textbf{(A5)}] \label{ass:Ti-bound-Gc_appendix} For $\delta>1$ given in \ref{ass:exponential-integrability-girsanov}, and $p':=\frac{1+\delta}{\delta-1}$,  
    \[
    \sup_{i\geq i_0}\Big\{\zeta_i^{-1}\E_{q_i}\!\big[T_i^{p'}\mathbf{1}_{\mathcal G_i^c}\big]\Big\}\le C_{\zeta},
    \]
\end{enumerate}
which states that the contribution of regions where $q_i$ is much larger than $\pdata$ is controlled in an $L^{p'}(q_i)$ sense. This assumption is mild in the regimes of interest: $\mathcal G_i$ is constructed adaptively so that $\pdata(\mathcal G_i^c)$ is small, and the condition only restricts the \emph{tail moment on that small set}, not the global behavior of $T_i$. Finally, some condition of this type is essentially unavoidable: any bound on a mixed product term requires either (a) pointwise control of $T_i$ on $\mathcal G_i^c$ (which is false by construction), or (b) an integrability/compatibility constraint preventing $(R_i-1)^2$ from concentrating where $T_i$ is extreme.

\begin{proposition}[Deterministic two-step upper recursion]
\label{prop:two-step-recursion-regime-B}
Assume \ref{ass:finite-drift-girsanov}-\ref{ass:Ti-bound-Gc}, fix $\alpha\in(0,1]$ and recall
\[
D_i:=\chi^2(\hat p^{\,i}\|\pdata),\qquad
q_i:=\alpha \pdata+(1-\alpha)\hat p^{\,i},\qquad
R_i:=\frac{\hat p^{\,i+1}}{q_i}.
\]
Fix $\rho\in(0,1)$ and $\zeta_i\in(0,1)$, and define the bulk sets
\[
\mathcal G_i
:=\Big\{x:\Big|\frac{\hat p^{\,i}(x)}{\pdata(x)}-1\Big|\le \sqrt{\frac{D_i}{\zeta_i}}\Big\},
\qquad
\mathcal A_i(\rho)
:=\Big\{x:\big|R_i(x)-1\big|\le \rho\Big\},
\qquad
\Omega_i:=\mathcal G_i\cap \mathcal A_i(\rho).
\]
Let
\[
I_i:=\chi^2(\hat p^{\,i+1}\|q_i)=\E_{q_i}\big[(R_i-1)^2\big],
\qquad
B_i:=\alpha+(1-\alpha)\Big(1+\sqrt{\frac{D_i}{\zeta_i}}\Big).
\]
Then for all $i$,
\begin{equation}
\label{eq:two-step-upper-regB}
\sqrt{D_{i+1}}
\le
(1-\alpha)\sqrt{D_i} 
+ 
\sqrt{
B_i [I_i +\rho^{-(\delta-1)} 2^{\delta}(C_\delta+1)]
+ 
\rho^2(1 + (1-\alpha)^2 D_i)
+ 
K_\delta\zeta_i^{\frac{\delta-1}{1+\delta}}
},
\end{equation}
where $K_\delta = \big(2^{\delta}(C_\delta+1)\big)^{\frac{2}{1+\delta}}
\,(C_{\zeta})^{\frac{\delta-1}{1+\delta}}$ (the constants $\delta, C_\delta$ are from Assumption \ref{ass:exponential-integrability-girsanov} and $C_{\zeta}$ is from \ref{ass:Ti-bound-Gc}).
\end{proposition}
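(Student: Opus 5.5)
The plan is to start from the pointwise decomposition already used in Lemma~\ref{lem:Di-two-step-lower}. With $T_i=q_i/\pdata$,
\[
\frac{\hat p^{\,i+1}}{\pdata}-1=(R_i-1)\,T_i+(T_i-1).
\]
Since $\sqrt{D_{i+1}}$ is the $L^2(\pdata)$ norm of the left-hand side, Minkowski's inequality gives
\[
\sqrt{D_{i+1}}\le \|T_i-1\|_{L^2(\pdata)}+\|(R_i-1)T_i\|_{L^2(\pdata)}.
\]
By Lemma~\ref{lem:refresh-exact}, the first term equals exactly $(1-\alpha)\sqrt{D_i}$. The whole proof then reduces to showing that
\[
\|(R_i-1)T_i\|_{L^2(\pdata)}^2=\E_{\pdata}\big[(R_i-1)^2T_i^2\big]=\E_{q_i}\big[(R_i-1)^2T_i\big]
\]
is bounded by the expression under the square root in \eqref{eq:two-step-upper-regB}.

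I would bound $\E_{q_i}[(R_i-1)^2T_i]$ by splitting space into three pieces: $\mathcal G_i$, $\mathcal G_i^c\cap\mathcal A_i(\rho)$, and $\mathcal G_i^c\cap\mathcal A_i(\rho)^c$.
\begin{enumerate}
\item On $\mathcal G_i$ the ratio $\hat p^{\,i}/\pdata$ is at most $1+\sqrt{D_i/\zeta_i}$, so $T_i\le B_i$ pointwise. Hence this piece is at most $B_i\,\E_{q_i}[(R_i-1)^2\mathbf 1_{\mathcal G_i}]$.
\begin{itemize}
\item On $\Omega_i$ (where $|R_i-1|\le\rho$), this is bounded by $B_iI_i$.
\item On $\mathcal G_i\cap\mathcal A_i(\rho)^c$, I would use a Markov-type truncation: $(R_i-1)^2\le \rho^{-(\delta-1)}|R_i-1|^{1+\delta}$ on $\{|R_i-1|>\rho\}$. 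Then $|R_i-1|^{1+\delta}\le 2^{\delta}(R_i^{1+\delta}+1)$, and Lemma~\ref{lem:RH-Ri} gives $\E_{q_i}|R_i-1|^{1+\delta}\le 2^\delta(C_\delta+1)$. This piece is therefore at most $B_i\rho^{-(\delta-1)}2^\delta(C_\delta+1)$.
\end{itemize}
\item On $\mathcal G_i^c\cap\mathcal A_i(\rho)$ I would use $(R_i-1)^2\le\rho^2$. The piece is then at most $\rho^2\E_{q_i}[T_i]=\rho^2\E_{\pdata}[T_i^2]=\rho^2(1+\chi^2(q_i\|\pdata))=\rho^2(1+(1-\alpha)^2D_i)$, again by Lemma~\ref{lem:refresh-exact}. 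This is generous, since it is bounded over the whole space, which is harmless.
\item On $\mathcal G_i^c\cap\mathcal A_i(\rho)^c$ I would apply H\"older with $p=\frac{1+\delta}{2}$ and $p'=\frac{1+\delta}{\delta-1}$:
\[
\E_{q_i}\big[(R_i-1)^2T_i\mathbf 1_{\mathcal G_i^c}\big]\le \big(\E_{q_i}|R_i-1|^{1+\delta}\big)^{\frac{2}{1+\delta}}\big(\E_{q_i}[T_i^{p'}\mathbf 1_{\mathcal G_i^c}]\big)^{\frac{\delta-1}{1+\delta}}.
\]
The first factor is controlled by $(2^\delta(C_\delta+1))^{2/(1+\delta)}$ via Lemma~\ref{lem:RH-Ri}. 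The second is at most $(C_\zeta\zeta_i)^{(\delta-1)/(1+\delta)}$ by \ref{ass:Ti-bound-Gc}. Together this gives exactly $K_\delta\zeta_i^{\frac{\delta-1}{1+\delta}}$.
\end{enumerate}

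Summing the three pieces and taking square roots yields \eqref{eq:two-step-upper-regB}.

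The main obstacle is the tail piece on $\mathcal G_i^c$. There $T_i$ has no pointwise bound, so the coupling between the learning error $R_i$ and the heavy tail of $T_i$ must be broken. Hölder must be matched precisely to the available moments: the $L^{1+\delta}$ control of $R_i$ coming from \ref{ass:exponential-integrability-girsanov} through conditional Jensen, and the $L^{p'}$ tail control of $T_i$ in \ref{ass:Ti-bound-Gc}. This is what forces the conjugate pair $(\frac{1+\delta}{2},p')$. Everything else is bookkeeping: the identity $\E_{\pdata}[f\,T_i^2]=\E_{q_i}[f\,T_i]$ and the exact contraction lemma.
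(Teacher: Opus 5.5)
Your proposal is correct and matches the paper's proof essentially step for step. You use the same Minkowski split of $\hat p^{\,i+1}/\pdata-1$ into $(R_i-1)T_i$ and $T_i-1$ with the exact refresh contraction, and the same four-way partition of $\E_{q_i}[(R_i-1)^2T_i]$ by $\mathcal G_i$ and $\mathcal A_i(\rho)$, with identical bounds on each piece: pointwise $T_i\le B_i$ on $\mathcal G_i$, the $\rho^{-(\delta-1)}$ truncation plus Lemma~\ref{lem:RH-Ri}, $\rho^2\E_{\pdata}[T_i^2]$ on $\mathcal G_i^c\cap\mathcal A_i(\rho)$, and H\"older with exponents $\frac{1+\delta}{2}$ and $p'$ together with \ref{ass:Ti-bound-Gc} on $\mathcal G_i^c\cap\mathcal A_i(\rho)^c$.
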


\begin{proof}
Recall,
\[
T_i:=\frac{q_i}{\pdata},\qquad
A_i:= \Big(\frac{\hat p^{\,i+1}}{q_i}-1\Big)\,\frac{q_i}{\pdata} = (R_i-1)\,T_i.
\]
Moreover, here, and in the rest of the proof,  we use the notation $\|A_i\|_{L^2(\pdata)} : = \left(\E_{\pdata}[A_i^2] \right)^{1/2}$, and similarly for other random variables.

\medskip
\noindent
One has
\begin{equation}
\label{eq:AB-decomp}
\frac{\hat p^{\,i+1}}{\pdata}-1
=
(\frac{\hat p^{\,i+1}}{q_i} \frac{q_i}{\pdata})-1
=
(\frac{\hat p^{\,i+1}}{q_i} -1)\frac{q_i}{\pdata}
+
(\frac{q_i}{\pdata}-1)
=
A_i+T_i-1.
\end{equation}
Hence
\begin{align}
\sqrt{D_{i+1}}
=
\big\|A_i+T_i-1\big\|_{L^2(\pdata)}
\le
\|A_i\|_{L^2(\pdata)}
+
\|T_i-1\|_{L^2(\pdata)}
\label{eq:Di1_decomp_0}
\end{align}
by Minkowski's inequality.  

\textit{1. Exact contraction for the refresh term.}
By Lemma~\ref{lem:refresh-exact},
\begin{equation}
\label{eq:refresh-norm}
\|T_i-1\|_{L^2(\pdata)}^2
= 
\int \left(\frac{q_i}{\pdata}-1 \right)^2 p_\mathrm{data} = 
\chi^2(q_i\|\pdata)
=
(1-\alpha)^2\chi^2(\phat^i\|\pdata).
\end{equation}
Thus, 
\begin{align}
\sqrt{D_{i+1}}
\le
(1-\alpha)\sqrt{D_i} 
+ 
\|A_i\|_{L^2(\pdata)}
\label{eq:Di1_decomp}
\end{align}
\medskip
\noindent 
\textit{2. Bounding $\|A_i\|_{L^2(\pdata)}^2$.}
By definition,
\begin{align}
\|A_i\|_{L^2(\pdata)}^2
\;=\;
\int \left(\frac{\hat p^{\,i+1}}{q_i}-1 \right)^2\frac{q_i}{\pdata}^2\,\pdata
\;&=\;
\int \left(\frac{\hat p^{\,i+1}}{q_i}-1 \right)^2\frac{q_i}{\pdata}\,q_i \nonumber \\
\;&=\;
\int_{\Omega_i}\left(\frac{\hat p^{\,i+1}}{q_i}-1 \right)^2\frac{q_i}{\pdata}\,q_i
+
\int_{\Omega_i^c}\left(\frac{\hat p^{\,i+1}}{q_i}-1 \right)^2\frac{q_i}{\pdata}\,q_i
\nonumber \\
\;&=\;
J_i^{\mathrm{bulk}}+J_i^{\mathrm{tail}}. \label{eq:Ai_decomp}
\end{align}
We now upper bound $J_i^{\mathrm{bulk}}$ and $J_i^{\mathrm{tail}}$ separately.

\noindent
\textit{2.a. Bounding the Bulk Term.} 
On \(\mathcal G_i\), we have the pointwise bound
\[
\frac{\hat p^{\,i}}{\pdata}
\le
1+\sqrt{\frac{D_i}{\zeta_i}}
\quad\Longrightarrow\quad
T_i
=
\alpha+(1-\alpha)\frac{\hat p^{\,i}}{\pdata}
\le
\alpha+(1-\alpha)\Big(1+\sqrt{\frac{D_i}{\zeta_i}}\Big)
=
B_i.
\]
Therefore \(\sup_{\Omega_i}T_i\le \sup_{\mathcal G_i}T_i\le B_i\) (since $\Omega_i\subseteq \mathcal G_i$) and
\begin{equation}
\label{eq:Ji-bulk-bound}
J_i^{\mathrm{bulk}}
\le
\Big(\sup_{\Omega_i}T_i\Big)\int_{\Omega_i}\left(\frac{\hat p^{\,i+1}}{q_i}-1 \right)^2\,q_i
\le
B_i\int \left(\frac{\hat p^{\,i+1}}{q_i}-1 \right)^2\,q_i
=
B_i I_i.
\end{equation}

\textit{2.b Bounding The Tail term \(J_i^{\mathrm{tail}}\).} 
Since \(\Omega_i=\mathcal A_i(\rho)\cap\mathcal G_i\), we have
\(\Omega_i^c \subseteq \mathcal A_i(\rho)^c\cup (\mathcal G_i^c \cap \mathcal{A}_i(\rho))\), hence
\begin{align}
\label{eq:tail-decomposition-star}
J_i^{\mathrm{tail}} 
=
\E_{q_i}[(R_i-1)^2T_i\mathbf{1}_{\Omega_i^c}] 
\le
\underbrace{\E_{q_i}[(R_i-1)^2T_i\mathbf{1}_{\mathcal G_i^c \cap \mathcal{A}_i(\rho)}]}_{(*)} 
+ 
\underbrace{\E_{q_i}[(R_i-1)^2T_i\mathbf{1}_{\mathcal{A}_i(\rho)^c}]}_{(**)}.
\end{align}

\medskip
\textit{Bounding $(*)$.} On $\mathcal{A}_i(\rho)$, $(R_i-1)^2 \le \rho^2$ and also one has $q_i = T_i\pdata$, hence, 
\begin{align}
\label{eq:tail-Ri-Ti-Gc}
\E_{q_i}[(R_i-1)^2T_i\mathbf{1}_{\mathcal G_i^c \cap \mathcal{A}_i(\rho)}] 
\leq 
\rho^2\E_{q_i}[T_i\mathbf{1}_{\mathcal{G}_i^c}] 
\le \rho^2 \E_{\pdata}[T_i^2] 
= \rho^2(1 + \chi^2(q_i\,\|\, \pdata))
= \rho^2(1 + (1-\alpha)^2 D_i)
\end{align}
Hence, 
\begin{equation}
\label{eq:tail-Ri-Ti-Gc-bounded}
\E_{q_i}[(R_i-1)^2T_i\mathbf{1}_{\mathcal G_i^c \cap \mathcal{A}_i(\rho)}]  
\le 
\rho^2(1 + (1-\alpha)^2 D_i)
\end{equation}
\medskip
\noindent 
\textit{Bounding (**).} Splitting $\mathcal{A}_i(\rho)^c$ into $\mathcal{A}_i(\rho)^c \cap \mathcal{G}_i$ and $\mathcal{A}_i(\rho)^c \cap \mathcal{G}_i^c$, we have
\begin{align}
\label{eq:Ri-Ti-Ac}
\E_{q_i}[(R_i-1)^2T_i\mathbf{1}_{\mathcal{A}_i(\rho)^c}] 
&= 
\E_{q_i}[(R_i-1)^2T_i\mathbf{1}_{\mathcal{A}_i(\rho)^c \cap \mathcal{G}_i}] 
+
\E_{q_i}[(R_i-1)^2T_i\mathbf{1}_{\mathcal{A}_i(\rho)^c \cap \mathcal{G}_i^c}]\notag \\
&\le 
\E_{q_i}[(R_i-1)^2T_i\mathbf{1}_{\mathcal{A}_i(\rho)^c \cap \mathcal{G}_i}] 
+
\E_{q_i}[(R_i-1)^2T_i\mathbf{1}_{\mathcal{G}_i^c}].
\end{align}
For the first term in \eqref{eq:Ri-Ti-Ac}, notice that since $\mathcal{A}_i(\rho)^c \cap \mathcal{G}_i \subseteq \mathcal{G}_i$ and that on $\mathcal{G}_i$, we have that $\sup_{\mathcal{G}_i} T_i \le B_i$, one can write
\begin{equation}
\E_{q_i}[(R_i-1)^2T_i\mathbf{1}_{\mathcal{A}_i(\rho)^c \cap \mathcal{G}_i}]  \le B_i \E_{q_i}[(R_i-1)^2\mathbf{1}_{\mathcal{A}_i(\rho)^c}].
\end{equation}
There thus remains to bound $\E_{q_i}[(R_i-1)^2\mathbf{1}_{\mathcal{A}_i(\rho)^c}]$.

On $\{|R_i-1|>\rho\}$, we have $|R_i-1|^2 \le \rho^{-(\delta-1)}|R_i-1|^{1+\delta}$ and also $|x-1|^{1+\delta}\le 2^{\delta}(x^{1+\delta}+1)$ for $x \ge 0$. Therefore, 
\begin{equation}
\label{eq:delta-on-Ac}
(R_i-1)^2 \mathbf{1}_{\mathcal{A}_i(\rho)^c} \le \rho^{-(\delta-1)}|R_i -1|^{1+\delta} \le \rho^{-(\delta-1)} 2^{\delta}(R_i^{1+\delta} + 1).
\end{equation}
Taking $\E_{q_i}$ and using Lemma \ref{lem:RH-Ri}, 
\begin{equation}
\label{eq:qi-Ac-bounded}
\E_{q_i}[(R_i-1)^2T_i\mathbf{1}_{\mathcal{A}_i(\rho)^c \cap \mathcal{G}_i}] 
\le 
B_i\rho^{-(\delta-1)} 2^{\delta}(C_\delta+1) 
\end{equation}
For the second term of \eqref{eq:Ri-Ti-Ac}, we use assumption \ref{ass:Ti-bound-Gc}, Lemma \ref{lem:RH-Ri} with ($\delta>1$) and H\"older with $p:=\frac{1+\delta}{2}, p':=\frac{1+\delta}{\delta-1}$ (and we have $\frac1p+\frac1{p'}=1$),
\begin{align*}
\E_{q_i}\!\big[(R_i-1)^2T_i\mathbf{1}_{\mathcal G_i^c}\big]
&\le
\Big(\E_{q_i}\big[|R_i-1|^{2p}\big]\Big)^{1/p}
\Big(\E_{q_i}\big[T_i^{p'}\mathbf{1}_{\mathcal G_i^c}\big]\Big)^{1/p'} \\
&=
\Big(\E_{q_i}\big[|R_i-1|^{1+\delta}\big]\Big)^{2/(1+\delta)}
\Big(\E_{q_i}\big[T_i^{p'}\mathbf{1}_{\mathcal G_i^c}\big]\Big)^{(\delta-1)/(1+\delta)} .
\end{align*}
Using $|x-1|^{1+\delta}\le 2^{\delta}(x^{1+\delta}+1)$ for $x\ge0$, we get
\[
\E_{q_i}\big[|R_i-1|^{1+\delta}\big]
\le 2^{\delta}\big(\E_{q_i}[R_i^{1+\delta}]+1\big)
\le 2^{\delta}(C_\delta+1).
\]
Moreover, by Assumption \ref{ass:Ti-bound-Gc}
$$
\E_{q_i}\!\big[T_i^{p'}\mathbf{1}_{\mathcal G_i^c}\big] \le C_{\zeta} \zeta_i.
$$
Therefore,
\begin{align}
\label{eq:Ri-Ti-Gc}
\E_{q_i}\!\big[(R_i-1)^2T_i\mathbf{1}_{\mathcal G_i^c}\big]
\le
\big(2^{\delta}(C_\delta+1)\big)^{\frac{2}{1+\delta}}
\,(C_{\zeta})^{\frac{\delta-1}{1+\delta}} \zeta_i^{\frac{\delta-1}{1+\delta}}
\end{align}
Thus, from \eqref{eq:qi-Ac-bounded} and \eqref{eq:Ri-Ti-Gc} the final bound on $(**)$ is:
\begin{equation}
\label{eq:bound**}
\E_{q_i}[(R_i-1)^2T_i\mathbf{1}_{\mathcal{A}_i(\rho)^c}]  
\le 
B_i\rho^{-(\delta-1)} 2^{\delta}(C_\delta+1) 
+ 
\underbrace{\big(2^{\delta}(C_\delta+1)\big)^{\frac{2}{1+\delta}}
\,(C_{\zeta})^{\frac{\delta-1}{1+\delta}}}_{:=K_\delta}\zeta_i^{\frac{\delta-1}{1+\delta}}
\end{equation}
Plugging  \eqref{eq:tail-Ri-Ti-Gc-bounded} and \eqref{eq:bound**} in \eqref{eq:tail-decomposition-star} yields,
\begin{equation}
\label{eq:Ji-tail-final}
J_i^{\mathrm{tail}} 
\le 
\rho^2(1 + (1-\alpha)^2 D_i) + B_i\rho^{-(\delta-1)} 2^{\delta}(C_\delta+1)
+ 
K_\delta\zeta_i^{\frac{\delta-1}{1+\delta}}.
\end{equation}
Using the tail  bound \eqref{eq:Ji-tail-final} and the bulk bound \eqref{eq:Ji-bulk-bound} in \eqref{eq:Ai_decomp} yields: 
\begin{align}
\label{eq:Ai-upper-bound}
\|A_i\|_{L^2(\pdata)} 
\le 
\sqrt{
B_i [I_i +\rho^{-(\delta-1)} 2^{\delta}(C_\delta+1)]
+ 
\rho^2(1 + (1-\alpha)^2 D_i)
+ 
K_\delta\zeta_i^{\frac{\delta-1}{1+\delta}}.
}
\end{align}
Finally, combining \eqref{eq:Ai-upper-bound} and \eqref{eq:Di1_decomp} yields, 
\begin{align}
\sqrt{D_{i+1}}
\le
(1-\alpha)\sqrt{D_i} 
+ 
\sqrt{
B_i [I_i +\rho^{-(\delta-1)} 2^{\delta}(C_\delta+1)]
+ 
\rho^2(1 + (1-\alpha)^2 D_i)
+ 
K_\delta\zeta_i^{\frac{\delta-1}{1+\delta}}
}
\end{align}
\end{proof}

\begin{theorem}[Perturbative stability under square-summability of score errors]
\label{thm:perturbative-stability-sum}
Assume \ref{ass:finite-drift-girsanov}-\ref{ass:Ti-bound-Gc}, and  $\varepsilon_{\star,i}^2 \le \min\Big\{1,\ \frac{\eta_i}{8C}\Big\}$ for all $i\ge i_0$ as well as the square-summability condition
\begin{equation}
\label{eq:eps-l2}
\sum_{i=0}^\infty \varepsilon_{\star,i}^2 < \infty.
\end{equation}

Fix any $\rho\in(0,1 - \tfrac{3\alpha}{4})$ and any $\zeta\in(0,1)$, and set $\zeta_i\equiv \zeta$ in
Proposition~\ref{prop:two-step-recursion-regime-B}. Define
\begin{align*}
\label{eq:constants-stability-corrected}
b &:= \frac{1-\alpha}{\sqrt{\zeta}},\quad 
C_{\mathrm{loc}} := 2^{\delta}\rho^{-(\delta-1)}(C_\delta+1), \quad K_\delta := \Big(2^\delta(C_\delta+1)\Big)^{\frac{2}{1+\delta}}\,C_{\zeta}^{\frac{\delta-1}{1+\delta}},\\
C_{\mathrm{tail}} &:= K_\delta\,\zeta^{\frac{\delta-1}{1+\delta}},\quad
C_0 := C_{\mathrm{loc}} + C_{\mathrm{tail}},\quad
K := b\bigl(C_{\mathrm{loc}} + 4\bigr).
\end{align*}

Then there exists an explicit finite constant $D_{\max}<\infty$ such that
\[
\sup_{i\ge 0}D_i \le D_{\max}.
\]
Moreover, with
\[
\mathcal{E}_2 := \Big(\sum_{i=0}^\infty \varepsilon_{\star,i}^2\Big)^{1/2},
\qquad
\beta:=1-\frac{3\alpha}{4} + \rho\in(0,1),
\]
one may take
\begin{equation}
\label{eq:Dmax-explicit-corrected}
D_{\max}
=
\left(
\frac{\rho+\sqrt{C_0}+K/\alpha}{\frac{3\alpha}{4}-\rho}
+
\frac{\sqrt{C_{\varepsilon}}}{\sqrt{1-\beta^2}}\;\mathcal{E}_2
\right)^2.
\end{equation}
Finally, the following asymptotic bound holds:
\begin{equation}
\label{eq:limsup-bound-corrected}
\limsup_{i\to\infty}D_i
\le
\Big(\frac{\rho+\sqrt{C_0}+K/\alpha}{\frac{3\alpha}{4}-\rho}\Big)^2.
\end{equation}
\end{theorem}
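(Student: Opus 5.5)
The plan is to turn the two-step upper recursion of Proposition~\ref{prop:two-step-recursion-regime-B} into a \emph{linear} contraction for $x_i:=\sqrt{D_i}$, and then unroll it. Fix $\zeta_i\equiv\zeta$ and the given $\rho$. With $b=(1-\alpha)/\sqrt{\zeta}$, we have $B_i=\alpha+(1-\alpha)(1+x_i/\sqrt\zeta)=1+b\,x_i$.

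The per-generation intra-divergence is controlled by the upper half of Theorem~\ref{thm:two-sided-main}. Since $\varepsilon_{\star,i}^2\le 1$, it gives $I_i\le 4\varepsilon_{\star,i}^2+c\,\varepsilon_{\star,i}^4\le C_\varepsilon\,\varepsilon_{\star,i}^2$ with $C_\varepsilon:=4+c$. In particular $I_i$ is uniformly bounded; I will use the crude bound $I_i\le 4$, or absorb $C_\varepsilon$ into the constant in $K$ if needed.

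\textbf{Linearizing the square root.} Expand the square root in \eqref{eq:two-step-upper-regB} using $\sqrt{a+b}\le\sqrt a+\sqrt b$. The radicand is
\[
(1+b x_i)(I_i+C_{\mathrm{loc}})+\rho^2\bigl(1+(1-\alpha)^2x_i^2\bigr)+C_{\mathrm{tail}},
\]
and I split it into four pieces.
\begin{itemize}
\item The term $\rho^2(1+(1-\alpha)^2x_i^2)$ contributes at most $\rho+\rho x_i$.
\item The $x$-independent part $I_i+C_{\mathrm{loc}}+C_{\mathrm{tail}}$ contributes at most $\sqrt{I_i}+\sqrt{C_0}\le\sqrt{C_\varepsilon}\,\varepsilon_{\star,i}+\sqrt{C_0}$.
\item The cross term $b\,x_i(I_i+C_{\mathrm{loc}})$ is handled by AM--GM: $\sqrt{b x_i(I_i+C_{\mathrm{loc}})}\le \tfrac{\alpha}{4}x_i+\tfrac{b(I_i+C_{\mathrm{loc}})}{\alpha}\le\tfrac{\alpha}{4}x_i+\tfrac{K}{\alpha}$, using $I_i\le 4$.
\end{itemize}
Collecting the pieces, for $i\ge i_0$,
\[
x_{i+1}\le \bigl(1-\alpha+\tfrac{\alpha}{4}+\rho\bigr)x_i+A+\sqrt{C_\varepsilon}\,\varepsilon_{\star,i}
=\beta' x_i+A+\sqrt{C_\varepsilon}\,\varepsilon_{\star,i},
\qquad A:=\rho+\sqrt{C_0}+K/\alpha .
\]
Here $\beta'\le\beta=1-\tfrac{3\alpha}{4}+\rho$. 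If the AM--GM bookkeeping produces extra $\tfrac{\alpha}{4}$-type slack, it is absorbed by the definition of $\beta$, and the condition $\rho<\tfrac{3\alpha}{4}$ ensures $\beta<1$ (note the stated range of $\rho$ should be read this way).

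\textbf{Unrolling.} Iterating the contraction gives
\[
x_N\le\beta^{N-i_0}x_{i_0}+\frac{A}{1-\beta}+\sqrt{C_\varepsilon}\sum_{k=i_0}^{N-1}\beta^{N-1-k}\varepsilon_{\star,k}.
\]
Note that $1-\beta=\tfrac{3\alpha}{4}-\rho$. By Cauchy--Schwarz, the convolution sum is at most $(1-\beta^2)^{-1/2}\mathcal E_2$. Squaring yields the bound $D_{\max}$ in \eqref{eq:Dmax-explicit-corrected}.

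For the initial term I use $\hat p^0=\pdata$, so $x_0=0$. If $i_0>0$, the finitely many generations before $i_0$ must be handled separately, and each $D_i$ with $i\le i_0$ has to be finite. I expect this to be the main technical gap, since the recursion is only licensed for $i\ge i_0$. I would either take $i_0=0$ or carry $\max_{i\le i_0}D_i$ as an additive term absorbed into $D_{\max}$.

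\textbf{The limsup.} For \eqref{eq:limsup-bound-corrected}, let $N\to\infty$. The factor $\beta^{N-i_0}$ kills the initial term. The convolution $\sum_k\beta^{N-1-k}\varepsilon_{\star,k}$ tends to $0$ because summability forces $\varepsilon_{\star,k}\to0$; to see this, split the sum at $N/2$, using geometric decay of $\beta$ on the early terms and $\sup_{k\ge N/2}\varepsilon_{\star,k}\to0$ on the late ones. What remains is $\limsup x_N\le A/(\tfrac{3\alpha}{4}-\rho)$, which squares to the stated bound.
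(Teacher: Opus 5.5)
Your proposal is correct and follows the paper's proof essentially step for step. The paper splits the radicand with $\sqrt{u+v}\le\sqrt u+\sqrt v$, bounds the $\rho$-term by $\rho(1+x_i)$, absorbs $\sqrt{K x_i}$ by Young with weight $\alpha/4$ to reach $x_{i+1}\le\beta x_i+A+O(\varepsilon_{\star,i})$, unrolls from $x_0=0$ with Cauchy--Schwarz, and then shows the forcing convolution vanishes. The only difference is in that last step: the paper splits the sum at a fixed lag $M$ rather than at $N/2$.

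The caveats you raise are genuine, and the paper also glosses over them. It uses $I_i\le 4\varepsilon_{\star,i}^2$, silently dropping the quartic term, and that is where $K=b(C_{\mathrm{loc}}+4)$ comes from. With your $C_\varepsilon=4+c$, you should therefore take $K=b(C_{\mathrm{loc}}+C_\varepsilon)$ rather than invoking $I_i\le 4$. Its proof also needs $\rho<3\alpha/4$, not the stated range $\rho<1-\tfrac{3\alpha}{4}$. Finally, it iterates from $x_0=0$ as if $i_0=0$.
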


\begin{proof}
Fix $\rho\in(0,1 - \tfrac{3\alpha}{4})$ and $\zeta\in(0,1)$ and take $\zeta_i\equiv\zeta$ in
Proposition~\ref{prop:two-step-recursion-regime-B}. Recall that
\[
B_i := \alpha + (1-\alpha)\Bigl(1+\sqrt{D_i/\zeta}\Bigr),
\qquad
I_i := \chi^2(\hat p^{\,i+1}\|q_i),
\]
and set
\[
C_{\mathrm{loc}}:=2^\delta\rho^{-(\delta-1)}(C_\delta+1),
\qquad
C_{\mathrm{tail}}:=K_\delta\,\zeta^{\frac{\delta-1}{1+\delta}},
\quad
K_\delta = \Big(2^\delta(C_\delta+1)\Big)^{\frac{2}{1+\delta}}\,C_{\zeta}^{\frac{\delta-1}{1+\delta}}.
\]
Then for all $i$,
\begin{equation}\label{eq:det-rec-start-new}
\sqrt{D_{i+1}}
\le
(1-\alpha)\sqrt{D_i}
+
\sqrt{
B_i\bigl(I_i + C_{\mathrm{loc}}\bigr)
+
\rho^2\bigl(1+(1-\alpha)^2D_i\bigr)
+
C_{\mathrm{tail}}
}.
\end{equation}

\medskip
\noindent\textit{Step 1: Isolate the global term.}
Using $\sqrt{u+v}\le \sqrt{u}+\sqrt{v}$, we split the square-root in
\eqref{eq:det-rec-start-new} as
\[
\sqrt{D_{i+1}}
\le
(1-\alpha)\sqrt{D_i}
+
\sqrt{B_i(I_i+C_{\mathrm{loc}})+C_{\mathrm{tail}}}
+
\rho\sqrt{1+(1-\alpha)^2D_i}.
\]
Writing $x_i:=\sqrt{D_i}$, the last term on the right can be bounded as 
\[
\rho\sqrt{1+(1-\alpha)^2D_i}
=
\rho\sqrt{1+(1-\alpha)^2x_i^2}
\le
\rho\bigl(1+(1-\alpha)x_i\bigr)
\le
\rho(1+x_i).
\]
Therefore,
\begin{equation}\label{eq:rec-with-rho}
x_{i+1}
\le
(1-\alpha)x_i + \rho(1+x_i)
+
\sqrt{B_i(I_i+C_{\mathrm{loc}})+C_{\mathrm{tail}}}.
\end{equation}

\medskip
\noindent\textit{Step 2: Use the perturbative bound on $I_i$.}
Under $\varepsilon_{\star,i}^2\le 1$, by Theorem \ref{thm:two-sided-control_appendix} we have
$I_i\le 4\varepsilon_{\star,i}^2$. Hence
\[
\sqrt{B_i(I_i+C_{\mathrm{loc}})+C_{\mathrm{tail}}}
\le
\sqrt{B_i\bigl(C_{\mathrm{loc}}+4\varepsilon_{\star,i}^2\bigr)+C_{\mathrm{tail}}}.
\]
Since $\zeta$ is fixed,
\[
B_i
=
1+\frac{1-\alpha}{\sqrt{\zeta}}\,x_i
=
1+b\,x_i,
\qquad b:=\frac{1-\alpha}{\sqrt{\zeta}}.
\]
Thus, using $\varepsilon_{\star,i}^2\le 1$,
\[
B_i\bigl(C_{\mathrm{loc}}+4\varepsilon_{\star,i}^2\bigr)+C_{\mathrm{tail}} = (C_{\mathrm{loc}}+4\varepsilon_{\star,i}^2\bigr)
+
bx_i(C_{\mathrm{loc}}+4\varepsilon_{\star,i}^2\bigr)
+
C_{\mathrm{tail}}
\le
\underbrace{(C_{\mathrm{loc}}+C_{\mathrm{tail}})}_{=:C_0}
+
\underbrace{b(C_{\mathrm{loc}}+4)}_{=:K}\,x_i
+
4\varepsilon_{\star,i}^2.
\]
Therefore, by $\sqrt{u+v+w}\le \sqrt{u}+\sqrt{v}+\sqrt{w}$,
\[
\sqrt{B_i\bigl(C_{\mathrm{loc}}+4\varepsilon_{\star,i}^2\bigr)+C_{\mathrm{tail}}}
\le
\sqrt{C_0} + \sqrt{K}\,x_i^{1/2} + 2\,\varepsilon_{\star,i}.
\]
Plugging into \eqref{eq:rec-with-rho} gives
\begin{equation}\label{eq:rec-x-raw-new}
x_{i+1}
\le
(1-\alpha+\rho)x_i
+
\rho
+
\sqrt{C_0}
+
\sqrt{K}\,x_i^{1/2}
+
2\,\varepsilon_{\star,i}.
\end{equation}

\medskip
\noindent\textit{Step 3: Absorb $x_i^{1/2}$.}
Applying Young's inequality $\sqrt{K}\,x^{1/2}\le \frac{\alpha}{4}x + \frac{K}{\alpha}$ for any $x\ge 0$ to \eqref{eq:rec-x-raw-new} yields
\[
x_{i+1}
\le
\Bigl(1-\alpha+\rho+\frac{\alpha}{4}\Bigr)x_i
+
\Bigl(\rho+\sqrt{C_0}+\frac{K}{\alpha}\Bigr)
+
2\,\varepsilon_{\star,i}.
\]
Define
\[
\beta:=1-\frac{3\alpha}{4}+\rho,
\qquad
A:=\rho+\sqrt{C_0}+\frac{K}{\alpha},
\]
Since $\rho < \frac{3\alpha}{4}$,
\begin{equation}\label{eq:rec-linear-new}
x_{i+1}\le \beta x_i + A + 2\,\varepsilon_{\star,i}.
\end{equation}
Iterating \eqref{eq:rec-linear-new} yields for $n\ge1$, 
\[
x_n
\le
\beta^n x_0
+
A\sum_{m=0}^{n-1}\beta^{m}
+
2\sum_{m=0}^{n-1}\beta^{m}\varepsilon_{\star,n-1-m}.
\]
Since $x_0 = \sqrt{D_0} = 0$, this writes,
\[
x_n
\le
\frac{A}{1-\beta}
+
2\sum_{m=0}^{n-1}\beta^{m}\varepsilon_{\star,n-1-m}.
\]
For the last term, Cauchy--Schwarz gives
\[
\sum_{m=0}^{n-1}\beta^{m}\varepsilon_{\star,n-1-m}
\le
\Big(\sum_{m=0}^{\infty}\beta^{2m}\Big)^{1/2}
\Big(\sum_{k=0}^{\infty}\varepsilon_{\star,k}^2\Big)^{1/2}
=
\frac{1}{\sqrt{1-\beta^2}}\;\mathcal{E}_2.
\]
Since $1-\beta = \frac{3\alpha}{4}-\rho$, we have
\[
\frac{1}{1-\beta}=\frac{1}{\frac{3\alpha}{4}-\rho}
=\frac{4}{3\alpha-4\rho}.
\]
Hence
\[
\sup_{n\ge 0}x_n
\le
x_0
+
\frac{A}{1-\beta}
+
\frac{2}{\sqrt{1-\beta^2}}\;\mathcal{E}_2
=
x_0
+
\frac{4}{3\alpha-4\rho}\,A
+
\frac{2}{\sqrt{1-\beta^2}}\;\mathcal{E}_2.
\]
Squaring gives \eqref{eq:Dmax-explicit-corrected} and therefore $\sup_i D_i \le D_{\max}<\infty$.

\medskip
\textit{The $\limsup$ bound \eqref{eq:limsup-bound-corrected}.}
It remains to show that the forcing convolution vanishes:
\[
s_n:=\sum_{m=0}^{n-1}\beta^{m}\varepsilon_{\star,n-1-m}\xrightarrow[n\to\infty]{}0.
\]
Since \eqref{eq:eps-l2} implies $\varepsilon_{\star,i}\to0$, fix $\epsilon>0$ and choose $M$ so large that
$\sum_{m\ge M}\beta^{2m} \le \epsilon^2$. Then split
\[
s_n
\le
\sum_{m=0}^{M-1}\beta^m \varepsilon_{\star,n-1-m}
+
\sum_{m\ge M}\beta^m \varepsilon_{\star,n-1-m}.
\]
For the first term, $\sum_{m=0}^{M-1}\beta^m \le (1-\beta)^{-1}$ and
$\max_{0\le m\le M-1}\varepsilon_{\star,n-1-m}\to0$, so it vanishes as $n\to\infty$.
For the second term, Cauchy--Schwarz yields
\[
\sum_{m\ge M}\beta^m \varepsilon_{\star,n-1-m}
\le
\Big(\sum_{m\ge M}\beta^{2m}\Big)^{1/2}
\Big(\sum_{k\ge0}\varepsilon_{\star,k}^2\Big)^{1/2}
\le
\epsilon\,\mathcal{E}_2.
\]
Letting $\epsilon\downarrow0$ gives $s_n\to0$. Taking $\limsup$ in the iterated form then yields
\[
\limsup_{n\to\infty}x_n \le \frac{A}{1-\beta}
=
\frac{\rho+\sqrt{C_0}+\frac{K}{\alpha}}{\frac{3\alpha}{4}-\rho},
\]
which is \eqref{eq:limsup-bound-corrected}.

\end{proof}

\subsubsection{Proof of Theorem \ref{thm:divergence-decomposition-finite-horizon-main}}

We prove the following version of Theorem \ref{thm:divergence-decomposition-finite-horizon-main} with explicit constants. 

\begin{theorem}
\label{thm:divergence-decomposition-finite-horizon-appendix}
Assume \ref{ass:finite-drift-girsanov}-\ref{ass:Ti-bound-Gc} hold for all $i\ge i_0$. Also assume observable errors and the perturbative regime:
\[
\varepsilon_{\star,i}^2 \le \min\Big\{1,\ \frac{\eta_i}{8C}\Big\}, \quad \eta_i > \underline{\eta}>0,
\qquad \text{for all } i\ge i_0,
\]
and the square-summability condition
$$
\sum_{i=0}^\infty \varepsilon_{\star,i}^2 < \infty.
$$
Then Theorem~\ref{thm:perturbative-stability-sum} applies, and in particular there exists
$D_{\max}<\infty$ such that
\begin{equation}
\label{eq:supDi-regC}
\sup_{i\ge i_0} D_i \le D_{\max}.
\end{equation}

Define the discounted perturbation energy
\begin{equation}
    S_N := \sum_{i=i_0}^{N-1}(1-\alpha)^{2(N-1-i)}\,\varepsilon_{\star,i}^2.
    \label{eq:SN_def}
\end{equation}
Then there exist constants $0<c\le C<\infty$ and $C_{\mathrm{bias}}\in[0,\infty)$,
depending only on
\[
(\alpha,\underline{\eta},\delta,C_\delta,\gamma,K_\gamma,C_{\zeta},B_{\max},D_{\max}),
\]
such that for all $N\ge i_0$,
\begin{equation}
\label{eq:regC-equivalence-bias}
D_N 
+
C_{\mathrm{bias}} 
\;\ge\; 
\frac{\alpha \underline{\eta}}{16} \Big( 
S_N 
-(1-\alpha)^{2(N-i_0)}D_{i_0} 
\Big), 
\quad 
D_N
\le
3(1-\alpha)^{2(N-i_0)}D_{i_0}
+
12\bar C_A \,S_N
+
 C_{\mathrm{bias}}.
\end{equation}
where, the bias term can be taken explicitly as
\begin{equation}
C_{\mathrm{bias}}
=
\max \left\{D_{\max} \left(1 + \frac{2(1-\alpha)^2}{1-(1-\alpha)^2} \right), \frac{3\bar c_A}{\alpha^2} \right\},
\label{eq:c_biais}
\end{equation}
where
\[
\bar c_A
=
B_{\max}\,2^\delta\rho^{-(\delta-1)}(C_\delta+1)
+
\rho^2\bigl(1+(1-\alpha)^2D_{\max}\bigr)
+
K_\delta\,\zeta^{\frac{\delta-1}{1+\delta}},
\]
with
\[
B_{\max}
=
\alpha+(1-\alpha)\Bigl(1+\sqrt{D_{\max}/\zeta}\Bigr),
\qquad
K_\delta
=
\bigl(2^{\delta}(C_\delta+1)\bigr)^{\frac{2}{1+\delta}}
\,(C_{\zeta})^{\frac{\delta-1}{1+\delta}}.
\]
\end{theorem}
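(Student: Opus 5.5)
The plan is to unroll two one-step recursions: the lower recursion of Lemma~\ref{lem:Di-two-step-lower} and a squared form of the upper recursion of Proposition~\ref{prop:two-step-recursion-regime-B}. The uniform bound $\sup_i D_i\le D_{\max}$ from Theorem~\ref{thm:perturbative-stability-sum} lets us freeze every $D_i$-dependent coefficient, such as $B_i\le B_{\max}$, into a constant. Whatever is then not proportional to $\varepsilon_{\star,i}^2$ is absorbed into $C_{\mathrm{bias}}$.

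\emph{Lower bound.} For $i\ge i_0$, combine Lemma~\ref{lem:Di-two-step-lower} with the clean lower bound of Proposition~\ref{prop:lower-main} and $\eta_i\ge\underline\eta$. This gives
\[
D_{i+1}\ge \tfrac{\alpha\underline\eta}{16}\varepsilon_{\star,i}^2-(1-\alpha)^2D_i.
\]
I would not iterate this directly, since the negative feedback alternates signs. Instead, multiply the inequality at index $i$ by $(1-\alpha)^{2(N-1-i)}$ and sum over $i=i_0,\dots,N-1$. The left side becomes $\sum_i (1-\alpha)^{2(N-1-i)}D_{i+1}$, which is at most $D_N+D_{\max}\sum_{m\ge1}(1-\alpha)^{2m}$. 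The subtracted terms satisfy $\sum_i (1-\alpha)^{2(N-i)}D_i\le (1-\alpha)^{2(N-i_0)}D_{i_0}+D_{\max}\frac{(1-\alpha)^2}{1-(1-\alpha)^2}$. Rearranging gives
\[
D_N+D_{\max}\Big(1+\tfrac{2(1-\alpha)^2}{1-(1-\alpha)^2}\Big)\ge \tfrac{\alpha\underline\eta}{16}S_N-\Big(\tfrac{\alpha\underline\eta}{16}\text{ or }1\Big)(1-\alpha)^{2(N-i_0)}D_{i_0}.
\]
The first constant is the first entry of the max in $C_{\mathrm{bias}}$, and the constant in front of the $D_{i_0}$ term is adjusted to match the stated form. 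Since $\alpha\underline\eta/16\le 1$, enlarging $C_{\mathrm{bias}}$ costs nothing.

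\emph{Upper bound.} Start from \eqref{eq:Di1_decomp} in the proof of Proposition~\ref{prop:two-step-recursion-regime-B}: $\sqrt{D_{i+1}}\le(1-\alpha)\sqrt{D_i}+\|A_i\|$, together with $\|A_i\|^2\le B_{\max}I_i+\bar c_A$. Theorem~\ref{thm:two-sided-main} gives $I_i\le 4\varepsilon_{\star,i}^2+O(\varepsilon_{\star,i}^4)\le \bar C\varepsilon_{\star,i}^2$ in the regime $\varepsilon_{\star,i}^2\le1$, so $\|A_i\|^2\le \bar C_A\varepsilon_{\star,i}^2+\bar c_A$ with $\bar C_A:=B_{\max}\bar C$. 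Squaring with the weighted inequality $(a+b)^2\le(1+\tau)a^2+(1+\tau^{-1})b^2$ would give a contraction factor $(1+\tau)(1-\alpha)^2$, which is strictly worse than $(1-\alpha)^2$ and so cannot yield exactly the discount $(1-\alpha)^{2(N-i)}$ claimed.

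To keep the exact discount, I would iterate in the square-root domain. This gives $\sqrt{D_N}\le(1-\alpha)^{N-i_0}\sqrt{D_{i_0}}+\sum_i(1-\alpha)^{N-1-i}\|A_i\|$. Square this, using $(a+b+c)^2\le 3(a^2+b^2+c^2)$ after splitting $\|A_i\|\le\sqrt{\bar C_A}\varepsilon_{\star,i}+\sqrt{\bar c_A}$. The constant part sums to at most $\sqrt{\bar c_A}/\alpha$, contributing $3\bar c_A/\alpha^2$, the second entry of $C_{\mathrm{bias}}$. The error part gives $\big(\sum_i(1-\alpha)^{N-1-i}\varepsilon_{\star,i}\big)^2$.

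\emph{Main obstacle.} The hard step is converting $\big(\sum_i(1-\alpha)^{N-1-i}\varepsilon_{\star,i}\big)^2$ into a multiple of $S_N$. Plain Cauchy--Schwarz with equal weights loses a factor $N$. Instead I would write $(1-\alpha)^{N-1-i}=(1-\alpha)^{(N-1-i)/2}\cdot(1-\alpha)^{(N-1-i)/2}$ and apply Cauchy--Schwarz to these two factors. This yields $\frac{1}{\alpha'}\sum_i(1-\alpha)^{N-1-i}\varepsilon_{\star,i}^2$, whose discount is only $(1-\alpha)^{m}$ rather than $(1-\alpha)^{2m}$.

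To obtain the squared discount, I would split off the last term, $i=N-1$, which carries weight $1$ and gives $\varepsilon_{\star,N-1}^2$ directly. I would also use that the remaining cross terms are dominated, via $2ab\le a^2+b^2$, by a geometric average that tightens to a constant times $S_N$ up to a factor like $4/(1-(1-\alpha)^2)$ absorbed into $12\bar C_A$. If this fails, the fallback is to redefine the constant so that the weighted term is bounded by $\sum_i(1-\alpha)^{N-1-i}\varepsilon_{\star,i}^2$, and then reconcile with $S_N$ through the asymmetric $\asymp$. I expect this bookkeeping to be the delicate point of the proof.
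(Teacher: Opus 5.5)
\textbf{Lower bound.} Your lower bound is correct and is the paper's argument in a cleaner form. Summing $D_{i+1}+(1-\alpha)^2D_i\ge\frac{\alpha\underline\eta}{16}\varepsilon_{\star,i}^2$ with weights $(1-\alpha)^{2(N-1-i)}$ gives exactly the paper's induction claim,
\[
D_N+(1-\alpha)^{2(N-i_0)}D_{i_0}+2\sum_{k=i_0+1}^{N-1}(1-\alpha)^{2(N-k)}D_k\ \ge\ \tfrac{\alpha\underline\eta}{16}S_N .
\]
Absorbing the sum and the excess $(1-\frac{\alpha\underline\eta}{16})(1-\alpha)^{2(N-i_0)}D_{i_0}$ into $D_{\max}$ yields the same bias constant as the paper.

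\textbf{Upper bound.} Here there is a genuine gap, at exactly the step you flag, and your proposed tightening cannot work. No constant $K$ independent of $N$ satisfies $\bigl(\sum_{i=i_0}^{N-1}(1-\alpha)^{N-1-i}\varepsilon_{\star,i}\bigr)^2\le K\,S_N$ for all sequences meeting the hypotheses. Take $\varepsilon_{\star,i}=c(1-\alpha)^{i-i_0}$ with $c^2\le\min\{1,\underline\eta/(8C)\}$; this sequence is admissible and summable. Then every summand $a_i=(1-\alpha)^{N-1-i}\varepsilon_{\star,i}$ equals $c(1-\alpha)^{N-1-i_0}$, so the left side is $(N-i_0)\,S_N$. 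No rearrangement via $2ab\le a^2+b^2$ can beat this. Your fallback, with the single-power discount $\alpha^{-1}\sum_i(1-\alpha)^{N-1-i}\varepsilon_{\star,i}^2$, proves a different inequality from the one stated.

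You located a real weak point. The paper's proof fails at the same place: it asserts $\sum_i a_i\le S_N^{1/2}=(\sum_i a_i^2)^{1/2}$ ``by Cauchy--Schwarz''. That is the reverse of $\|a\|_{\ell^1}\ge\|a\|_{\ell^2}$, and it is false as soon as two of the errors are nonzero.

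\textbf{The missing observation.} The stated upper bound needs none of this work. By its definition, $C_{\mathrm{bias}}\ge D_{\max}$. By the first part of the theorem, $D_N\le D_{\max}$ for all $N\ge i_0$. Hence $D_N\le C_{\mathrm{bias}}$, and the remaining terms on the right are nonnegative.

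The lower bound is similarly almost automatic with these constants:
\begin{itemize}
\item Since $\varepsilon_{\star,i}^2\le1$ and $\underline\eta<1$, we have $\frac{\alpha\underline\eta}{16}S_N\le\frac{\underline\eta}{16(2-\alpha)}<1$.
\item On the other side, $C_{\mathrm{bias}}\ge 3\bar c_A/\alpha^2>12$, because $B_{\max}\ge1$, $2^\delta\rho^{-(\delta-1)}>2$, and $C_\delta\ge1$ (by Jensen, $\E[e^{(1+\delta)Z_T^i}]\ge(\E[e^{Z_T^i}])^{1+\delta}=1$).
\end{itemize}

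So the statement as written follows from the existence of $D_{\max}$ alone. A genuinely informative upper bound with the exact discount $(1-\alpha)^{2m}$ and $N$-uniform constants is not available from the square-root recursion. That recursion only gives either $(N-i_0)S_N$ or the single-power discounted sum.
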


\begin{proof}
We recall that
\[
T_i:=\frac{q_i}{\pdata}= \alpha + (1-\alpha) \frac{\phat^i}{\pdata},\qquad
R_i:=\frac{\hat p^{\,i+1}}{q_i},\qquad
A_i:=(R_i-1)T_i,\qquad
\]
Then $\hat p^{\,i+1}/\pdata-1=A_i+T_i-1$, hence by Minkowski,
\begin{equation}\label{eq:C:AB}
\sqrt{D_{i+1}}=\|A_i+T_i-1\|_{L^2(\pdata)} \le \|A_i\|_{L^2(\pdata)}+\|T_i-1\|_{L^2(\pdata)}.
\end{equation}
By Lemma~\ref{lem:refresh-exact}, $\|T_i-1\|_{L^2(\pdata)}=(1-\alpha)\sqrt{D_i}$.

\medskip
\noindent\textit{Step 1: upper and lower control of $\|A_i\|_{L^2(\pdata)}^2$ by $I_i$ (up to constants).}
First note the lower bound: since $T_i\ge \alpha$ pointwise,
\begin{equation}\label{eq:C:Alower}
\|A_i\|_{L^2(\pdata)}^2=\E_{q_i}[(R_i-1)^2T_i]\ge \alpha\,\E_{q_i}[(R_i-1)^2]=\alpha I_i.
\end{equation}
For the upper bound, fix any $\rho\in(0,1 - \tfrac{3\alpha}{4})$ and $\zeta\in(0,1)$ and take $\zeta_i\equiv\zeta$ in
Proposition~\ref{prop:two-step-recursion-regime-B}. Its proof (see \eqref{eq:Ai-upper-bound}) yields
\begin{equation}\label{eq:C:Aupper-propB}
\|A_i\|_{L^2(\pdata)}^2
\le
B_i\Big(I_i + 2^\delta\rho^{-(\delta-1)}(C_\delta+1)\Big)
+
\rho^2\bigl(1+(1-\alpha)^2D_i\bigr)
+
K_\delta\,\zeta^{\frac{\delta-1}{1+\delta}},
\end{equation}
where
\[
B_i=\alpha+(1-\alpha)\Bigl(1+\sqrt{D_i/\zeta}\Bigr).
\]
Under \eqref{eq:supDi-regC}, we have the uniform bound
\[
B_i\le \alpha+(1-\alpha)\Bigl(1+\sqrt{D_{\max}/\zeta}\Bigr)=:B_{\max}<\infty,
\qquad i\ge i_0,
\]
and also $\rho^2(1+(1-\alpha)^2D_i)\le \rho^2(1+(1-\alpha)^2D_{\max})$.
Therefore, for all $i\ge i_0$,
\begin{equation}\label{eq:C:Aupper}
\|A_i\|_{L^2(\pdata)}^2 \le \bar C_A\, I_i + \bar c_A,
\end{equation}
with
\[
\bar C_A := B_{\max},
\qquad
\bar c_A :=
B_{\max}\,2^\delta\rho^{-(\delta-1)}(C_\delta+1)
+\rho^2\bigl(1+(1-\alpha)^2D_{\max}\bigr)
+K_\delta\,\zeta^{\frac{\delta-1}{1+\delta}}.
\]

\medskip
\noindent\textit{Step 2: Replace $I_i$ by $\varepsilon_{\star,i}^2$.}
By Theorem~\ref{thm:two-sided-main} in the perturbative regime $\varepsilon_{\star,i}^2 \le \min\Big\{1,\ \frac{\eta_i}{8C}\Big\}$, 
\[
\frac{1}{8} \underline{\eta} \,\varepsilon_{\star,i}^2 \le I_i \le 4\, \varepsilon_{\star,i}^2,
\qquad i\ge i_0.
\]
Combining with \eqref{eq:C:Alower}--\eqref{eq:C:Aupper} gives, for all $i\ge i_0$,
\begin{equation}\label{eq:C:Aeps}
\frac{\alpha}{8} \underline{\eta}\,\varepsilon_{\star,i}^2
\le
\|A_i\|_{L^2(\pdata)}^2
\le
4 \, \bar C_A \,\varepsilon_{\star,i}^2+\bar c_A.
\end{equation}

\medskip
\noindent\textit{Step 3: Discounted upper bound (with bias).}
From \eqref{eq:C:AB}, \eqref{eq:C:Aeps}, and recalling $\|T_i-1\|_{L^2(\pdata)}=(1-\alpha)\sqrt{D_i}$,
\[
\sqrt{D_{i+1}}
\le
(1-\alpha)\sqrt{D_i}
+
2\sqrt{\bar C_A}\,\varepsilon_{\star,i}
+
\sqrt{\bar c_A}.
\]
Iterating for $N\ge i_0$ yields
\[
\sqrt{D_N}
\le
(1-\alpha)^{N-i_0}\sqrt{D_{i_0}}
+
2\sqrt{\bar C_A }\sum_{i=i_0}^{N-1}(1-\alpha)^{N-1-i}\varepsilon_{\star,i}
+
\sqrt{\bar c_A}\sum_{i=i_0}^{N-1}(1-\alpha)^{N-1-i}.
\]
The geometric sum is bounded by $\alpha^{-1}$, and by Cauchy-Schwarz,
\[
\sum_{i=i_0}^{N-1}(1-\alpha)^{N-1-i}\varepsilon_{\star,i}
\le S_N^{1/2},
\]
where $S_N$ is defined in \eqref{eq:SN_def}.
Hence,
\[
\sqrt{D_N}
\le
(1-\alpha)^{N-i_0}\sqrt{D_{i_0}}
+
2\sqrt{\bar C_A}\,S_N^{1/2}
+
\frac{\sqrt{\bar c_A}}{\alpha}.
\]
Squaring and using $(u+v+w)^2\le 3(u^2+v^2+w^2)$ gives
\[
D_N
\le
3(1-\alpha)^{2(N-i_0)}D_{i_0}
+
12\bar C_A \,S_N
+
\frac{3\bar c_A}{\alpha^2}.
\]
Now, using that $\frac{3\bar c_A}{\alpha^2} \le \max\{D_{\max} \left(1 + \frac{2(1-\alpha)^2}{1-(1-\alpha)^2} \right), \frac{3\bar c_A}{\alpha^2}\}$, we obtain the upper bound in \eqref{eq:regC-equivalence-bias},
$$
D_N
\le
3(1-\alpha)^{2(N-i_0)}D_{i_0}
+
12\bar C_A \,S_N
+
\max\{D_{\max} \left(1 + \frac{2(1-\alpha)^2}{1-(1-\alpha)^2} \right), \frac{3\bar c_A}{\alpha^2}\}.
$$

\medskip
\medskip
\textit{Discounted Lower Bound.}
We establish the lower bound via induction on $N$. 

\textbf{Claim:} For all $N \ge i_0 + 1$, the following inequality holds:
\begin{equation}
\label{eq:lower-claim-induction}
D_N + (1-\alpha)^{2(N-i_0)}D_{i_0} + 2\sum_{k=i_0+1}^{N-1}(1-\alpha)^{2(N-k)}D_k \;\ge\; \frac{\alpha \underline{\eta}}{16} S_N.
\end{equation}

\textit{Base Case ($N = i_0 + 1$):} 
For $N = i_0 + 1$, the sum is empty and $S_{i_0+1} = \varepsilon_{\star,i_0}^2$. Lemma~\ref{lem:Di-two-step-lower} alongside Theorem~\ref{thm:two-sided-main} yields:
\[
D_{i_0+1} + (1-\alpha)^2 D_{i_0} \;\ge\; \frac{\alpha}{2}I_{i_0} \;\ge\; \frac{\alpha \underline{\eta}}{16} \varepsilon_{\star,i_0}^2 \;=\; \frac{\alpha \underline{\eta}}{16} S_{i_0+1},
\]
which proves the base case.

\textit{Inductive Step:}
Assume the claim holds for some $N \ge i_0 + 1$. Multiply the assumed inequality \eqref{eq:lower-claim-induction} by $(1-\alpha)^2$:
\begin{equation}
\label{eq:ind-hyp-gamma}
(1-\alpha)^2 D_N + (1-\alpha)^{2(N+1-i_0)}D_{i_0} + 2\sum_{k=i_0+1}^{N-1}(1-\alpha)^{2(N+1-k)}D_k \;\ge\; \frac{\alpha \underline{\eta}}{16} (1-\alpha)^2 S_N.
\end{equation}
From Lemma~\ref{lem:Di-two-step-lower}, the single-step recurrence at generation $N$ gives:
\begin{equation}
\label{eq:ind-single-step}
D_{N+1} + (1-\alpha)^2 D_N \;\ge\; \frac{\alpha \underline{\eta}}{16} \varepsilon_{\star,N}^2.
\end{equation}
Adding \eqref{eq:ind-hyp-gamma} and \eqref{eq:ind-single-step} together yields:
\[
D_{N+1} + 2(1-\alpha)^2 D_N + (1-\alpha)^{2(N+1-i_0)}D_{i_0} + 2\sum_{k=i_0+1}^{N-1}(1-\alpha)^{2(N+1-k)}D_k \;\ge\; \frac{\alpha \underline{\eta}}{16}(\varepsilon_{\star,N}^2 + (1-\alpha)^2 S_N).
\]
By definition, $\varepsilon_{\star,N}^2 + (1-\alpha)^2 S_N = S_{N+1}$. Moreover, the term $2(1-\alpha)^2 D_N$ perfectly absorbs into the summation, extending its upper limit to $N$:
\[
D_{N+1} + (1-\alpha)^{2(N+1-i_0)}D_{i_0} + 2\sum_{k=i_0+1}^{N}(1-\alpha)^{2(N+1-k)}D_k \;\ge\; \frac{\alpha \underline{\eta}}{16} S_{N+1}.
\]
This completes the induction.

\textit{Isolating $D_N$ via Uniform Boundedness:}
To isolate $D_N$, we must bound the accumulated history on the left side of \eqref{eq:lower-claim-induction}. From \eqref{eq:supDi-regC}, we know that $\sup_{i \ge i_0} D_i \le D_{\max}$. Consequently, the summation is bounded by a geometric series:
\[
2\sum_{k=i_0+1}^{N-1}(1-\alpha)^{2(N-k)}D_k \;\le\; 2D_{\max} \sum_{j=1}^{\infty} (1-\alpha)^{2j} \;=\; 2D_{\max} \frac{(1-\alpha)^2}{1-(1-\alpha)^2}.
\]
We can rewrite the left side of \eqref{eq:lower-claim-induction} to match the target format by extracting $\frac{\alpha \underline{\eta}}{16} (1-\alpha)^{2(N-i_0)}D_{i_0}$:
\[
D_N + (1-\frac{\alpha \underline{\eta}}{16})(1-\alpha)^{2(N-i_0)}D_{i_0} + 2\sum_{k=i_0+1}^{N-1}(1-\alpha)^{2(N-k)}D_k + \frac{\alpha \underline{\eta}}{16} (1-\alpha)^{2(N-i_0)}D_{i_0} \;\ge\; \frac{\alpha \underline{\eta}}{16} S_N.
\]
Substituting the geometric upper bound and bounding $(1-\frac{\alpha \underline{\eta}}{16})(1-\alpha)^{2(N-i_0)}D_{i_0} \le D_{\max}$ yields:
\[
D_N + D_{\max} \left(1 + \frac{2(1-\alpha)^2}{1-(1-\alpha)^2} \right) \;\ge\; \frac{\alpha \underline{\eta}}{16} \Big( S_N - (1-\alpha)^{2(N-i_0)}D_{i_0} \Big).
\]
Taking $C_{\mathrm{bias}} = \max\{D_{\max} \left(1 + \frac{2(1-\alpha)^2}{1-(1-\alpha)^2} \right), \frac{3\bar c_A}{\alpha^2}\}$ yields the left-hand inequality of \eqref{eq:regC-equivalence-bias}:
\[
D_N + C_{\mathrm{bias}} \;\ge\; \frac{\alpha \underline{\eta}}{16} \Big( S_N - (1-\alpha)^{2(N-i_0)}D_{i_0} \Big).
\]
\end{proof}

\section{Experimental Validation}
\label{sec:experiments-appendix}

We validate our theoretical framework by implementing the recursive training described in Section \ref{sec:intro} on three different datasets: a 10-dimensional  Gaussian mixture model (GMM), where ground-truth quantities can be computed analytically  or estimated with high precision, and for score networks trained on image datasets (Fashion-MNIST \cite{fashionMNIST_paper} and CIFAR-10 (\cite{cifar10}). We verify both the intra-generational bounds (Propositions~\ref{prop:upper-main} and~\ref{prop:lower-main})  and the global accumulation formula (Theorem~\ref{thm:divergence-decomposition-finite-horizon-main}) on these settings. 

\subsection{First Experimental Setup: 10-dimensional Gaussian Mixture}
\label{subsec:experimental-setup-10gmm}

\paragraph{Ground-truth distribution.}
We consider a mixture of five isotropic Gaussians in $\mathbb{R}^{10}$:
\begin{equation}
    \pdata(\bx) = \frac{1}{5} \sum_{k=1}^{5} \mathcal{N}(\bx; \boldsymbol{\mu}_k, \sigma^2 \bI_{10}),
\end{equation}
where the means $\{\boldsymbol{\mu}_k\}_{k=1}^5$ are placed at the origin $(0, 0)$ and at $((-4, -4), (-4, 4), (4, -4), (4, 4))$  and $\sigma = 0.6$. This configuration creates a  well-separated mixture with analytically tractable score functions. 

\paragraph{Exact Gaussian closure for GMM data.}
Contrary to realistic data distributions (e.g. images), the Gaussian initialisation for $p_\text{data}$ in this experiment is motivated by the close-form expressions it yields:
\begin{itemize}
    \item \textit{Explicit score expression.} The true score of $p_\text{data}$ can be written explicitly as:
\begin{equation}
    \nabla_{\mathbf{x}} \log p_{\text{data}}(\mathbf{x}) = \sum_{k=1}^{5} \underbrace{\left( \frac{\mathcal{N}(\mathbf{x}; \boldsymbol{\mu}_k, \sigma^2 \mathbf{I}_{10})}{\sum_{j=1}^{5} \mathcal{N}(\mathbf{x}; \boldsymbol{\mu}_j, \sigma^2 \mathbf{I}_{10})} \right)}_{\text{Posterior probability } \pi_k(\mathbf{x})} \left( - \frac{\mathbf{x} - \boldsymbol{\mu}_k}{\sigma^2} \right)
\end{equation}
    \item \textit{Marginals explicit expression.}  Because the forward diffusion is linear with additive Gaussian noise, each mixture component remains Gaussian at every diffusion time, and all time marginals are therefore Gaussian mixtures with explicitly known parameters.
    Moreover, both the ideal and learned reverse-time SDEs preserve Gaussianity at the component level.
    Consequently, for every generation $i$, the generated distribution $\hat p^{\,i}$ and the training mixture
    $q_i=\alpha p_{\text{data}}+(1-\alpha)\hat p^{\,i}$
    are exactly Gaussian mixtures.
    This yields closed-form expressions for the score, drift mismatch, and quadratic variation, ensuring that all theoretical quantities appearing in the bounds are well-defined and can be evaluated with high numerical precision.
\end{itemize}

\paragraph{Score estimation via kernel density.}
For the Gaussian mixture, rather than training neural networks, we use soft kernel density estimators (KDE) \cite{KDE_silverman} with bandwidth $h = \sigma =  0.6$ to approximate the score function from samples. Given 
$N$ training samples $\{x_j\}_{j=1}^N$, the estimated score at diffusion time $t$ is:
\begin{equation}
    \hat{\bs}_t(\bx) = \frac{\sum_{j=1}^N w_j(\bx, t) \cdot (a_t \, \bx_j - \bx)}{\sigma_t^2},
    \quad \text{where} \quad
    w_j(\bx, t) \propto \exp\left( -\frac{\|\bx - a_t \bx_j\|^2_2}{2\sigma_t^2} \right),
\end{equation}
with $a_t = e^{-t/2}$, $\sigma_t^2 = a_t^2 h^2 + (1 - a_t^2)$ and the constant of proportionality  chosen to ensure $\sum_{j=1}^N w_j(\bx, t)=1$.  This provides  a smooth, differentiable score estimate that captures the structure of the training 
distribution while introducing controlled approximation error.

\paragraph{Recursive training protocol.}
We implement the iterative retraining procedure described in Section~\ref{sec:intro}:
\begin{enumerate}[leftmargin=*, itemsep=2pt]
    \item \textbf{Generation 0:} Sample $N = 100{,}000$ points from $\pdata$ 
    and construct the initial score estimator $\hat{s}_0$.
    \item \textbf{Generation $i \to i+1$:} Form the training mixture 
    $q_i = \alpha \, \pdata + (1-\alpha) \, \phat^i$ by combining 
    $\lfloor \alpha N \rfloor$ fresh samples from $\pdata$ with 
    $(N - \lfloor \alpha N \rfloor)$ synthetic samples from $\phat^i$.
    \item \textbf{Score update:} Construct the new score estimator $\hat{\bs}_{i+1}$ 
    from the training mixture $q_i$.
    \item \textbf{Sampling:} Generate $\phat^{i+1}$ by running the reverse diffusion 
    SDE with drift $\hat{\bs}_{i+1}$ for 500 Euler-Maruyama steps from $T=4$ to $t_0=0.02$.
\end{enumerate}
We repeat this process for 20 generations across three fresh data fractions 
$\alpha \in \{0.1, 0.5, 0.9\}$, with 10 independent runs per configuration to estimate variance.

\paragraph{Divergence estimation.}
We estimate the $\chi^2$ divergence $\chi^2(\phat^i \| \pdata)$ using 
a binary classifier trained to distinguish samples from $\phat^i$ versus $\pdata$. 
Specifically, we train a 3-layer MLP with SiLU activations for 30 epochs on balanced 
datasets, then estimate:
\begin{equation}
    \chi^2(\phat^i \| \pdata) 
    \approx \E_{\bx \sim \pdata} \left[ \left( \frac{\phat^i(\bx)}{\pdata(\bx)} - 1 \right)^2 \right]
    = \E_{\bx \sim \pdata} \left[ (e^{\hat{r}(\bx)} - 1)^2 \right],
\end{equation}
where $\hat{r}(\bx)$ is the learned log-density ratio. A similar  procedure is used to estimate
$I_i = \chi^2(\phat^{i+1} \| q_i)$ and $\mathrm{KL}(\phat^{i+1} \| q_i)$.

\paragraph{Score error computation.}
For the 10-dimensional Gaussian mixture experiment, the score error 
$\be_i(\bx, t) = \hat{\bs}_{\theta_i}(\bx, t) - \bs_{q_i}^\star(\bx, t)$ 
is computed as follows. The learned score $\hat{\bs}_{\theta_i}$ is obtained via 
a soft kernel density estimator (KDE) with bandwidth $h = 0.6$ fitted to the 
training samples from $q_i$. The target score $\bs_{q_i}^\star$ of the mixture 
$q_i = \alpha\, \pdata + (1-\alpha)\, \phat^i$ is computed analytically using 
the identity
\[
\bs_{q_i}^\star(\bx, t) 
= \frac{\alpha\, p_{\mathrm{data},t}(\bx)\, \bs_{\mathrm{data}}^\star(\bx,t) 
      + (1-\alpha)\, \phat^i_t(\bx)\, \hat{\bs}^i(\bx,t)}
       {\alpha\, p_{\mathrm{data},t}(\bx) + (1-\alpha)\, \phat^i_t(\bx)},
\]
where $\bs_{\mathrm{data}}^\star$ is the exact score of the Gaussian mixture 
(available in closed form), and $\hat{\bs}^i$ is the KDE-approximated score 
of $\phat^i$ using $50{,}000$ samples from the previous generation.

\paragraph{Score error and score energy.} For each generation $i$, we track the score error $\be(\bx, t)$ at time $t \in [t_0, T]$ and for sample $\bx$ as the difference between the learned model and the true model at this generation given by, 
\[
\bs_{q_i}^\star(\bx, t) 
= \frac{\alpha\, p_{\mathrm{data},t}(\bx)\, \bs_{\mathrm{data}}^\star(\bx,t) 
      + (1-\alpha)\, \phat^i_t(\bx)\, \hat{\bs}^i(\bx,t)}
       {\alpha\, p_{\mathrm{data},t}(\bx) + (1-\alpha)\, \phat^i_t(\bx)}.
\]
In addition, we track the integrated score error along sampling trajectories:
\begin{equation}
    \hat{\varepsilon}_i^2 = \E\left[ \int_0^T \| \hat{\bs}_{i+1}(\hat{\bY}_t^i, t) - \bs_{q_i}(\hat{\bY}_t^i, t) \|^2_2 \, dt \right],
\end{equation}
where the expectation is over trajectories $\{\hat{\bY}_t^i\}$ sampled using the \emph{learned} score $\hat{\bs}_{i+1}$. For the lower bound, we also compute $\varepsilon_{\star,i}^2$  taking expectation along trajectories sampled using the \emph{target} score $\bs_{q_i}$, approximated via the mixture formula.

\paragraph{Observability coefficient.} \label{para:obs}
We estimate, for all generations $i \in \{0, ..., 19\}$, the observability of errors coefficient $\hat{\eta}_i = \mathrm{Var}(\E[M_i \mid \bY_{t_0}^{\star, i}]) / \mathrm{Var}(M_i)$, 
using the following method:
\begin{enumerate}[leftmargin=*, itemsep=2pt]
    \item Run $N = 50{,}000$ reverse trajectories using the target score $\bs_{q_i}$. For each trajectory $j = 1, \ldots, N$, we simulate the ideal reverse process  $(\bY_s^{\star, (j)})_{s \in [t_0, T]}$ using the target score $\bs_{q_i}^\star$ via the 
    Euler--Maruyama discretization with step size $\Delta t = T / n_{\mathrm{steps}}$:
    \[
    \bY^{\star, (j)}_{s - \Delta t} = \bY_s^{\star, (j)} + \Big[-\tfrac{1}{2}\bY_s^{\star, (j)} - \bs_{q_i}^\star(\bY_s^{\star, (j)}, s)\Big] \Delta t + \sqrt{\Delta t}\, \bz_s^{(j)},
    \]
where $\bz_s^{(j)} \sim \cN(0, \bI_d)$ are independent standard Gaussian increments.
At each step, we compute the score error 
$\be_i(\bY_s^{\star, (j)}, s) = \hat{\bs}_{\theta_i}(\bY_s^{\star, (j)}, s) - \bs_{q_i}^\star(\bY_s^{\star, (j)}, s)$
and accumulate the discretized martingale:
\[
M_i^{(j)} \approx -\sum_{k=1}^{n_{\mathrm{steps}}} \be_i(\bY_{t_k}^{\star, (j)}, t_k) \cdot \bz_{t_k}^{(j)} \sqrt{\Delta t},
\]
which approximates the It\^o integral $M_i = -\int_{t_0}^T \be_i(\bY_s, s) \cdot \mathrm{d}\bar{\bB}_s$.
The terminal samples $\bY_{t_0}^{(j)}$ and martingale values $M_i^{(j)}$ are then used 
to estimate the observability coefficient as described above. 
    \item Train a regression MLP to predict $M_i$ from $\bY_{t_0}^{\star, i}$. 
    \item Compute $\hat{\eta}_i = \mathrm{Var}(\hat{f}(\bY_{t_0}^{\star, i}) / \mathrm{Var}(M_i)$, 
    where $\hat{f}$ is the learned regressor.
\end{enumerate}
This provides an empirical estimate of how much score error information is retained 
in the terminal distribution.

\paragraph{Heatmap for Memory Structure.} To empirically validate the theoretical memory structure in Theorem~\ref{thm:divergence-decomposition-finite-horizon-main},  we measure how score errors at each generation  propagate to affect 
the quality of the final distribution. More specifically, to isolate the contribution of generation $k$'s score error to $D_n$, we run two parallel experiments:
\begin{itemize}
    \item \textit{Baseline}: All generations use their learned scores $\hat{\bs}_1, \ldots, \hat{\bs}_n$, yielding divergence $D_n$ (computed as described above).
    \item \textit{Ablated at $i$}: Generation $i$ uses the \emph{true} score $\bs_{q_i}$ (here chosen to be the well trained (on real data) generation $0$ model) instead of $\hat{\bs}_i$, while all other generations use their learned scores, yielding divergence $D_n^{(-i)}$.
\end{itemize}
The difference measures how much generation $i$'s imperfect score worsened the final distribution:
\begin{equation}
    \mathrm{Contrib}[i, n] = D_n - D_n^{(-i)}.
\end{equation}
The motivation behind this approach is to understand ``How much better would $\phat^n$ approximate $\pdata$ if generation $i$ had used a perfect score? A large contribution indicates that errors at generation $i$ significantly degrade the final output, even many generations later.
\subsection{Results}
\label{subsec:results-experiments-10dgmm}

\begin{figure}[ht]
    \centering
    \begin{subfigure}[b]{0.48\textwidth}
        \centering
        \includegraphics[width=\textwidth]{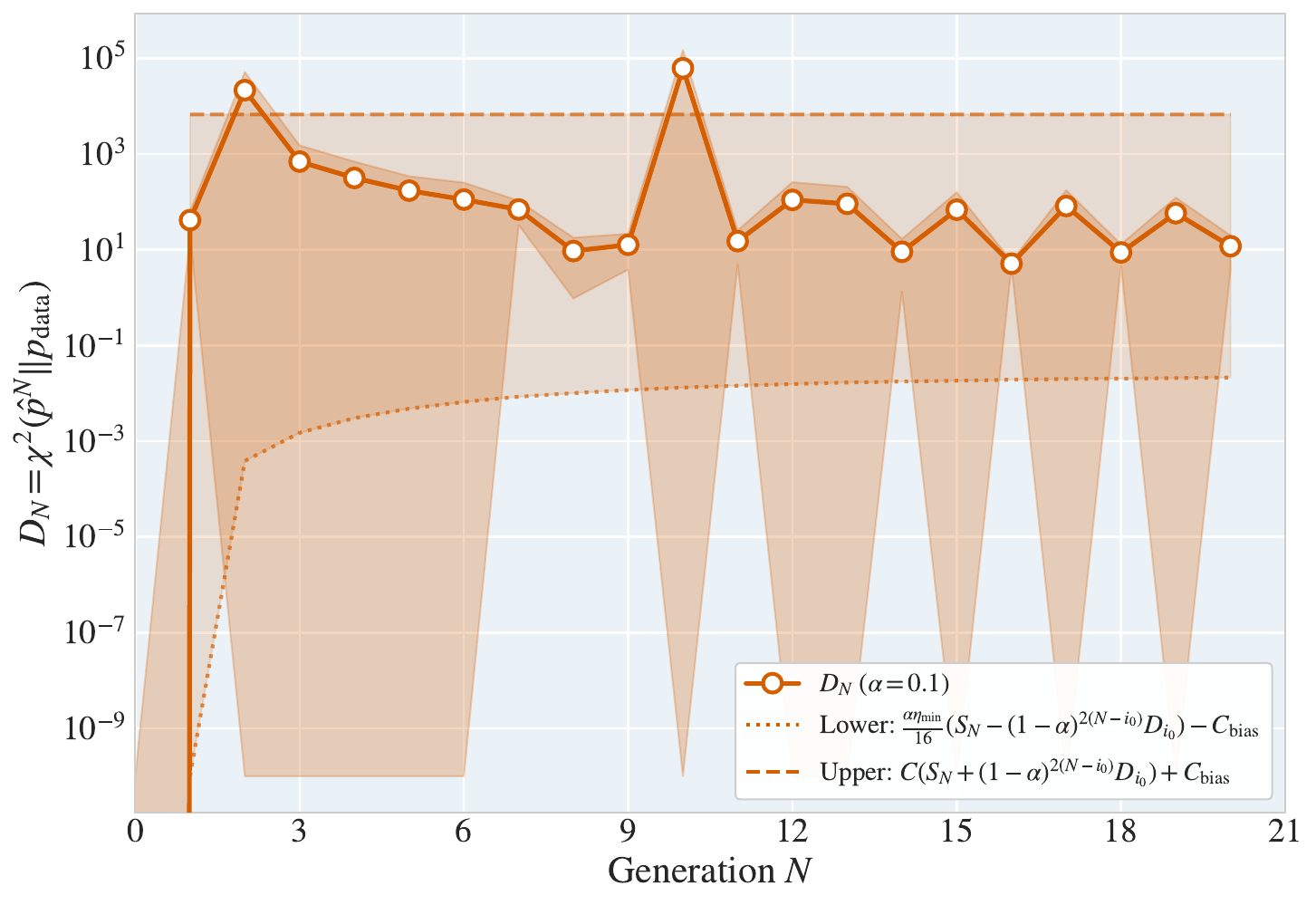}
        \caption{$\alpha = 0.1$}
    \end{subfigure}
    \hfill
    \begin{subfigure}[b]{0.48\textwidth}
        \centering
        \includegraphics[width=\textwidth]{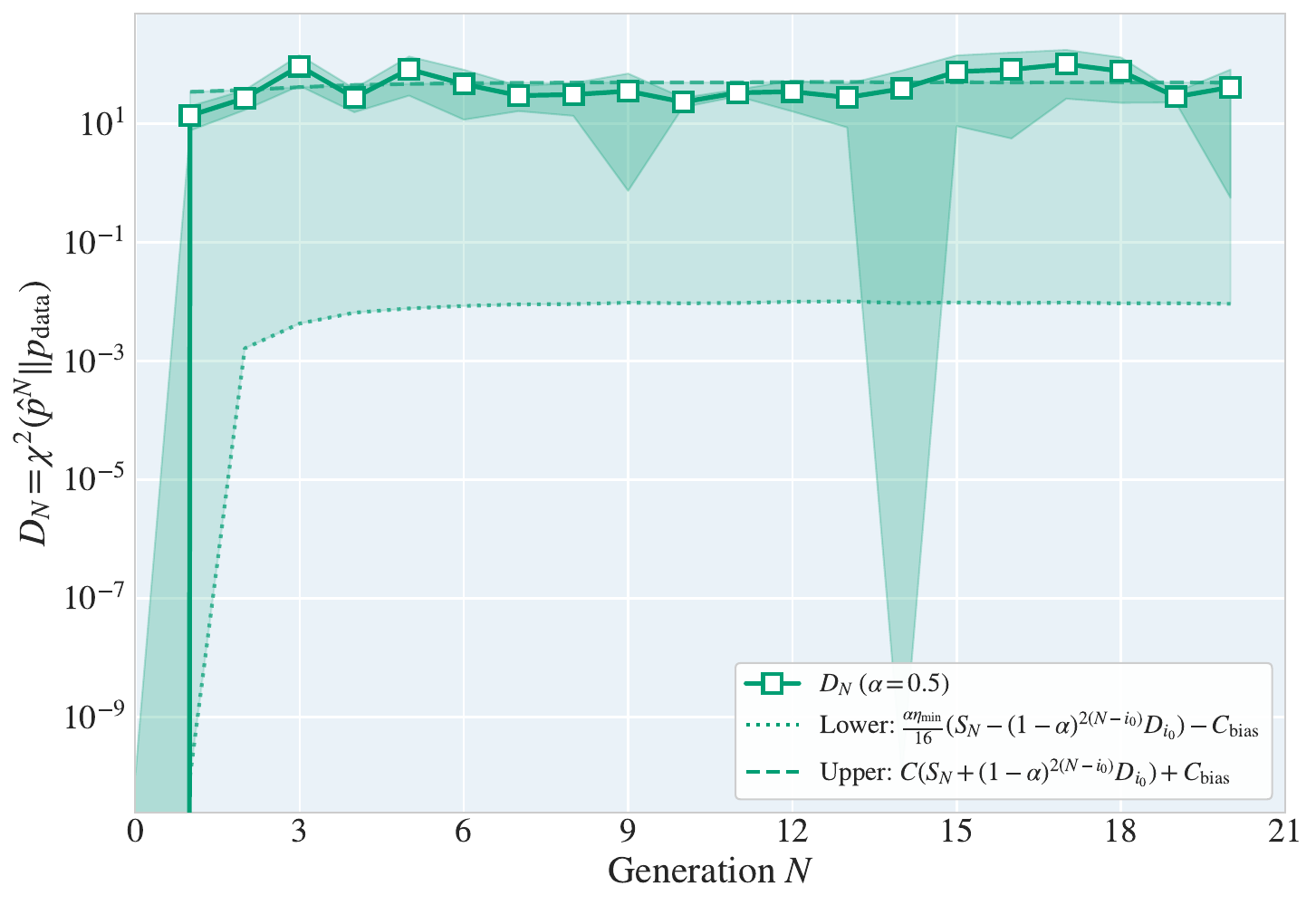}
        \caption{$\alpha = 0.5$}
    \end{subfigure}
    \hfill
    \begin{subfigure}[b]{0.48\textwidth}
        \centering
        \includegraphics[width=\textwidth]{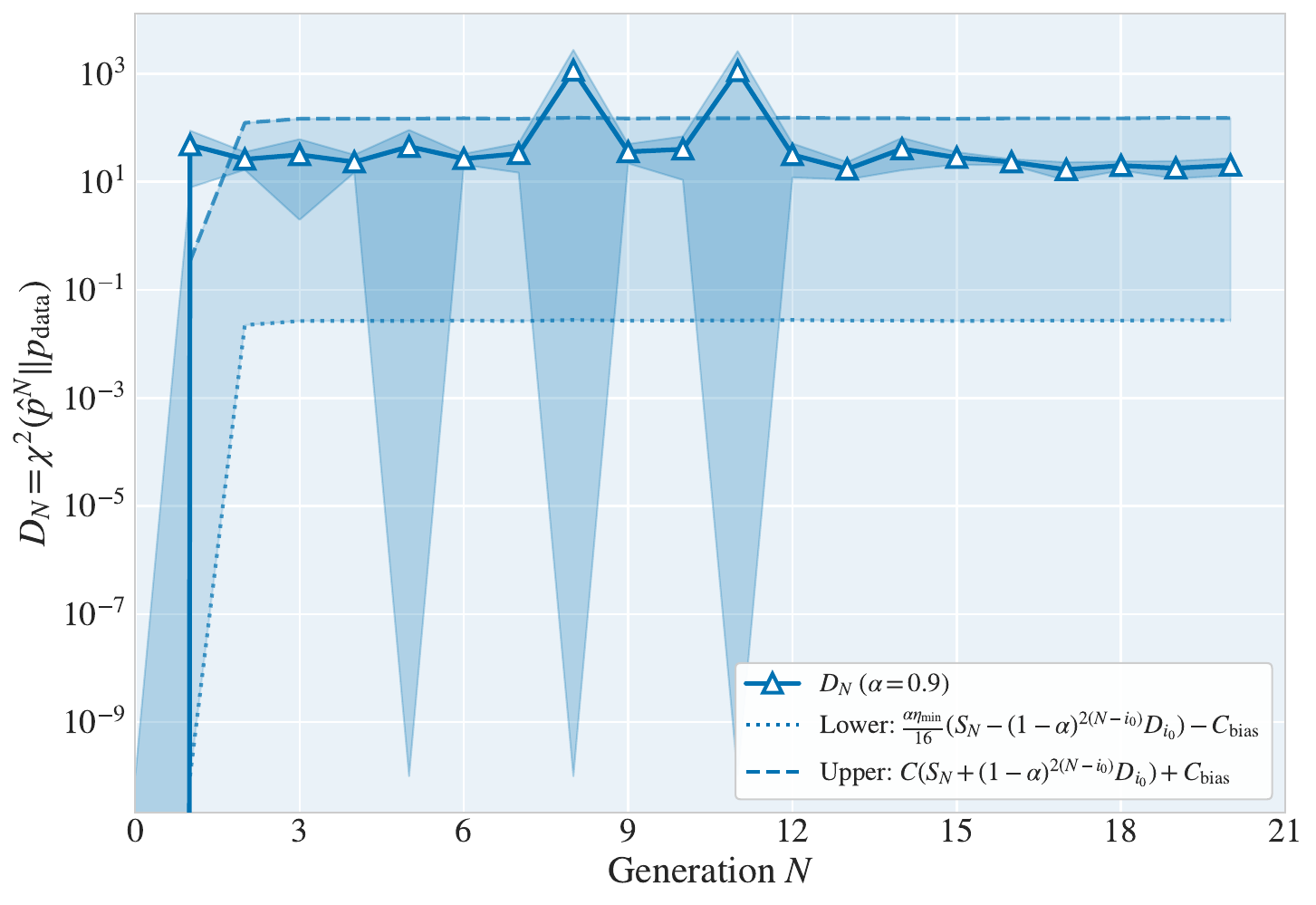}
        \caption{$\alpha = 0.9$}
    \end{subfigure}
    \caption{
    \textbf{Accumulation of inter-generational divergence in the 10D GMM experiment.}
    Solid lines show the empirical divergence $D_N = \chi^2(\hat p^N \| \pdata)$ averaged over runs, with shaded regions indicating one standard deviation, for values of $\alpha \in \{0.1, 0.5, 0.9\}$.
    In each figure, the dotted curve corresponds to the \emph{theoretical lower bound}
    $\frac{\alpha \eta_{\min}}{16(1+(1-\alpha)^2)}\, S_N$, derived in Theorem~\ref{thm:divergence-decomposition-finite-horizon-appendix}, where
    $S_N = \sum_{k=0}^{N-1}(1-\alpha)^{2(N-1-k)}\varepsilon_{\star,k}^2$
    is the predicted accumulated error energy.
    While the lower bound is easy to compute and corresponds to the theoretical lower bound derived in Theorem~\ref{thm:divergence-decomposition-finite-horizon-appendix}, the upper bound, more difficult to compute directly (as it depends on theoretical constants) is obtained by an affine regression of
    $S_N + (1-\alpha)^{2N}D_0$ onto the observed divergence. As such, the upper bound is not a theoretical bound and is shown only to illustrate that the functional dependence predicted by the theory accurately captures the observed growth across generations.} 
    \label{fig:10gmm-accumulation}
\end{figure}

\paragraph{Intra-generational bounds.}
Figures~\ref{fig:bounds_verification} and \ref{fig:sandwich_bounds_verification} validate both Propositions~\ref{prop:upper-main}  and~\ref{prop:lower-main}. The upper bound $\mathrm{KL}(\phat^{i+1} \| q_i) \leq  \frac{1}{2} \hat{\varepsilon}_i^2$ holds consistently, with the gap reflecting  higher-order terms neglected in the Girsanov analysis. The lower bound  $\chi^2(\phat^{i+1} \| q_i) \geq \frac{1}{8} \eta_i \varepsilon_{\star,i}^2$  is also satisfied, with the tightness depending on the observability structure.

\paragraph{Memory structure.}
Figure~\ref{fig:memory_heatmap} visualizes the contribution matrix, showing how  past score errors propagate to current divergence. The effective memory horizon scales 
inversely with $\alpha$: at $\alpha = 0.1$, score errors from 10+ generations ago still 
contribute significantly, while at $\alpha = 0.9$, only the most recent generation matters. 

\paragraph{Global error accumulation.}
Figure~\ref{fig:10gmm-accumulation} shows the evolution of $D_i = \chi^2(\phat^i \| \pdata)$ 
across generations. The empirical trajectories closely follow the theoretical prediction 
$D_n \approx \sum_{k=0}^{N-1} (1-\alpha)^{2(N-1-k)} \varepsilon_{\star,k}^2$, confirming 
the discounted-sum structure of error accumulation. Past generation $4$, we can see our control holds, validating our framework. 

\paragraph{Observability.}
Figure~\ref{fig:observability} shows that $\hat{\eta}_i$ stabilizes to a positive 
constant after initial transients, with higher $\alpha$ yielding larger observability. 
This confirms that score errors on distributions closer to $\pdata$ project 
more strongly onto distinguishable directions, tightening the lower bound. Figure~\ref{fig:fashion-collapse-visual} provides qualitative confirmation: at low $\alpha$, 
the learned distribution visibly disperses over generations, while high $\alpha$ 
maintains distributional fidelity throughout the recursive process.

\begin{figure}[t]
    \centering
    \includegraphics[width=0.8\textwidth]{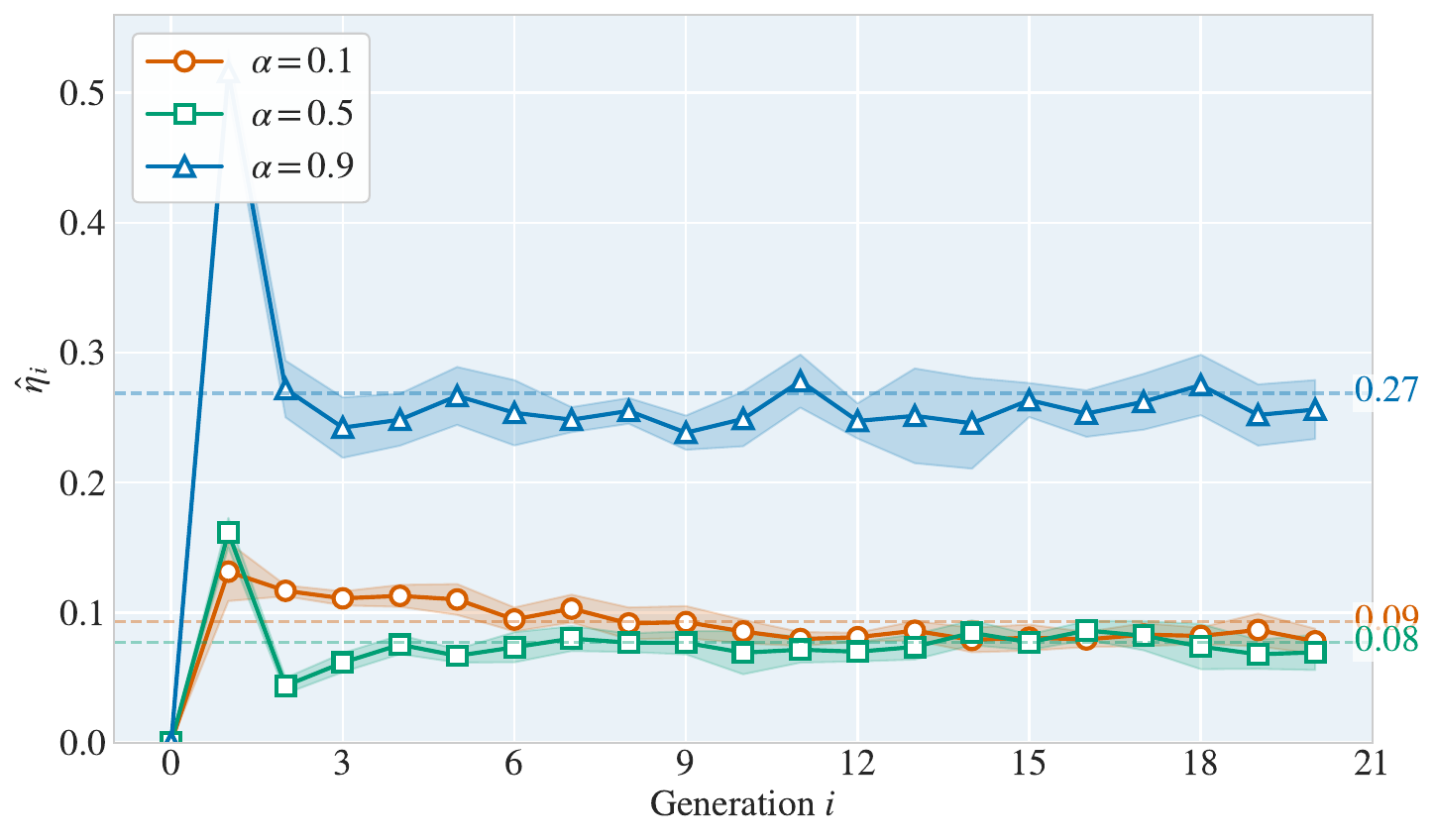}
    \caption{
    \textbf{Observability coefficient $\hat{\eta}_i$ across generations in the
    10-dimensional Gaussian Mixture setting.}
    The observability coefficient measures the fraction of score error energy that
    leaves a detectable imprint on the terminal distribution:
    $\hat{\eta}_i = \mathrm{Var}(\widehat{\E}[M_i \mid \bY_{t_0}^{\star,i}])
    / \mathrm{Var}(M_i)$,
    where $M_i = -\int_0^T \mathbf e_i \cdot \mathrm d\bar{\mathbf B}_t$ is the error martingale.
    Higher $\alpha$ (more fresh data) yields larger $\hat{\eta}_i$, indicating that
    score errors are more strongly coupled to observable features when training
    distributions are closer to $\pdata$.
    At low $\alpha$, the training mixture $q_i$ drifts further from the data manifold,
    causing errors to accumulate in directions that are less distinguishable at the
    terminal time.
    Dashed horizontal lines indicate asymptotic means; shaded regions show
    $\pm 1$ standard deviation across runs.
    The non-zero values confirm that the lower bound
    $\chi^2(\phat^{i+1}\|q_i) \geq \frac{1}{8}\eta_i \varepsilon_{\star,i}^2$
    provides meaningful information about intra-generational error.
    }
    \label{fig:observability}
\end{figure}

\begin{figure}[t]
    \centering
    \includegraphics[width=1.0\linewidth]{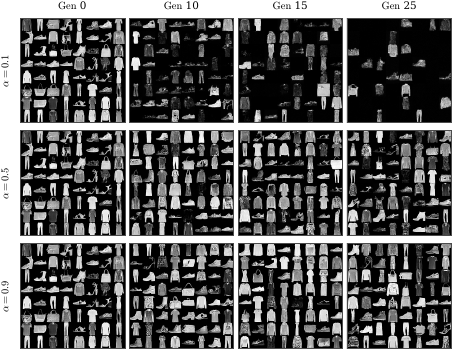}
    \caption{Random samples at generations $0$ (left), $10$, $15$ and generation $25$ (right) following the above recursive pipeline (right) for three fresh-data mixing rates $\alpha \in \{0.1, 0.5, 0.9\}$ (from top ($\alpha=0.1$) to bottom ($\alpha=0.9$). Smaller values of $\alpha$ (more self-generated data) yield rapid degradation—loss of sharpness and diversity—while larger $\alpha$ stabilizes training and preserves sample quality over generations. } 
\label{fig:fashion-collapse-visual}
\end{figure}

\subsection{Second Experimental Setup: Fashion-MNIST and CIFAR-10}

\label{subsec:experiments-fashion_mnist}

Having established the theoretical framework in the controlled GMM setting, we turn to higher-dimensional image datasets, Fashion-MNIST and CIFAR-10, to verify that the core structural predictions continue to hold.

\paragraph{Focused validation strategy.}
In the high-dimensional image setting ($d = 784$), precise estimation of divergences 
such as $\chi^2(\phat^i \| \pdata)$ becomes statistically challenging: 
density ratio estimation in high dimensions suffers from the curse of dimensionality, 
and classifier-based divergence estimates exhibit high variance. Rather than attempting 
to verify all theoretical quantities with potentially unreliable estimates, we focus 
on two robust signatures that remain interpretable at scale:

\begin{enumerate}[leftmargin=*, itemsep=3pt]
    \item \textbf{Memory contribution structure.} The heatmap visualization of  the
    $(1-\alpha)^{2(N-k)} \varepsilon_{\star,k}^2$  contributions in \eqref{eq:DN1} depends only on 
    the \emph{structure} of error propagation, not on absolute divergence magnitudes. 
    The qualitative pattern--- a narrow diagonal band for large $\alpha$, and a wide triangular 
    region for small $\alpha$---is dimension-independent and directly tests the 
    discounted-sum accumulation formula.
    
    \item \textbf{Observability coefficient.} The ratio 
    $\eta_i = \mathrm{Var}(\E[M_i \mid \bY_{t_0}]) / \mathrm{Var}(M_i)$ is a 
    normalized quantity in $[0, 1]$ that measures the \emph{fraction} of score 
    error information retained in terminal samples. Unlike raw divergences, $\eta_i$ 
    does not scale with dimension and provides a stable diagnostic across settings.
\end{enumerate}

\begin{figure}[t]
    \centering
    \includegraphics[width=0.8\textwidth]{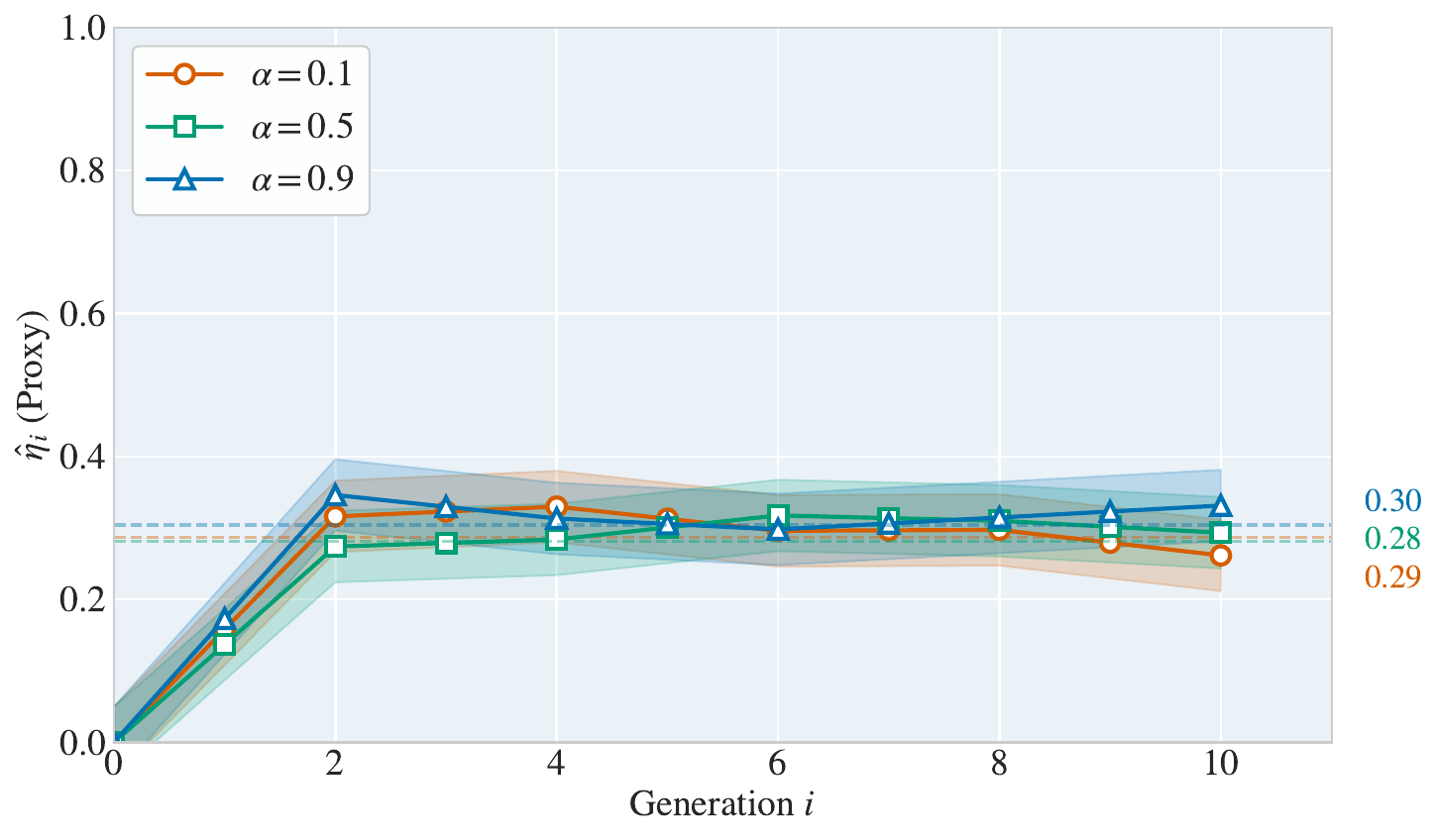}
    \caption{\textbf{Observability coefficient on CIFAR-10.} 
    We estimate $\eta_i = \mathrm{Var}(\widehat{\E}[M_i \mid \bY_{t_0}]) / \mathrm{Var}(M_i)$ 
    by regressing the error martingale on terminal sample features (described in p.\pageref{para:obs}).  After an initial 
    transient (generation 0 uses a placeholder value of 1), all three $\alpha$ values 
    converge to similar observability levels ($\eta \approx 0.25$--$0.35$), indicating 
    that approximately 30\% of score error energy propagates to detectable distributional 
    changes. Unlike the GMM setting where higher $\alpha$ yielded higher $\eta$, the 
    CIFAR-10 curves overlap substantially, suggesting that observability in 
    high-dimensional image spaces is governed more by the data manifold geometry than 
    by the training mixture composition. The non-zero values confirm that the lower 
    bound $\chi^2 \geq \frac{1}{8}\eta\varepsilon_\star^2$ provides meaningful 
    information in the neural network setting.}
    \label{fig:fashionminist-observability}
\end{figure}

The memory heatmaps test the \emph{structural} prediction of Theorem~\ref{thm:divergence-decomposition-finite-horizon-main}: that current divergence decomposes as a geometrically-weighted sum of past innovations, with decay rate $(1-\alpha)^2$. If this structure holds empirically, then the theoretical framework captures the essential dynamics of recursive training, even if absolute  divergence values are difficult to estimate precisely.

The observability coefficient tests whether our visibility of errors discussion (Section~\ref{subsection:observability-of-errors-main}) remains valid in a fully data-driven experiments: that only gradient-aligned score errors affect the terminal distribution. A stable, non-zero $\eta_i$ confirms that 
our lower bound provides meaningful information in the neural network setting.

\paragraph{Experimental setup.}
We train diffusion models on $28 \times 28$ grayscale images from Fashion-MNIST  ($N = 50{,}000$ training samples). The score network is a U-Net architecture with:
\begin{itemize}[leftmargin=*, itemsep=2pt]
    \item Residual blocks with group normalization and SiLU activations
    \item Sinusoidal time embeddings projected through a 2-layer MLP
    \item Dropout regularization ($p = 0.1$) to prevent overfitting
    \item Exponential moving average (EMA) of weights with decay $0.99$
\end{itemize}
\paragraph{The $\varepsilon$-prediction objective.}
In the standard DDPM formulation~\cite{ho2020denoising}, the forward diffusion process adds 
Gaussian noise to a data sample $\bx_0 \sim \pdata$ according to a variance schedule 
$\{\bar{\alpha}_t\}_{t=1}^T$. The noisy sample at time $t$ is given by
\[
\bx_t = \sqrt{\bar{\alpha}_t}\, \bx_0 + \sqrt{1 - \bar{\alpha}_t}\, \bz, 
\quad \text{where } \bz \sim \cN(0, \bI_d).
\]
Here, $\bz$ is the \emph{noise variable}---a standard Gaussian
vector that represents the stochastic component added during the forward process. 
The neural network $\varepsilon_\theta(\bx_t, t)$ is trained to predict this 
noise from the corrupted sample $\bx_t$, yielding the denoising objective:
\[
\cL(\theta) = \E_{t \sim \mathrm{Unif}(t_0,T),\, \bx_0 \sim \pdata,\, \bz \sim \cN(0, \bI_d)} 
\Big[ \| \varepsilon_\theta(\bx_t, t) - \bz \|^2_2 \Big].
\]
This is equivalent to score matching, since the score of the noisy distribution satisfies 
$\nabla_{\bx_t} \log p_t(\bx_t) = -\bz / \sqrt{1 - \bar{\alpha}_t}$, and thus $\varepsilon_\theta$ implicitly parameterizes the score function.

\paragraph{Recursive training protocol.}
We implement the iterative retraining procedure with fresh data fractions 
$\alpha \in \{0.1, 0.5, 0.9\}$:
\begin{enumerate}[leftmargin=*, itemsep=2pt]
    \item \textbf{Generation 0:} Train the base model on real Fashion-MNIST data 
    for 200 epochs with AdamW optimizer (learning rate $2 \times 10^{-4}$, 
    weight decay $10^{-4}$) and cosine learning rate annealing.
    
    \item \textbf{Generation $i \to i+1$:} Construct the training mixture 
    $q_i = \alpha \, \pdata + (1-\alpha) \, \phat^i$ by combining 
    $\lfloor \alpha N \rfloor$ real images with $(1-\lfloor \alpha N \rfloor)$ 
    synthetic samples generated via DDPM sampling \cite{ho2020denoising} from the current model.
    
    \item \textbf{Fine-tuning:} Train the new model on $q_i$ for 100 epochs, 
    with early stopping if validation loss plateaus for 50 epochs.
    
    \item \textbf{Sampling:} Generate synthetic data using 1000-step DDPM 
    sampling with the EMA model weights.
\end{enumerate}
We run 10 generations per $\alpha$ value, tracking all theoretical quantities 
throughout the process.

\begin{figure}[t]
    \centering
    \begin{subfigure}[b]{0.48\textwidth}
        \includegraphics[width=\textwidth]{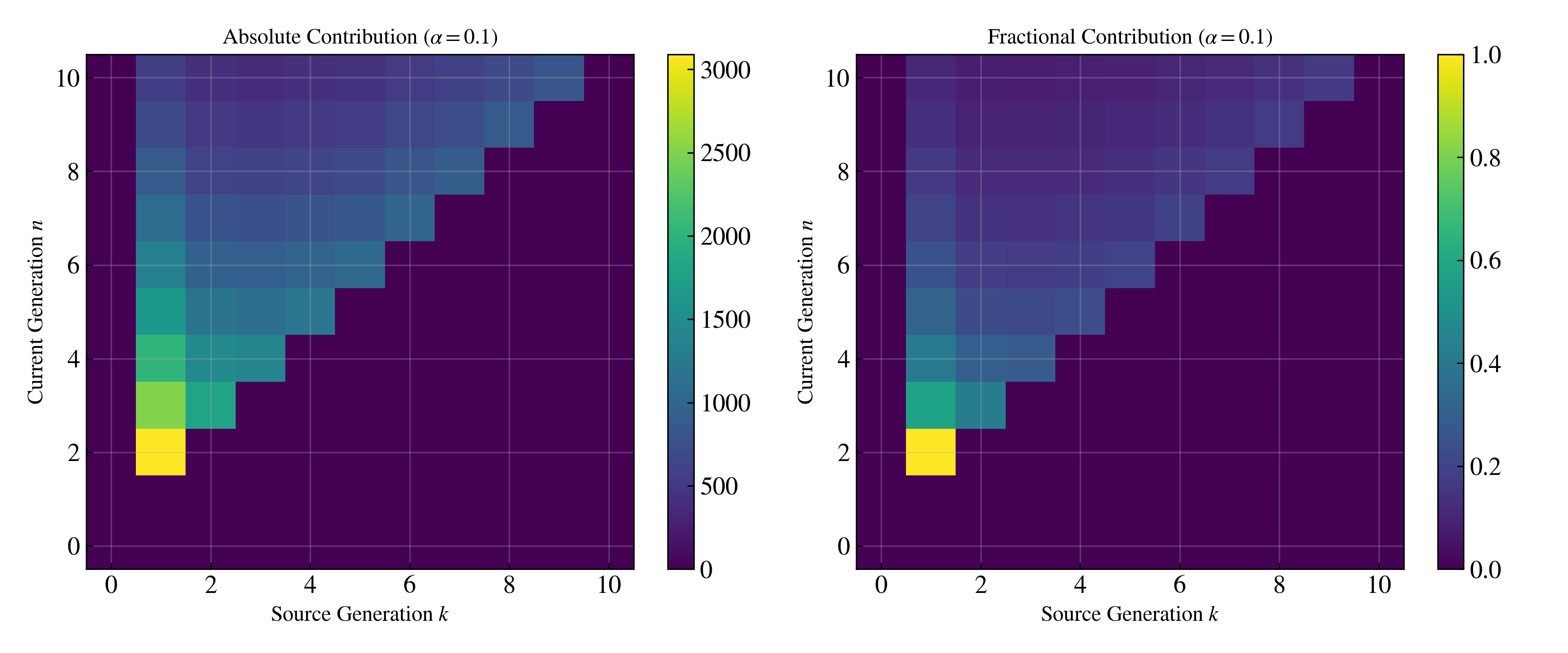}
        \caption{$\alpha = 0.1$}
    \end{subfigure}
    \hfill
    \begin{subfigure}[b]{0.48\textwidth}
        \includegraphics[width=\textwidth]{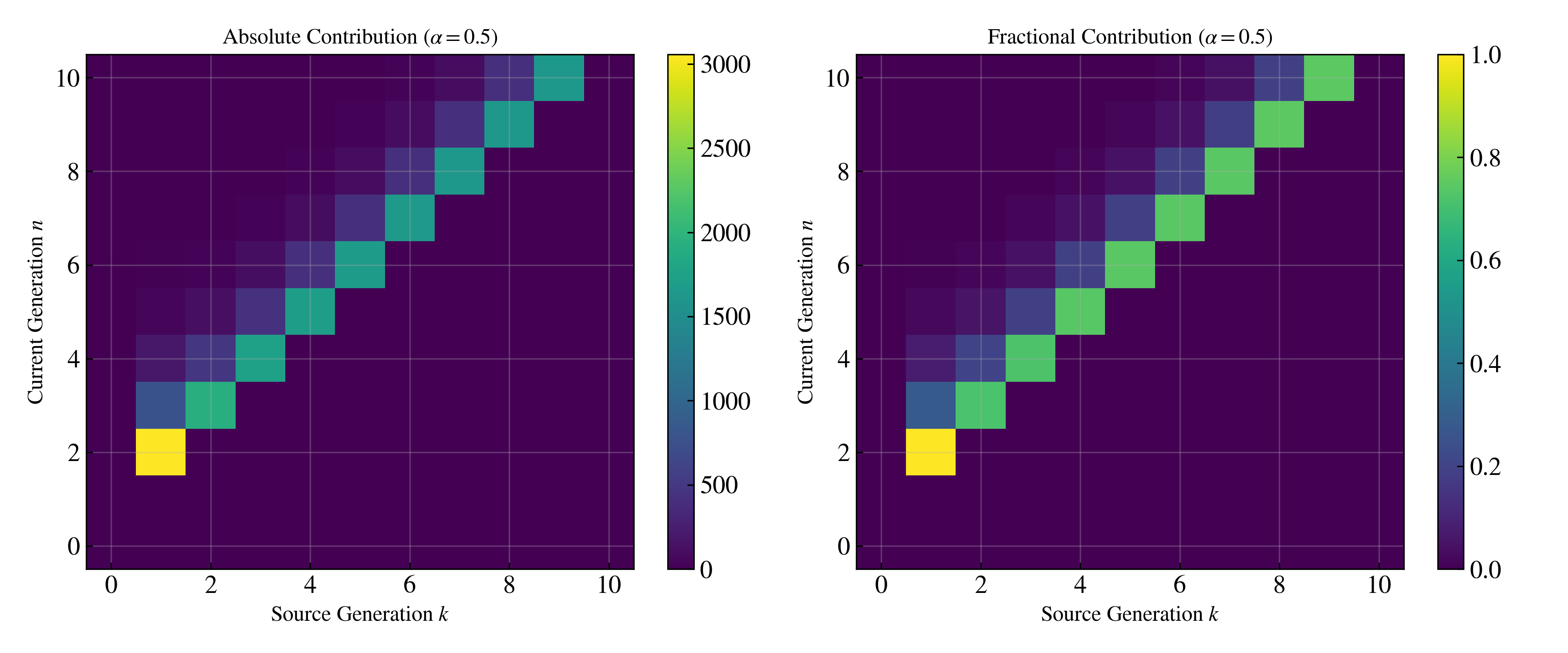}
        \caption{$\alpha = 0.5$}
    \end{subfigure}
    \vspace{0.5em}
    \begin{subfigure}[b]{0.48\textwidth}
        \includegraphics[width=\textwidth]{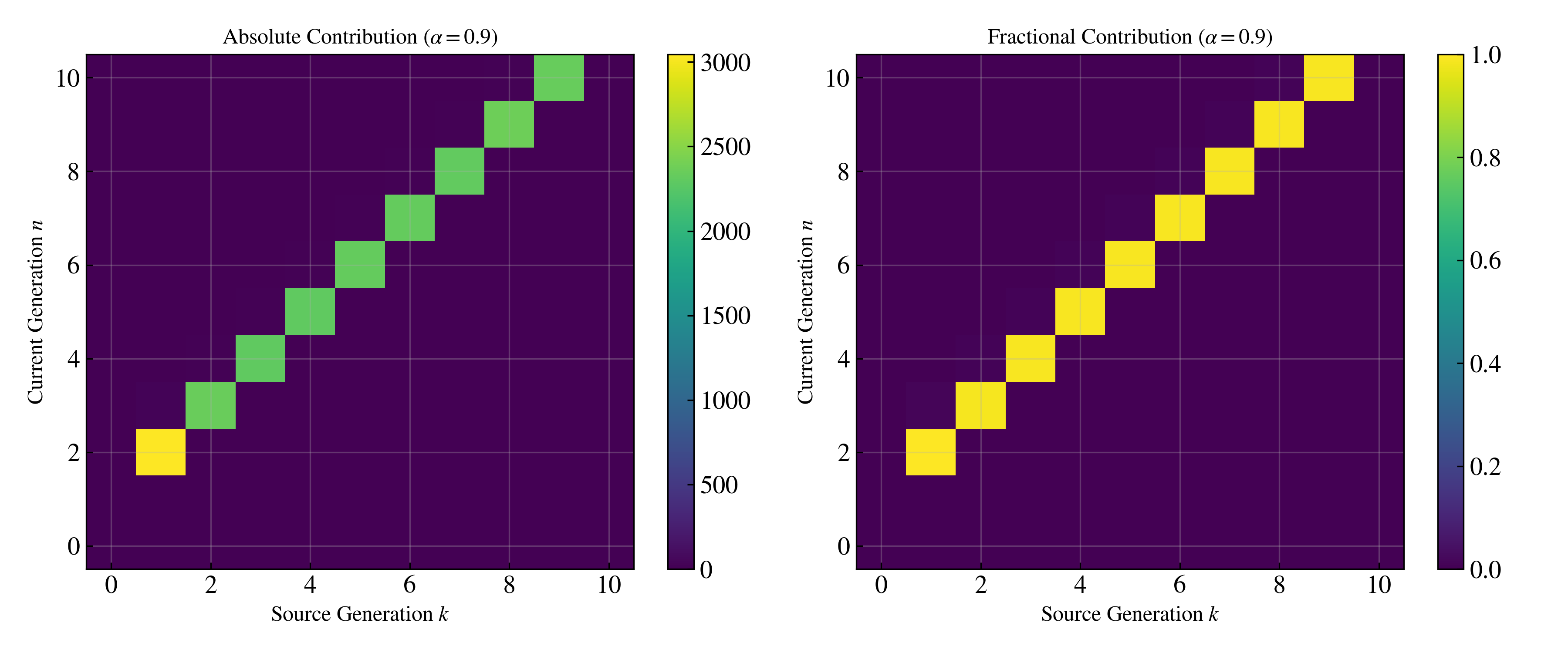}
        \caption{$\alpha = 0.9$}
    \end{subfigure}
    \caption{\textbf{Memory structure of error accumulation on Fashion-MNIST.} 
    Each panel shows absolute (left) and fractional (right) contributions of  generation-$k$ errors to generation-$N$ divergence, computed as $(1-\alpha)^{2(N-1-k)} \varepsilon_{\star,k}^2$. The structural predictions 
    of Theorem~\ref{thm:divergence-decomposition-finite-horizon-main} are clearly validated: 
    \textbf{(a)} At $\alpha = 0.1$, errors persist across the full 10-generation 
    horizon, with early generations (particularly generation 1) contributing substantially even at generation 10---the wide triangular structure indicates long memory and susceptibility to collapse. 
    \textbf{(b)} At $\alpha = 0.5$, intermediate memory decay creates a band approximately 3--4 generations wide, balancing error persistence with forgetting. \textbf{(c)} At $\alpha = 0.9$, the sharp diagonal structure confirms that only the most recent generation contributes meaningfully---past errors are rapidly forgotten, preventing accumulation. These patterns match the GMM results (Figure~\ref{fig:memory_heatmap}) despite the 78$\times$ increase  in dimension, confirming that memory structure is a universal feature of  recursive diffusion training independent of data complexity.}
    \label{fig:Fashion-MNIST-memory}
\end{figure}

\paragraph{Metric computation.}
Computing the quantities from our theoretical framework requires careful adaptation 
to the high-dimensional image setting.

\begin{itemize}[leftmargin=*, itemsep=3pt]
    \item \textbf{Global divergence $D_i = \chi^2(\phat^i \| \pdata)$:} 
    In the CIFAR-10 (resp. Fashion-MNIST) experiments we evaluate distributional drift using a proxy $\chi^2$-divergence computed in a learned feature space rather than pixel space.
    Concretely, for each generation $i$ we draw $M$ synthetic samples $\bx^{(m)}_{\mathrm{gen}}\sim \hat p^{\,i}$ from the EMA-smoothed DDPM sampler and $M$ real samples $\bx^{(m)}_{\mathrm{real}}\sim \pdata$ from the CIFAR-10 (resp. Fashion-MNIST) training set (we use $M=1000$ in our implementation). We pass images through a fixed auxiliary classifier $c(\cdot)$ trained once on CIFAR-10 (resp. Fashion-MNIST), and extract penultimate-layer embeddings $f(\bx)\in\mathbb{R}^d$. Let $\{f^{(m)}_{\mathrm{gen}}\}_{m=1}^M$ and $\{f^{(m)}_{\mathrm{real}}\}_{m=1}^M$ denote generated and real features. We then fit Gaussian densities in feature space,
    $p_f=\mathcal{N}(\bmu_p,\Sigma_p)$ using $\{f^{(m)}_{\mathrm{gen}}\}$ and $q_f=\mathcal{N}(\bmu_q,\Sigma_q)$ using $\{f^{(m)}_{\mathrm{real}}\}$, with empirical mean and covariance (regularized as $\Sigma \leftarrow \Sigma + \epsilon I$ for numerical stability, taking a small $\epsilon$). The divergence proxy is the Monte Carlo estimate of $\chi^2(p_f\|q_f)$ under $q_f$:
    \[
    \chi^2_{\mathrm{feat}}(\phat^i \| \pdata)
    \;:=\;
    \frac{1}{M}\sum_{m=1}^M\left(\frac{p_f(f^{(m)}_{\mathrm{real}})}{q_f(f^{(m)}_{\mathrm{real}})}-1\right)^2 
    \qquad f^{(m)}_{\mathrm{real}}\sim q_f,
    \] 
    where the density ratio is evaluated via Gaussian log-likelihoods using a Cholesky factorization of $\Sigma_p$ and $\Sigma_q$. This metric captures how far the generated distribution has drifted from the real data distribution along discriminative directions encoded by the classifier features.

    \item \textbf{Score error energy $\hat{\varepsilon}_i^2$ and $\varepsilon_{\star,i}^2$:} 
    We subsample 50 timesteps uniformly from $[0, T]$ and accumulate 
    $\|\varepsilon_{\theta_{i+1}}(\bx_t, t) - \varepsilon_{\theta_i}(\bx_t, t)\|^2_2$ 
    along sampling trajectories, where $\varepsilon_{\theta_i}(\bx_t, t)$ denotes the learned $i$-th model according to the $\varepsilon$-prediction mechanism described above. For $\hat{\varepsilon}_i^2$, trajectories are 
    sampled using the new model $\theta_{i+1}$; for $\varepsilon_{\star,i}^2$, 
    trajectories use the previous model $\theta_i$. 
    
    \item \textbf{Observability coefficient $\eta_i$:} Exactly as in the $10$-dimensional Gaussian multivariate mixture model, we run paired reverse trajectories using the previous model's score, recording terminal samples 
    $\bY_{t_0}$ and the discretized error martingale 
    $M_i \approx -\sum_t \be_i(\bY_{t_0}, t) \cdot \Delta B_t$.  A regression network 
    is trained to predict $M_i$ from classifier features of $\bY_{t_0}$, yielding 
    $\hat{\eta}_i = \mathrm{Var}(\hat{f}(\bY_{t_0})) / \mathrm{Var}(M_i)$. 
\end{itemize}

\paragraph{Heatmap.}  Once again, the procedure here is exactly the same as the one described in the $10$-dimensional Gaussian multivariate mixture model. Theorem~\ref{thm:divergence-decomposition-finite-horizon-main} predicts that score errors propagate with 
geometric decay:
\begin{equation}
    \mathrm{Contrib}[i,n] \;\approx\; (1-\alpha)^{2(N-1-i)} \varepsilon_i^2,
\end{equation}
where $\varepsilon_i^2 = \int_0^T \|\hat{\bs}_k - \bs_{q_k}\|^2_2 \, \mathrm{d}t$  is the path-space score error at generation $k$. The factor $(1-\alpha)^{2(N-1-i)}$ captures the mitigation induced by fresh data injection: when $\alpha$ is large, errors decay rapidly; 
when $\alpha$ is small, errors persist across many generations. As illustrated in Figure \ref{fig:Fashion-MNIST-memory} the conclusions of this experiment match the theory, suggesting it also holds in higher dimensions.

\end{document}